\DeclareMathAlphabet{\pazocal}{OMS}{zplm}{m}{n}
\newcommand{\lrp}[1]{\left( #1 \right)}
\newcommand{\sums}[2]{\sum\limits_{#1}^{#2}}
\newcommand{\cl}[1]{\pazocal{#1}}
\newcommand{\T}{^{\top}}
\newcommand{\real}{\mathbb{R}}
\newcommand{\dd}{\mathrm{d}}
\newcommand{\E}{\mathds{E}}
\renewcommand{\vec}[1]{\boldsymbol{#1}}
\let\epsilon\varepsilon
\let\exp\undefined
\DeclareMathOperator{\exp}{exp}
\DeclareMathOperator*{\argmin}{arg\,min}
\DeclareMathOperator*{\argmax}{arg\,max}
\def\SPHERE{\mathbb{S}}
\def\SPHEREUNIFORM{u_{\SPHERE_d}}
\def\TARGETS{\vec z_1, \dots, \vec z_n \in \cl \SPHERE_{d}}
\def\ARGMIN{\argmin\limits_{\TARGETS}}
\def\ARGMAX{\argmax\limits_{\TARGETS}}
\def\LVMFT{\tilde{\cl{L}}_\mathrm{}}
\def\Lalign{{\cl{L}}_\mathrm{LSP}}
\def\Lunif{{\cl{L}}_\mathrm{Unif}}
\def\Lfinal{{\cl{L}}_\mathrm{\method}}
\def\supportSet{\cl{S}}
\def\querySet{\cl{Q}}
\def\dataSet{\cl{X}}
\def\baseSet{\dataSet_{\text{Base}}}
\def\novelSet{\dataSet_{\text{Novel}}}
\def\testSet{\cl{T}}
\def\classes{\cl{C}}
\def\baseClass{\classes_{\text{Base}}}
\def\novelClass{\classes_{\text{Novel}}}
\def\TRUE{\ding{51}}
\def\FALSE{--}
\def\method{noHub\xspace}
\def\methodLongWithMarks{U\underline{n}if\underline{o}rm \underline{H}yperspherical Str\underline{u}cture-preserving Em\underline{b}eddings\xspace}
\def\methodS{noHub-S\xspace}
\def\methodSLongWithMarks{\method with \underline{S}upport labels\xspace}
\def\BEST#1{\textbf{#1}}
\def\SECONDBEST#1{\underline{#1}}
\def\MTCWITHCONF#1#2{#1~({\tiny #2})}
\def\VENUE#1#2#3{{\tiny(\textsc{#1}'#2 #3)}}
\def\OURS{\tiny(\textsc{Ours})}
\definecolor{arrowGreen}{rgb}{0,0.6,0}
\def\HIGHERBETTER{\!{\tiny\color{arrowGreen}\( \uparrow \)}}
\def\LOWERBETTER{\!{\tiny\color{arrowGreen}\( \downarrow \)}}
\def\TableMethodN{None}
\def\TableMethodL{L2~\VENUE{ArXiv}{19}{\cite{wangSimpleShotRevisitingNearestNeighbor2019}}}
\def\TableMethodC{CL2~\VENUE{ArXiv}{19}{\cite{wangSimpleShotRevisitingNearestNeighbor2019}}}
\def\TableMethodZ{ZN~\VENUE{ICCV}{21}{\cite{feiZScoreNormalizationHubness2021}}}
\def\TableMethodR{ReRep~\VENUE{ICML}{21}{\cite{cuiParameterlessTransductiveFeature2021}}}
\def\TableMethodE{EASE~\VENUE{CVPR}{22}{\cite{zhuEASEUnsupervisedDiscriminant2022}}}
\def\TableMethodT{TCPR~\VENUE{NeurIPS}{22}{\cite{xuAlleviatingSampleSelection2022}}}
\def\TableMethodOurs{noHub~\OURS}
\def\TableMethodOursS{noHub-S~\OURS}
\newcommand{\MainTabInput}[1]{
    \bgroup
    \scriptsize
    \centering
    \input{tab/main-results/#1} \\
    \egroup
}
\DeclareRobustCommand\onedot{\futurelet\@let@token\@onedot}
\def\@onedot{\ifx\@let@token.\else.\null\fi\xspace}
\def\eg{\emph{e.g}\onedot}
\def\wrt{w.r.t\onedot} 
\def\etal{\emph{et al}\onedot}
\newcommand{\customparagraph}[2][0.05cm minus 0.05cm]{\vspace{#1}\noindent\textbf{#2.}~}
\newtheorem{proposition}{Proposition}
\newtheorem{definition}{Definition}
\newtheorem{lemma}{Lemma}
\newenvironment{customalgorithm}[1][htpb]{\def\@algocf@post@ruled{\kern\interspacealgoruled\hrule  height\algoheightrule\kern3pt\relax}%
\def\@algocf@capt@ruled{under}%
\setlength\algotitleheightrule{0pt}%
\begin{algorithm}[#1]}
{\end{algorithm}}
\let\tableFontSize\footnotesize
\let\algFontSize\small
\def\theTitle{Hubs and Hyperspheres: Reducing Hubness and Improving Transductive Few-shot Learning with Hyperspherical Embeddings}
\def\asp{~~}
\def\tsp{\hspace{.9ex}}
\def\theAuthor{%
    Daniel J. Trosten%
        \thanks{Equal contributions.}\tsp%
        \thanks{UiT Machine Learning group ({\url{machine-learning.uit.no}}) and Visual Intelligence Centre (\url{visual-intelligence.no}).},%
    \asp Rwiddhi Chakraborty%
        \footnotemark[1]\tsp%
        \footnotemark[2],%
    \asp Sigurd Løkse%
        \footnotemark[2],%
    \asp Kristoffer Knutsen Wickstrøm%
        \footnotemark[2],\\
    \asp Robert Jenssen%
        \footnotemark[2]\tsp%
        \thanks{Norwegian Computing Center.}\tsp
        \thanks{Department of Computer Science, University of Copenhagen.}\tsp%
        \thanks{Pioneer Centre for AI (\url{aicentre.dk}).},%
    \asp Michael C. Kampffmeyer%
        \footnotemark[2]\tsp
        \footnotemark[3]\\
    Department of Physics and Technology,~UiT The Arctic University of Norway\\
    {\tt \small firstname[.middle initial].lastname@uit.no}
}
\def\githubLink{\url{https://github.com/uitml/noHub}}
\def\definitionHypersphericalUniform{
    \begin{definition}[Uniform PDF on the hypersphere.]
        The uniform probability density function (PDF) on the unit hypersphere \( \cl \SPHERE_d = \{ \vec x \in \real^d \mid ||\vec x|| = 1 \} \subset \real^d \) is
        \begin{align}
            \label{eq:sphereUniform}
            \SPHEREUNIFORM(\vec x) = A_d^{-1} \delta(||\vec x|| - 1)
        \end{align}
        where \( A_d = \frac{2 \pi^{d/2}}{\Gamma(d/2)} \) is the surface area of \( \SPHERE_d \), and \( \delta(\cdot) \) is the Dirac delta distribution.
    \end{definition}
}
\def\propositionZeroMean{
    \begin{proposition}
        \label{prop:zeroMean}
        Suppose \( \vec X \) has PDF \( \SPHEREUNIFORM(\vec x) \).
        Then
        \begin{align}
         \E(\vec X) = 0
        \end{align}
    \end{proposition}
}
\def\propositionZeroGrad{
    \begin{proposition}
        \label{prop:zeroGradient}
        Let \( \Pi_{\vec p} \) be the tangent plane of \( \SPHERE_d \) at an arbitrary point \( \vec p \in \SPHERE_d \).
        Then, for any direction \( \vec \theta^* \) in \( \Pi_{\vec p} \) the directional derivative of \( \SPHEREUNIFORM \) along \( \vec \theta^* \) is
        \begin{align}
            \nabla_{\vec \theta^*} \SPHEREUNIFORM = 0
        \end{align}
    \end{proposition}
}
\def\propositionConnectionLE{
    \begin{proposition}[]
        \label{prop:conLE}
        Let \( W_{ij} = \frac{1}{2}\kappa p_{ij} \), where \( \sums{i,j}{} p_{ij} = 1 \), and let \( \TARGETS \).
        Then we have
        \begin{align}
            \Lalign = \sum_{i,j} \| \vec z_i - \vec z_j \|^2 W_{ij} - \kappa.
        \end{align}
    \end{proposition}
}
\def\propositionMaxEntropy{
    \begin{proposition}[Minimizing \( \Lunif \) maximizes entropy]
        \label{prop:maxEntropy}
        Let \( H_2(\cdot) \) be the 2-order R\'enyi entropy,
        estimated with a kernel density estimator using a Gaussian kernel.
        Then
        \begin{align}
            \ARGMIN \Lunif = \ARGMAX H_2(\vec z_1, \dots, \vec z_n).
        \end{align}
    \end{proposition}
}
\def\definitionNormalizedCountingMeasure{
    \begin{definition}[Normalized counting measure]
        The normalized counting measure associated with a set \( B \) on \( A \) is
        \begin{align}
            \nu_B(A) = \frac{| B \cap A|}{|B|}
        \end{align}
    \end{definition}
}
\def\definitionSurfaceAreaMeasure{
    \begin{definition}[Normalized surface area measure on \( \SPHERE_d \)]
        The normalized surface area measure on the hyperspehere \( \SPHERE_d \subset \real^d \), of a subset \( S' \subset \SPHERE_d \) is
        \begin{align}
            \sigma_d (S') = \frac{\int_{S'}\dd S}{\int_{\SPHERE_d} \dd S} = A_d^{-1} \int_{S'} \dd S
        \end{align}
        where \( A_d \) is defined as in Eq.~\eqref{eq:sphereUniform}, and \(\int \dd S\) denotes the surface integral on \( \SPHERE_d \).
    \end{definition}
}
\def\definitionWeakStarConvergence{
    \begin{definition}[Weak\(^*\) convergence of measures~\cite{wangUnderstandingContrastiveRepresentation2020}]
        A sequence of Borel measures \( \{ \mu_n \}_{n=1}^{\infty} \) in \( \real^d \) converges weak\(^*\) to a Borel measure \(\mu\), if for all continuous functions \(f: \real^d \to \real \),
        \begin{align}
            \lim\limits_{n\to\infty} \int f(x) \dd \mu_n(x) = \int f(x) \dd \mu(x)
        \end{align}
    \end{definition}
}
\def\propositionLunifMinimizer{
    \begin{proposition}[Minimizer of \( \Lunif \)]
        \label{prop:minLunif}
        For each \( n > 0 \), the \( n \) point minimizer of \( \Lunif \) is
        \begin{align}
            \vec z_1^{\star}, \dots, \vec z_n^{\star} = \ARGMIN \Lunif.
        \end{align}
        Then \( \nu_{\{ \vec z_1^{\star}, \dots, \vec z_n^{\star} \}} \) converge weak\( ^* \) to \( \sigma_d \) as \( n \to \infty \).
    \end{proposition}
}
\begin{document}
    \title{\theTitle}
    \author{\theAuthor}
    \maketitle

    \begin{abstract}
        Distance-based classification is frequently used in transductive few-shot learning (FSL).
However, due to the high-dimensionality of image representations, FSL classifiers are prone to suffer from the hubness problem, where a few points (hubs) occur frequently in multiple nearest neighbour lists of other points. 
Hubness negatively impacts distance-based classification when hubs from one class appear often among the nearest neighbors of points from another class, degrading the classifier's performance. 
To address the hubness problem in FSL, we first prove that hubness can be eliminated by distributing representations uniformly on the hypersphere. 
We then propose two new approaches to embed representations on the hypersphere, which we prove optimize a tradeoff between uniformity and local similarity preservation -- reducing hubness while retaining class structure.
Our experiments show that the proposed methods reduce hubness, and significantly improves transductive FSL accuracy for a wide range of classifiers\footnote{Code available at \githubLink.}.

    \end{abstract}

    \section{Introduction}
        While supervised deep learning has made a significant impact in areas where large amounts of labeled data are available~\cite{HeResnet2016,dosovitskiy2021anViT}, few-shot learning (FSL) has emerged as a promising alternative when labeled data is limited~\cite{kim2019edge, wangSimpleShotRevisitingNearestNeighbor2019,zikoLaplacianRegularizedFewShot2020,boudiafTransductiveInformationMaximization2020,veilleuxRealisticEvaluationTransductive2021,qiTransductiveFewShotClassification2021,lazarouIterativeLabelCleaning2021,wangRoleGlobalLabels2021,zhuEASEUnsupervisedDiscriminant2022,taoPoweringFinetuningFewShot2022,huPushingLimitsSimple2022}.
FSL aims to design classifiers that can discriminate between novel classes based on a few labeled instances, significantly reducing the cost of the labeling procedure.

In transductive FSL, one assumes access to the entire query set during evaluation.
This allows transductive FSL classifiers to learn representations from a larger number of samples, resulting in better performing classifiers.
However, many of these methods base their predictions on distances to prototypes for the novel classes~\cite{zikoLaplacianRegularizedFewShot2020,boudiafTransductiveInformationMaximization2020,veilleuxRealisticEvaluationTransductive2021,qiTransductiveFewShotClassification2021,lazarouIterativeLabelCleaning2021,zhuEASEUnsupervisedDiscriminant2022}.
This makes these methods susceptible to the hubness problem~\cite{shigetoRidgeRegressionHubness2015,suzukiCenteringSimilarityMeasures2013,radovanovicHubsSpacePopular2010, haraLocalizedCenteringReducing2015}, where certain exemplar points (hubs) appear among the nearest neighbours of many other points.
If a support sample is a hub, many query samples will be assigned to it regardless of their true label, resulting in low accuracy.
If more training data is available, this effect can be reduced by increasing the number of labeled samples in the classification rule -- but this is impossible in FSL.

\begin{figure}
    \centering
    \includegraphics[width=0.85\columnwidth]{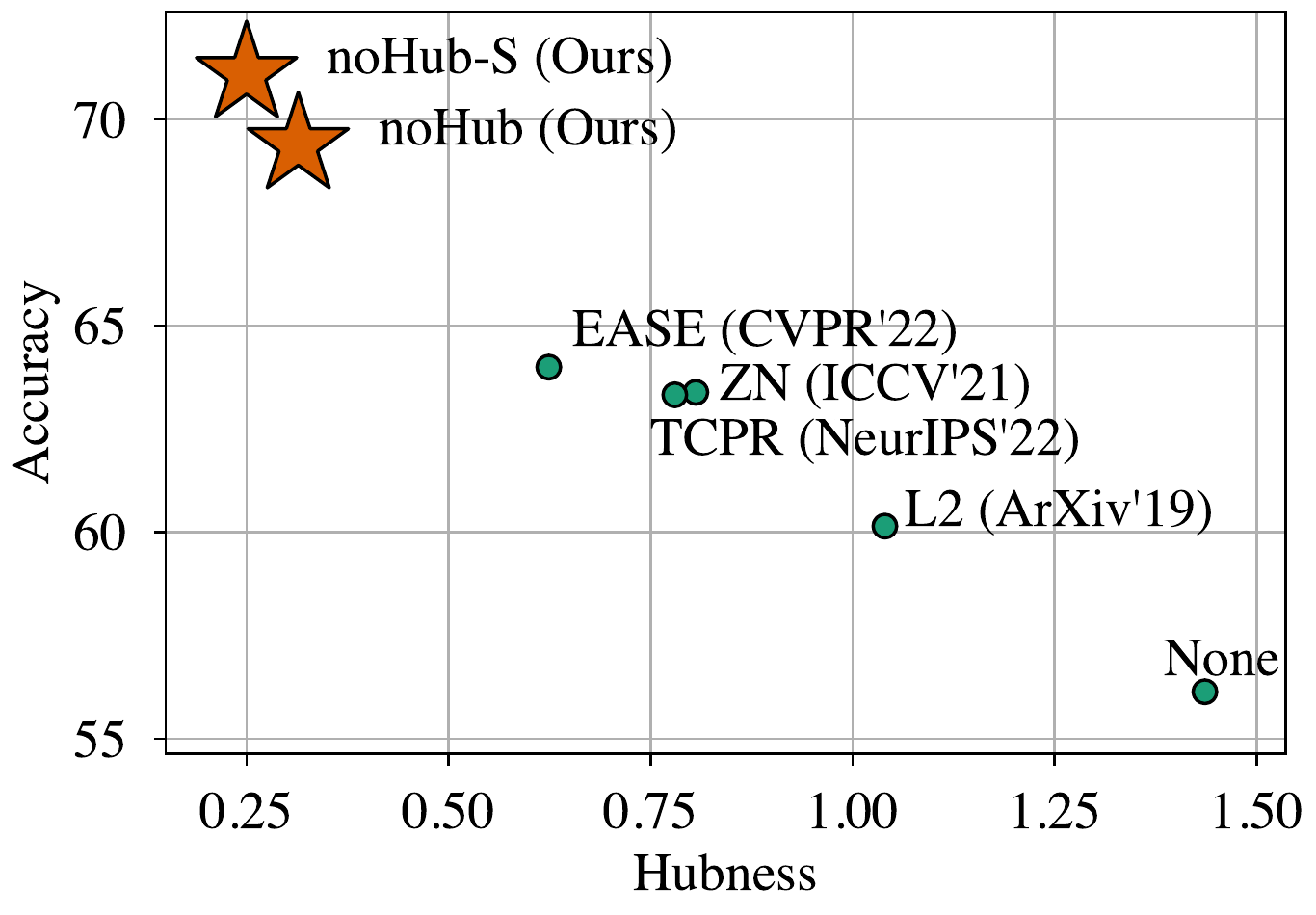}
    \vspace{-.3cm}
    \caption{Few-shot accuracy increases when hubness decreases.
    The figure shows the 1-shot accuracy when classifying different embeddings with SimpleShot~\cite{wangSimpleShotRevisitingNearestNeighbor2019} on mini-ImageNet~\cite{vinyalsMatchingNetworksOne2016}.}
    \label{fig:motivatingPlot}
    \vspace{-0.4cm}
\end{figure}

Several approaches have recently been proposed to embed samples in a space where the FSL classifier's performance is improved~\cite{wangSimpleShotRevisitingNearestNeighbor2019,cuiParameterlessTransductiveFeature2021,feiZScoreNormalizationHubness2021,lePOODLEImprovingFewshot2021,zhuEASEUnsupervisedDiscriminant2022,xuAlleviatingSampleSelection2022,chikontweCADCoAdaptingDiscriminative2022}.
However, only one of these directly addresses the hubness problem.
Fei \etal~\cite{feiZScoreNormalizationHubness2021} show that embedding representations on a hypersphere with zero mean reduces hubness.
They advocate the use of Z-score normalization (ZN) along the feature axis of each representation, and show empirically that ZN can reduce hubness in FSL.
However, ZN does not guarantee a data mean of zero, meaning that hubness can still occur after ZN.

In this paper we propose a principled approach to embed representations in FSL, which both reduces hubness and improves classification performance.
First, we prove that hubness can be eliminated by embedding representations uniformly on the hypersphere.
However, distributing representations uniformly on the hypersphere without any additional constraints will likely break the class structure which is present in the representation space -- hurting the performance of the downstream classifier.
Thus, in order to both reduce hubness and preserve the class structure in the representation space, we propose two new embedding methods for FSL.
Our methods, \methodLongWithMarks (\method) and \methodSLongWithMarks (\methodS), leverage a decomposition of the Kullback-Leibler divergence between representation and embedding similarities, to optimize a tradeoff between Local Similarity Preservation (LSP) and uniformity on the hypersphere.
The latter method, \methodS, also leverages label information from the support samples to further increase the class separability in the embedding space.

Figure~\ref{fig:motivatingPlot} illustrates the correspondence between hubness and accuracy in FSL.
Our methods have both the \emph{least hubness} and \emph{highest accuracy} among several recent embedding techniques for FSL.

Our contributions are summarized as follows.
\begin{itemize}
    \item We prove that the uniform distribution on the hypersphere has zero hubness and that embedding points uniformly on the hypersphere thus alleviates the hubness problem in distance-based classification for transductive FSL.
    \item We propose \method and \methodS to embed representations on the hypersphere, and prove that these methods optimize a tradeoff between LSP and uniformity. The resulting embeddings are therefore approximately uniform, while simultaneously preserving the class structure in the embedding space.
    \item Extensive experimental results demonstrate that \method and \methodS outperform current state-of-the-art embedding approaches, boosting the performance of a wide range of transductive FSL classifiers, for multiple datasets and feature extractors.
\end{itemize}

    \section{Related Work}
        \label{sec:hubnessFSL}
        \customparagraph{The hubness problem}
    The hubness problem refers to the emergence of \emph{hubs} in collections of points in high-dimensional vector spaces~\cite{radovanovicHubsSpacePopular2010}.
    Hubs are points that appear among the nearest neighbors of many other points, and are therefore likely to have a significant influence on \eg nearest neighbor-based classification.
    Radovanovic \etal~\cite{radovanovicHubsSpacePopular2010} showed that points closer to the expected data mean are more likely be among the nearest neighbors of other points, indicating that these points are more likely to be hubs.
    Hubness can also be seen as a result of large density gradients~\cite{haraFlatteningDensityGradient2016}, as points in high-density areas are more likely to be hubs.
    The hubness problem is thus an intrinsic property of data distributions in high-dimensional vector spaces, and not an artifact occurring in particular datasets.
    It is therefore important to take the hubness into account when designing classification systems in high-dimensional vector spaces.

\customparagraph{Hubness in FSL}
    Many recent methods in FSL rely on distance-based classification in high-dimensional representation spaces~\cite{wangSimpleShotRevisitingNearestNeighbor2019,zikoLaplacianRegularizedFewShot2020,boudiafTransductiveInformationMaximization2020,NhanSEN, zhang2021iept, ye2018, allen2019},
    making them vulnerable to the hubness problem.
    Fei \etal~\cite{feiZScoreNormalizationHubness2021} show that hyperspherical representations with zero mean reduce hubness.
    Motivated by this insight, they suggest that representations should have zero mean and unit standard deviation (ZN) \emph{along the feature dimension}.
    This effectively projects samples onto the hyperplane orthogonal to the vector with all elements \( = 1 \), and pushes them to the hypersphere with radius \( \sqrt{d} \), where \( d \) is the dimensionality of the representation space.
    Although ZN is empirically shown to reduce hubness, it does not guarantee that the data mean is zero.
    The normalized representations can therefore still suffer from hubness, potentially decreasing FSL performance.

\customparagraph{Embeddings in FSL}
    FSL classifiers often operate on embeddings of representations instead of the representations themselves, to improve the classifier's ability to generalize to novel classes~\cite{wangSimpleShotRevisitingNearestNeighbor2019,cuiParameterlessTransductiveFeature2021,zhuEASEUnsupervisedDiscriminant2022,xuAlleviatingSampleSelection2022}.
    Earlier works use the L2 normalization and Centered L2 normalization to embed representations on the hypersphere~\cite{wangSimpleShotRevisitingNearestNeighbor2019}.
    Among more recent embedding techniques, ReRep~\cite{cuiParameterlessTransductiveFeature2021} performs a two-step fusing operation on both the support and query features with an attention mechanism.
    EASE~\cite{zhuEASEUnsupervisedDiscriminant2022} combines both support and query samples into a single sample set, and jointly learns a similarity and dissimilarity matrix, encouraging similar features to be embedded closer, and dissimilar features to be embedded far away.
    TCPR~\cite{xuAlleviatingSampleSelection2022} computes the top-k neighbours of each test sample from the base data, computes the centroid, and removes the feature components in the direction of the centroid.
    Although these methods generally lead to a reduction in hubness and an increase in performance (see Figure~\ref{fig:motivatingPlot}), they are not explicitly designed to address the hubness problem resulting in suboptimal hubness reduction and performance.
    In contrast, our proposed \method and \methodS directly leverage our theoretic insights to target the root of the hubness problem.

\customparagraph{Hyperspherical uniformity}
    Benefits of uniform hyperspherical representations have previously been studied for contrastive self-supervised learning (SSL)~\cite{wangUnderstandingContrastiveRepresentation2020}.
    Our work differs from~\cite{wangUnderstandingContrastiveRepresentation2020} on several key points.
    First, we study a non-parametric embedding of support and query samples for FSL, which is a fundamentally different task from contrastive SSL.
    Second, the contrastive loss studied in~\cite{wangUnderstandingContrastiveRepresentation2020} is a combination of different cross-entropies, making it different from our KL-loss.
    Finally, we introduce a tradeoff-parameter between uniformity and LSP, and connect our theoretical results to hubness and Laplacian Eigenmaps.

    \section{Hyperspherical Uniform Eliminates Hubness}
        \label{sec:hypersphericalHubness}
        
We will now show that hubness can be eliminated completely by embedding representations \emph{uniformly} on the hypersphere\footnote{Our results assume hyperspheres with unit radius, but can easily be extended to hyperspheres with arbitrary radii.}.

\definitionHypersphericalUniform
We then have the following propositions\footnote{The proofs for all propositions are included in the supplementary.} for random vectors with this PDF.

\propositionZeroMean
\propositionZeroGrad

These two propositions show that the hyperspherical uniform has
\begin{enumerate*}[label=(\roman*)]
    \item zero mean;
    and \item zero density gradient along all directions tangent to the hypersphere's surface, at all points on the hypersphere.
\end{enumerate*}
The hyperspherical uniform thus provably eliminates hubness, both in the sense of having a zero data mean, and having zero density gradient everywhere.
We note that the latter property is un-attainable in Euclidean space, as it is impossible to define a uniform distribution over the whole space.
It is therefore necessary to embed points on a non-Euclidean sub-manifold in order to eliminate hubness.

\begin{figure}
    \centering
    \includegraphics[width=0.85\columnwidth]{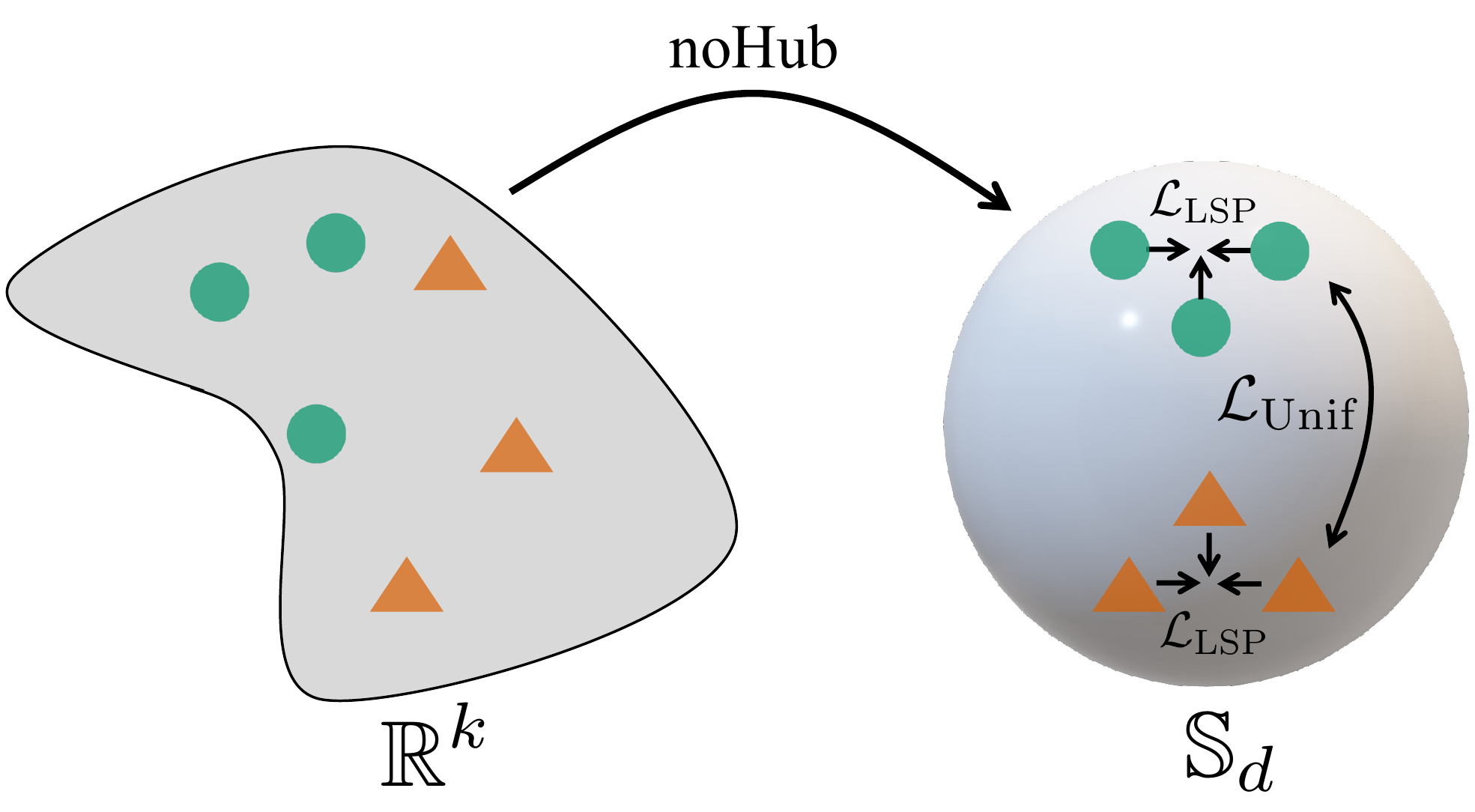}
    \caption{Illustration of the \method embedding. Given representations \( \in \real^k \), $\Lalign$ preserves local similarities. $\Lunif$ simultaneously encourages uniformity in the embedding space $\SPHERE_d$. This feature embedding framework helps reduce hubness while improving classification performance.}
    \label{fig:noHub}
\end{figure}
        
    \section{Method}
        \label{sec:method}
        In the preceding section, we proved that uniform embeddings on the hypersphere eliminate hubness. 
However, naïvely placing points uniformly on the hypersphere does not incorporate the inherent class structure in the data, leading to poor FSL performance.
Thus, there exists a tradeoff between uniformity on the hypersphere and the preservation of local similarities.
To address this tradeoff, we introduce two novel embedding approaches for FSL, namely \method and \methodS.
\method (Sec.~\ref{subsec:nohub}) incorporates a novel loss function for embeddings on the hypersphere, while \methodS (Sec.~\ref{subsec:nohubs}), guides \method with additional label information, which should act as a supervisory signal for a class-aware embedding that leads to improved classification performance.
Figure~\ref{fig:noHub} provides an overview of the proposed \method method.
We also note that, since our approach generates embeddings, they are compatible with most transductive FSL classifier.

\customparagraph{Few-shot Preliminaries}
    Assume we have a large labeled \emph{base} dataset \( \baseSet = \{ (\vec x_i, y_i) \mid y_i \in \baseClass;~ i = 1, \ldots, n_{\text{Base}} \}\), where \( \vec x_i \) and \( y_i \) denotes the raw features and labels, respectively.
    Let \( \baseClass \) denote the set of classes for the base dataset.
    In the few--shot scenario, we assume that we are given another labeled dataset \( \novelSet = \{ (\vec x_i, y_i) \mid y_i \in \novelClass;~ i = 1, \ldots, n_{\text{Novel}} \} \) from \emph{novel}, previously unseen classes \( \novelClass \), satisfying \(\baseClass \cap \novelClass = \emptyset \).
    In addition, we have a test set \( \testSet,\; \testSet \cap \novelSet = \emptyset \), also from \( \novelClass \).

    In a \( K \)--way \( N_S\)--shot FSL problem, we create randomly sampled \emph{tasks} (or episodes), with data from \( K \) randomly chosen novel classes.
    Each task consists of a \emph{support} set \( \supportSet \subset \novelSet \) and a \emph{query} set \( \querySet \subset \testSet \).
    The support set contains \( |\supportSet| = N_S \cdot K \) random examples (\( N_S \) random examples from each of the \( K \) classes).
    The query set contains \( |\querySet|= N_Q \cdot K \) random examples, sampled from the same \( K \) classes.
    The goal of FSL is then to predict the class of samples \( \vec x \in \querySet \) by exploiting the labeled support set \( \supportSet \), using a model trained on the base classes \( \baseClass \).
    We assume a fixed feature extractor, trained on the base classes, which maps the raw input data to the representations \( \vec x_i \).

\subsection{\method: \methodLongWithMarks}
    \label{subsec:nohub}
    We design an embedding method that encourages uniformity on the hypersphere, and simultaneously preserves local similarity structure.
    Given the support and query representations $\vec x_1, \dots, \vec x_n \in \real^k$, \(n = K(N_S + N_Q)\) , we wish to find suitable embeddings $\vec z_1, \dots, \vec z_n \in \SPHERE_d$, where local similarities are preserved.
    For both representations and embeddings, we quantify similarities using a softmax over pairwise cosine similarities
    \begin{align}
        \label{eq:pij}
        p_{ij} = \frac{p_{i|j} + p_{j | i}}{2}, \quad p_{i|j} = \frac{\exp(\kappa_i \frac{\vec x_i\T \vec x_j}{||\vec x_i||\cdot||\vec x_j||} )}{\sums{l, m}{}\exp(\kappa_i \frac{\vec x_l\T \vec x_m}{||\vec x_l||\cdot||\vec x_m||})}
    \end{align}
    and
    \begin{align}
        \label{eq:qij}
        q_{ij} = \frac{\exp(\kappa \vec z_i\T \vec z_j)}{\sums{l, m}{} \exp(\kappa \vec z_l\T \vec z_m)},
    \end{align}
    where $\kappa_i$ is chosen such that the effective number of neighbours of $\vec x_i$ equals a pre-defined perplexity\footnote{Details on the computation of the \( \kappa_i \) are provided in the supplementary.}.
    As in~\cite{Maaten,vmfsne}, local similarity preservation can now be achieved by minimizing the Kullback-Leibler (KL) divergence between the \( p_{ij} \) and the \( q_{ij}\)
    \begin{align}
        \label{eq:kl}
       KL(P || Q) = \sums{i, j}{} p_{ij} \log \frac{p_{ij}}{q_{ij}}.
    \end{align}
    However, instead of directly minimizing \( KL(P || Q) \), we find that the minimization problem is equivalent to minimizing the sum of two loss functions\footnote{Intermediate steps are provided in the supplementary.}
    \begin{align}
        \ARGMIN KL(P||Q) = \ARGMIN \Lalign + \Lunif
    \end{align}
    where
    \begin{align}
        & \Lalign = - \kappa \sums{i, j}{} p_{ij} \vec z_i\T \vec z_j, \\
        & \Lunif = \log \sums{l, m}{} \exp(\kappa \vec z_l\T \vec z_m).
    \end{align}

    \begin{customalgorithm}
        \algFontSize
        \DontPrintSemicolon %
\KwIn{Features $\in \real^k$, $\{\vec x_1, \dots, \vec x_n\}$;
perplexity, $P$; number of iterations, \( T \) ; learning rate, $\eta$.}
\KwOut{Embeddings $\in \SPHERE_d$, $\{\vec z_1, \dots, \vec z_n\}$}

Compute $p_{ij}$ from Eq~\eqref{eq:pij}\;
Initialize solution $\vec Z^0 = \{\vec z_1, \dots, \vec z_n\}$ with PCA\;
\For{$i \gets 1$ \textbf{to} $T$}{
  Compute $q_{ij}$ from Eq.~\eqref{eq:qij}\;
  Compute gradients $\frac{\dd\Lfinal}{\dd \vec Z}$, using loss from Eq.~\eqref{eq:Lfinal}\;
  Update $\vec Z^t$ using the ADAM optimizer with learning rate $\eta$~\cite{adam}\;
  Re-normalize elements of $\vec Z^t$ using \( L_2 \) normalization\;
}
\Return{$\vec Z^T$}\;
        \caption{\method algorithm for embeddings on the hypersphere}
        \label{alg:nohub}
    \end{customalgorithm}

    In Sec.~\ref{sec:theoreticalResults} we provide a thorough theoretical analysis of these losses, and how they relate to LSP and uniformity on the hypersphere.
    Essentially, \( \Lalign \) is responsible for the local similarity preservation by ensuring that the embedding similarities (\( \vec z_i\T \vec z_j \)) are high whenever the representation similarities (\(p_{ij}\)) are high. \( \Lunif \) on the other hand, can be interpreted as a negative entropy on \( \SPHERE_d \), and is thus minimized when the embeddings are uniformly distributed on \( \SPHERE_d \).
    This is discussed in more detail in Sec.~\ref{sec:theoreticalResults}.

    Based on the decomposition of the KL divergence, and the subsequent interpretation of the two terms, we formulate the loss in \method as the following tradeoff between LSP and uniformity
    \begin{equation}
        \label{eq:Lfinal}
        \Lfinal = \alpha\Lalign + (1-\alpha) \Lunif
    \end{equation}
    where $\alpha$ is a weight parameter quantifying the tradeoff.
    \( \Lfinal \) can then be optimized directly with gradient descent.
    The entire procedure is outlined in Algorithm~\ref{alg:nohub}.

\subsection{\methodS: \methodSLongWithMarks}
    \label{subsec:nohubs}
    In order to strengthen the class structure in the embedding space, we modify \(\Lalign \) and \(\Lunif\) by exploiting the additional information provided by the support labels.
    For \( \Lalign \), we change the similarity function in $p_{ij}$ such that
    \begin{align}
        \label{eq:modpij1}
        p_{i|j} = \frac{\exp(\kappa_i s_{\vec x}(\vec x_i, \vec x_j))}{\sums{l, m}{}\exp(\kappa_i s_{\vec x}(\vec x_l, \vec x_m))}
    \end{align}
    where
    \begin{align}
        \label{eq:modpij2}
        s_{\vec x}(\vec x_i, \vec x_j) = \begin{cases}
            1 &\text{if $\vec x_i, \vec x_j \in \supportSet$, and $y_i = y_j$}\\
            -1 &\text{if $\vec x_i, \vec x_j \in \supportSet$, and $y_i \neq y_j$}\\
            $$\vec x_i\T \vec x_j$$ &\text{otherwise}
        \end{cases}.
    \end{align}
    With this, we encourage embeddings for support samples in the \emph{same class} to be maximally similar, and support samples in \emph{different classes} to be maximally dissimilar.
    Similarly, for \( \Lunif \)
    \begin{align}
        \Lunif = \log \sums{l, m}{} \exp(\kappa s_{\vec z}(\vec z_i, \vec z_j))
    \end{align}
    where
    \begin{align}
        \label{eq:moduni}
        s_{\vec z}(\vec z_i, \vec z_j) = \begin{cases}
            -\infty, &\text{if $\vec z_i, \vec z_j \in \supportSet$, and $y_i = y_j$}\\
            \varepsilon ~ \vec z_i\T \vec z_j, &\text{if $\vec z_i, \vec z_j \in \supportSet$, and $y_i \neq y_j$}\\
            \vec z_i\T, \vec z_j &\text{otherwise}
        \end{cases}
    \end{align}
    where \( \varepsilon \) is a hyperparameter.
    This puts more emphasis on between-class uniformity by weighting the similarity higher for embeddings belonging to different classes (\( \varepsilon > 1 \)), and ignoring the similarity between embeddings belonging to the same class\footnote{Although any constant value would achieve the same result, we set the similarity to \(-\infty\) in this case to remove the contribution to the final loss.}.
    The final loss function is the same as Eq.~\eqref{eq:Lfinal}, but with the additional label-informed similarities in Eqs.~\eqref{eq:modpij1}--\eqref{eq:moduni}.

    \section{Theoretical Results}
        \label{sec:theoreticalResults}
        In this section we provide a theoretical analysis of \( \Lalign \) and \( \Lunif \).
Based on our analysis, we interpret these losses with regards to the Laplacian Eigenmaps algorithm and R\'enyi entropy, respectively.

\propositionConnectionLE
\propositionMaxEntropy
\definitionNormalizedCountingMeasure
\definitionSurfaceAreaMeasure
\definitionWeakStarConvergence
\propositionLunifMinimizer

\customparagraph{Interpretation of Proposition~\ref{prop:conLE}--\ref{prop:minLunif}}
    Proposition~\ref{prop:conLE} states an alternative formulation of \( \Lalign \), under the hyperspherical assumption.
    We recognize this formulation as the loss function in Laplacian Eigenmaps~\cite{belkinLaplacianEigenmaps2003}, which is known to produce \emph{local similarity-preserving} embeddings from graph data.
    When unconstrained, this loss has a trivial solution where the embeddings for all representations are equal.
    This is avoided in our case since \( \Lfinal \) (Eq.~\eqref{eq:Lfinal}) can be interpreted as the Lagrangian of minimizing \( \Lalign \) subject to a specified level of \emph{entropy}, by Proposition~\ref{prop:maxEntropy}.
    
    Finally, Proposition~\ref{prop:minLunif} states that the normalized counting measure associated with the set of points that minimize \( \Lunif \), converges to the normalized surface area measure on the sphere.
    Since \( \SPHEREUNIFORM \) is the density function associated with this measure, the points that minimize \( \Lunif\) will tend to be uniform on the sphere.
    Consequently, minimizing \( \Lalign \) also minimizes hubness, by Propositions~\ref{prop:zeroMean} and~\ref{prop:zeroGradient}.

    \section{Experiments}
        \subsection{Setup}
    \begin{table*}
    \setlength{\tabcolsep}{3.5mm}
        \centering
        {\tableFontSize \begin{tabular}{rccccccc} \toprule
    && \multicolumn{2}{c}{mini} & \multicolumn{2}{c}{tiered} & \multicolumn{2}{c}{CUB} \\
    Embedding & Feature Extractor & 1-shot \HIGHERBETTER & 5-shot \HIGHERBETTER & 1-shot \HIGHERBETTER & 5-shot \HIGHERBETTER & 1-shot \HIGHERBETTER & 5-shot \HIGHERBETTER \\ \midrule %
    \TableMethodN & ResNet-18 & \MTCWITHCONF{20.0}{0.0}\( ^* \)                & \MTCWITHCONF{20.0}{0.0}\( ^* \)                & \MTCWITHCONF{20.0}{0.0}\( ^* \)                & \MTCWITHCONF{20.0}{0.0}\( ^* \)                & \MTCWITHCONF{20.0}{0.0}\( ^* \) & \MTCWITHCONF{20.0}{0.0}\( ^* \) \\
    \TableMethodL & ResNet-18 & \MTCWITHCONF{73.77}{0.24}              & \MTCWITHCONF{83.14}{0.14}              & \MTCWITHCONF{80.46}{0.26}              & \MTCWITHCONF{87.04}{0.16}              & \MTCWITHCONF{83.1}{0.23} & \MTCWITHCONF{89.48}{0.12} \\
    \TableMethodC & ResNet-18 & \MTCWITHCONF{75.56}{0.26}              & \MTCWITHCONF{84.04}{0.15}              & \MTCWITHCONF{82.1}{0.26}               & \MTCWITHCONF{87.9}{0.16}               & \MTCWITHCONF{84.35}{0.24} & \MTCWITHCONF{90.14}{0.12} \\
    \TableMethodZ & ResNet-18 & \MTCWITHCONF{20.0}{0.0}\( ^* \)                & \MTCWITHCONF{20.0}{0.0}\( ^* \)                & \MTCWITHCONF{20.0}{0.0}\( ^* \)                & \MTCWITHCONF{20.0}{0.0}\( ^* \)                & \MTCWITHCONF{20.0}{0.0}\( ^* \) & \MTCWITHCONF{20.0}{0.0}\( ^* \) \\
    \TableMethodR & ResNet-18 & \MTCWITHCONF{20.0}{0.0}\( ^* \)                & \MTCWITHCONF{20.0}{0.0}\( ^* \)                & \MTCWITHCONF{20.0}{0.0}\( ^* \)                & \MTCWITHCONF{20.0}{0.0}\( ^* \)                & \MTCWITHCONF{20.0}{0.0}\( ^* \) & \MTCWITHCONF{20.0}{0.0}\( ^* \) \\
    \TableMethodE & ResNet-18 & \MTCWITHCONF{76.05}{0.27}              & \SECONDBEST{\MTCWITHCONF{84.61}{0.15}} & \MTCWITHCONF{82.57}{0.27}              & \SECONDBEST{\MTCWITHCONF{88.33}{0.16}} & \MTCWITHCONF{85.24}{0.24} & \MTCWITHCONF{90.42}{0.12} \\
    \TableMethodT & ResNet-18 & \MTCWITHCONF{75.99}{0.26}              & \MTCWITHCONF{84.39}{0.15}              & \MTCWITHCONF{82.65}{0.26}              & \MTCWITHCONF{88.26}{0.16}              & \MTCWITHCONF{85.34}{0.23} & \SECONDBEST{\MTCWITHCONF{90.5}{0.11}} \\
    \TableMethodOurs & ResNet-18 & \SECONDBEST{\MTCWITHCONF{76.65}{0.28}} & \MTCWITHCONF{84.05}{0.16}              & \SECONDBEST{\MTCWITHCONF{82.94}{0.27}} & \MTCWITHCONF{87.87}{0.17}              & \BEST{\MTCWITHCONF{85.88}{0.24}} & \MTCWITHCONF{90.34}{0.12} \\
    \TableMethodOursS & ResNet-18 & \BEST{\MTCWITHCONF{76.68}{0.28}}       & \BEST{\MTCWITHCONF{84.67}{0.15}}       & \BEST{\MTCWITHCONF{83.09}{0.27}}       & \BEST{\MTCWITHCONF{88.43}{0.16}}       & \SECONDBEST{\MTCWITHCONF{85.81}{0.24}} & \BEST{\MTCWITHCONF{90.52}{0.12}} \\
    \midrule
    \TableMethodN & WideRes28-10 & \MTCWITHCONF{45.69}{0.31}              & \MTCWITHCONF{58.82}{0.31}              & \MTCWITHCONF{75.29}{0.28}             & \MTCWITHCONF{82.56}{0.22}               & \MTCWITHCONF{61.36}{0.55} & \MTCWITHCONF{82.22}{0.37} \\
    \TableMethodL & WideRes28-10 & \MTCWITHCONF{80.2}{0.23}               & \MTCWITHCONF{87.11}{0.13}              & \MTCWITHCONF{80.89}{0.26}             & \MTCWITHCONF{87.34}{0.15}               & \MTCWITHCONF{91.98}{0.18} & \MTCWITHCONF{94.15}{0.1} \\
    \TableMethodC & WideRes28-10 & \MTCWITHCONF{75.23}{0.27}              & \MTCWITHCONF{83.99}{0.16}              & \MTCWITHCONF{79.59}{0.27}             & \MTCWITHCONF{86.71}{0.16}               & \MTCWITHCONF{92.17}{0.18} & \MTCWITHCONF{94.48}{0.09} \\
    \TableMethodZ & WideRes28-10 & \MTCWITHCONF{20.0}{0.0}\( ^* \)                & \MTCWITHCONF{20.0}{0.0}\( ^* \)                & \MTCWITHCONF{20.0}{0.0}\( ^* \)               & \MTCWITHCONF{20.0}{0.0}\( ^* \)                 & \MTCWITHCONF{20.0}{0.0}\( ^* \) & \MTCWITHCONF{20.0}{0.0}\( ^* \) \\
    \TableMethodR & WideRes28-10 & \MTCWITHCONF{36.69}{0.28}              & \MTCWITHCONF{36.41}{0.3}               & \MTCWITHCONF{67.41}{0.29}             & \MTCWITHCONF{76.49}{0.24}               & \MTCWITHCONF{57.62}{0.56} & \MTCWITHCONF{60.36}{0.6} \\
    \TableMethodE & WideRes28-10 & \MTCWITHCONF{81.19}{0.25}              & \SECONDBEST{\MTCWITHCONF{87.82}{0.13}} & \MTCWITHCONF{82.04}{0.26}             & \SECONDBEST{\MTCWITHCONF{88.06}{0.16}}  & \MTCWITHCONF{91.99}{0.19} & \MTCWITHCONF{94.36}{0.09} \\
    \TableMethodT & WideRes28-10 & \MTCWITHCONF{81.27}{0.24}              & \MTCWITHCONF{87.8}{0.13}               & \MTCWITHCONF{81.89}{0.26}             & \MTCWITHCONF{87.95}{0.16}               & \MTCWITHCONF{91.91}{0.18} & \MTCWITHCONF{94.25}{0.1} \\
    \TableMethodOurs & WideRes28-10 & \SECONDBEST{\MTCWITHCONF{81.97}{0.25}} & \MTCWITHCONF{87.78}{0.14}              & \SECONDBEST{\MTCWITHCONF{82.8}{0.27}} & \MTCWITHCONF{87.99}{0.17}               & \SECONDBEST{\MTCWITHCONF{92.53}{0.18}} & \SECONDBEST{\MTCWITHCONF{94.56}{0.09}} \\
    \TableMethodOursS & WideRes28-10 & \BEST{\MTCWITHCONF{82.0}{0.26}}        & \BEST{\MTCWITHCONF{88.03}{0.13}}       & \BEST{\MTCWITHCONF{82.85}{0.27}}      & \BEST{\MTCWITHCONF{88.31}{0.16}}        & \BEST{\MTCWITHCONF{92.63}{0.18}} & \BEST{\MTCWITHCONF{94.69}{0.09}} \\
    \bottomrule
\end{tabular}

}
        \caption{Accuracies~({\footnotesize Confidence interval}) with the SIAMESE~\cite{zhuEASEUnsupervisedDiscriminant2022} classifier for different embedding approaches. Best and second best performance are denoted in  {\textbf{bold}} and \underline{underlined}, respectively. $^*$The SIAMESE classifier is sensitive to the norm of the embedding, thus leading to detrimental performance for some of the embedding approaches.}
        \label{tab:siameseAcc}
    \end{table*}
    \vspace{-0.2cm}
    \customparagraph{Implementation details}
        Our implementation is in PyTorch~\cite{paszkePytorch2019}.
        We optimize \method and \methodS for \( T = 150 \) iterations, using the Adam optimizer~\cite{adam} with learning rate \( \eta = 0.1 \).
        The other hyperparameters were chosen based on validation performance on the respective datasets\footnote{Hyperparameter configurations for all experiments are included in the supplementary.}.
        We analyze the effect of \( \alpha \) in Sec.~\ref{subsec:results}.
        Analyses of the \( \kappa \) and \( \epsilon \) hyperparameters are provided in the supplementary.

    \customparagraph{Initialization}
        Since \method and \methodS reduce the embedding dimensionality (\( d = 400 \)), we initialize embeddings with Principal Component Analysis (PCA)~\cite{jolliffe2002principal}, instead of a naïve, random initialization.
        The PCA initialization is computationally efficient, and approximately preserves global structure.
        It also resulted in faster convergence and better performance, compared to random initialization.

    \customparagraph{Base feature extractors}
        We use the standard networks
        ResNet-18~\cite{HeResnet2016} and Wide-Res28-10~\cite{ZagoruykoWideResNet2016} as the base feature extractors with pretrained weights from~\cite{veilleuxRealisticEvaluationTransductive2021} and~\cite{manglaChartingRightManifold2020}, respectively.
        
    \customparagraph{Datasets}
        Following common practice, we evaluate FSL performance on the \textit{mini-ImageNet (mini)}~\cite{vinyalsMatchingNetworksOne2016}, \textit{tiered-ImageNet (tiered)}~\cite{renMetalearningSemisupervisedFewshot2018}, and \textit{CUB-200 (CUB)}~\cite{CUB200} datasets.

    \customparagraph{Classifiers}
        We evaluate the baseline embeddings and our proposed methods using both established and recent FSL classifiers: \textit{SimpleShot}~\cite{wangSimpleShotRevisitingNearestNeighbor2019}, \textit{LaplacianShot}~\cite{zikoLaplacianRegularizedFewShot2020}, $\mathit{\alpha}-$\textit{TIM}~\cite{veilleuxRealisticEvaluationTransductive2021}, \textit{Oblique Manifold (OM)}~\cite{qiTransductiveFewShotClassification2021}, \textit{iLPC}~\cite{lazarouIterativeLabelCleaning2021}, and \textit{SIAMESE}~\cite{zhuEASEUnsupervisedDiscriminant2022}.

    \customparagraph{Baseline Embeddings}
        We compare our proposed method with a wide range of techniques for embedding the base features: \textit{None} (No embedding of base features), \textit{L2}~\cite{wangSimpleShotRevisitingNearestNeighbor2019}, \textit{Centered L2}~\cite{wangSimpleShotRevisitingNearestNeighbor2019}, \textit{ZN}~\cite{feiZScoreNormalizationHubness2021}, \textit{ReRep}~\cite{cuiParameterlessTransductiveFeature2021},  \textit{EASE}~\cite{zhuEASEUnsupervisedDiscriminant2022}, and \textit{TCPR}~\cite{xuAlleviatingSampleSelection2022}.

    \customparagraph{Evaluation protocol}
        We follow the standard evaluation protocol in FSL and calculate the accuracy for 1-shot and 5-shot classification with 15 images per class in the query set.
        We evaluate on \( 10000 \)  episodes, as is standard practice in FSL.
        Additionally, we evaluate the hubness of the representations after embedding using two common hubness metrics, namely the skewness (Sk) of the k-occurrence distribution~\cite{radovanovicHubsSpacePopular2010} and the hub occurrence (HO)~\cite{flexer2015choosing}, which measures the percentage of hubs in the nearest neighbour lists of all points.

\subsection{Results}
    \label{subsec:results}
    \customparagraph{Comparison to the state-of-the-art}
        To illustrate the effectiveness of \method and \methodS as an embedding approach for FSL, we consider the current state-of-the-art FSL method, which leverages the EASE embedding and obtains query predictions with SIAMESE~\cite{zhuEASEUnsupervisedDiscriminant2022}.
        We replace EASE with our proposed embedding approaches \method and \methodS, as well as other baseline embeddings, and evaluate performance on all datasets in the 1 and 5-shot setting.
        As shown in Table~\ref{tab:siameseAcc}, \method and \methodS outperform all baseline approaches in both settings across all datasets, illustrating \method's and \methodS' ability to provide useful FSL embeddings, and updating the state-of-the-art in transductive FSL.

    \customparagraph{Aggregated FSL performance}
        To further evaluate the general applicability of \method and \methodS as embedding approaches, we perform extensive experiments for all classifiers and all baseline embeddings on all datasets.
        Tables~\ref{tab:agg1shot} and~\ref{tab:agg5shot} provide the results averaged over classifiers\footnote{The detailed results for all classifiers are provided in the supplementary.}.
        To clearly present the results, we aggregate the accuracy and a ranking \emph{score} for each embedding method across all classifiers.
        The ranking score is calculated by performing a paired Student's t-test between all pairwise embedding methods for each classifier.
        We then average the ranking scores across all classifiers.
        A high ranking score then indicates that a method often significantly outperforms the competing embedding methods.
        We set the significance level to 5\%.
        \method and \methodS consistently outperform previous embedding approaches -- sometimes by a large margin.
        Overall, we further observe that \methodS outperforms \method in most settings and is particular beneficial in the 1-shot setting, which is more challenging, given that fewer samples are likely to generate noisy embeddings.

        \begin{table}
            \setlength{\tabcolsep}{0.9mm}
            \begin{subtable}{\columnwidth}
                \centering
                {\tableFontSize \begin{tabular}{lrcccccc}
\toprule
& & \multicolumn{2}{c}{mini} & \multicolumn{2}{c}{tiered} & \multicolumn{2}{c}{CUB} \\
& Embedding & Acc \HIGHERBETTER & Score \HIGHERBETTER & Acc \HIGHERBETTER & Score \HIGHERBETTER & Acc \HIGHERBETTER & Score \HIGHERBETTER \\ \cmidrule(lr){2-2} \cmidrule(lr){3-4} \cmidrule(lr){5-6} \cmidrule(lr){7-8}

\multirow{9}{*}{\rotatebox{90}{ResNet18}}
& \TableMethodN & 55.74 & 0.17 & 62.61 & 0.0 & 63.78 & 0.17 \\
& \TableMethodL & 68.22 & 2.33 & 75.94 & 2.17 & 78.09 & 2.33 \\
& \TableMethodC & 69.56 & 2.83 & 76.97 & 3.0 & 78.26 & 2.83 \\
& \TableMethodZ & 60.0 & 2.33 & 66.21 & 2.5 & 67.43 & 2.67 \\
& \TableMethodR & 60.76 & 4.0 & 67.07 & 3.67 & 69.6 & 4.17 \\
& \TableMethodE & 69.63 & 3.67 & 77.05 & 4.0 & 78.84 & 3.67 \\
& \TableMethodT & 69.97 & 4.0 & 77.18 & 3.33 & 78.83 & 4.0 \\
& \TableMethodOurs & \SECONDBEST{72.58} & \SECONDBEST{6.83} & \SECONDBEST{79.77} & \SECONDBEST{6.83} & \SECONDBEST{81.91} & \SECONDBEST{6.83} \\
& \TableMethodOursS & \BEST{73.64} & \BEST{7.67} & \BEST{80.6} & \BEST{7.67} & \BEST{83.1} & \BEST{7.67} \\\midrule
\multirow{9}{*}{\rotatebox{90}{WideRes28-10}}
& \TableMethodN & 63.59 & 1.0 & 71.29 & 0.83 & 79.23 & 1.17 \\
& \TableMethodL & 74.3 & 3.0 & 76.19 & 2.67 & 88.61 & 3.5 \\
& \TableMethodC & 71.32 & 1.33 & 75.17 & 2.0 & 88.52 & 3.33 \\
& \TableMethodZ & 64.27 & 2.5 & 65.64 & 2.5 & 76.0 & 1.5 \\
& \TableMethodR & 65.51 & 3.0 & 71.83 & 3.17 & 83.1 & 3.5 \\
& \TableMethodE & 74.95 & 4.33 & 76.59 & 3.67 & 88.51 & 3.5 \\
& \TableMethodT & 75.64 & 4.83 & 76.51 & 4.0 & 88.22 & 2.5 \\
& \TableMethodOurs & \SECONDBEST{78.22} & \SECONDBEST{7.0} & \SECONDBEST{79.76} & \SECONDBEST{7.0} & \SECONDBEST{90.25} & \SECONDBEST{5.67} \\
& \TableMethodOursS & \BEST{79.24} & \BEST{7.67} & \BEST{80.46} & \BEST{7.67} & \BEST{90.82} & \BEST{7.67} \\
\bottomrule
\end{tabular}

}
                \caption{ 1-shot}
                \label{tab:agg1shot}
            \end{subtable}
            \begin{subtable}{\columnwidth}
                \centering
                {\tableFontSize \begin{tabular}{lrcccccc}
\toprule
& & \multicolumn{2}{c}{mini} & \multicolumn{2}{c}{tiered} & \multicolumn{2}{c}{CUB} \\
& & Acc \HIGHERBETTER & Score \HIGHERBETTER & Acc \HIGHERBETTER & Score \HIGHERBETTER & Acc \HIGHERBETTER & Score \HIGHERBETTER \\ \cmidrule(lr){3-4} \cmidrule(lr){5-6} \cmidrule(lr){7-8}

\multirow{9}{*}{\rotatebox{90}{ResNet18}}
& \TableMethodN & 69.83 & 0.83 & 74.38 & 0.67 & 76.01 & 1.17 \\
& \TableMethodL & 81.58 & 2.33 & 86.05 & 1.83 & 88.43 & 2.83 \\
& \TableMethodC & 81.95 & 2.67 & 86.43 & 3.0 & 88.49 & 2.5 \\
& \TableMethodZ & 71.49 & 4.0 & 75.32 & 3.83 & 76.92 & 3.5 \\
& \TableMethodR & 70.25 & 2.5 & 74.52 & 1.83 & 76.43 & 2.5 \\
& \TableMethodE & 81.84 & 3.5 & 86.4 & 3.17 & 88.57 & 3.5 \\
& \TableMethodT & 82.1 & 4.0 & 86.54 & 3.83 & 88.79 & 4.33 \\
& \TableMethodOurs & \SECONDBEST{82.58} & \SECONDBEST{5.5} & \SECONDBEST{86.9} & \SECONDBEST{4.5} & \BEST{89.13} & \BEST{6.0} \\
& \TableMethodOursS & \BEST{82.61} & \BEST{6.5} & \BEST{87.13} & \BEST{6.67} & \SECONDBEST{88.93} & \SECONDBEST{5.33} \\
\midrule
\multirow{9}{*}{\rotatebox{90}{WideRes28-10}}
& \TableMethodN & 78.77 & 1.5 & 84.1 & 1.67 & 89.49 & 1.67 \\
& \TableMethodL & 85.65 & 4.0 & 86.29 & 3.83 & 93.47 & 3.67 \\
& \TableMethodC & 83.14 & 1.33 & 85.47 & 1.5 & 93.49 & 4.0 \\
& \TableMethodZ & 74.61 & 4.33 & 75.34 & 5.0 & 81.02 & 3.17 \\
& \TableMethodR & 73.86 & 1.83 & 81.51 & 1.67 & 87.2 & 2.0 \\
& \TableMethodE & 85.51 & 3.5 & 86.29 & 3.33 & 93.34 & 3.5 \\
& \TableMethodT & \SECONDBEST{86.03} & \BEST{6.0} & 86.37 & 4.0 & 93.3 & 3.0 \\
& \TableMethodOurs & \BEST{86.44} & \SECONDBEST{5.67} & \BEST{87.07} & \SECONDBEST{5.5} & \SECONDBEST{93.65} & \SECONDBEST{4.17} \\
& \TableMethodOursS & 85.95 & 5.5 & \SECONDBEST{87.05} & \BEST{5.83} & \BEST{93.76} & \BEST{5.0} \\
\bottomrule
\end{tabular}
}
                \caption{5-shot}
                \label{tab:agg5shot}
            \end{subtable}
            \caption{Aggregated FSL performance for all embedding approaches on the mini-ImageNet, tiered-ImageNet, and CUB-200 datasets. Results are averaged over FSL classifiers. Best and second best performance are denoted in  {\textbf{bold}} and \underline{underlined}, respectively.}
            \label{tab:agg}
            \vspace{-0.3cm}
        \end{table}

\customparagraph{Hubness metrics}
        To further validate \method's and \methodS' ability to reduce hubness, we follow the same procedure of aggregating results for the hubness metrics and average over classifiers.
        Compared to the current state-of-the-art embedding approaches, Table~\ref{tab:hubnessMetrics} illustrates that \method and \methodS consistently result in embeddings with lower hubness.
        
        \begin{table}
            \centering
            \setlength{\tabcolsep}{1.2mm}
            \begin{subtable}{\columnwidth}
                \centering
                {\tableFontSize \begin{tabular}{rrcccccc}
\toprule
 &  & \multicolumn{2}{c}{mini} & \multicolumn{2}{c}{tiered} & \multicolumn{2}{c}{CUB} \\
                                                  & & Sk \LOWERBETTER  & HO \LOWERBETTER\ & Sk \LOWERBETTER  & HO \LOWERBETTER\ & Sk \LOWERBETTER  & HO \LOWERBETTER\ \\ \cmidrule(lr){3-4} \cmidrule(lr){5-6} \cmidrule(lr){7-8}
\multirow[c]{9}{*}{\rotatebox{90}{ResNet18}}
                                                 & \TableMethodN & 1.349              & 0.407                   & 1.211              & 0.408                   & 0.887              & 0.341 \\
                                                 & \TableMethodL & 0.937              & 0.301                   & 0.812              & 0.265                   & 0.691              & 0.236 \\
                                                 & \TableMethodC & 0.667              & 0.233                   & 0.679              & 0.249                   & 0.549              & 0.201 \\
                                                 & \TableMethodZ & 0.68               & 0.231                   & 0.698              & 0.264                   & 0.564              & 0.216 \\
                                                 & \TableMethodR & 3.655              & 0.548                   & 3.604              & 0.549                   & 3.565              & 0.513 \\
                                                 & \TableMethodE & 0.521              & 0.16                    & 0.479              & 0.158                   & 0.466              & \SECONDBEST{0.153} \\
                                                 & \TableMethodT & 0.651              & 0.228                   & 0.65               & 0.25                    & 0.532              & 0.204 \\
                                                 & \TableMethodOurs & \SECONDBEST{0.315} & \BEST{0.095}            & \SECONDBEST{0.303} & \BEST{0.102}            & \SECONDBEST{0.32}  & \BEST{0.112} \\
                                                 & \TableMethodOursS & \BEST{0.276}       & \SECONDBEST{0.13}       & \BEST{0.283}       & \SECONDBEST{0.127}      & \BEST{0.296}       & 0.162 \\
\midrule
\multirow[c]{9}{*}{\rotatebox{90}{WideRes28-10}}
                                                 & \TableMethodN & 1.6                & 0.459                   & 1.81               & 0.494                   & 1.073              & 0.369 \\
                                                 & \TableMethodL & 0.781              & 0.296                   & 0.737              & 0.275                   & 0.475              & 0.228 \\
                                                 & \TableMethodC & 0.981              & 0.288                   & 0.817              & 0.307                   & 0.52               & 0.267 \\
                                                 & \TableMethodZ & 0.73               & 0.287                   & 0.769              & 0.302                   & 0.517              & 0.263 \\
                                                 & \TableMethodR & 3.56               & 0.704                   & 3.55               & 0.777                   & 3.026              & 0.47 \\
                                                 & \TableMethodE & 0.47               & 0.177                   & 0.477              & 0.175                   & 0.437              & 0.213 \\
                                                 & \TableMethodT & 0.589              & 0.236                   & 0.685              & 0.264                   & 0.477              & 0.231 \\
                                                 & \TableMethodOurs & \SECONDBEST{0.29}  & \BEST{0.111}            & \SECONDBEST{0.301} & \BEST{0.111}            & \SECONDBEST{0.188} & \BEST{0.108} \\
                                                 & \TableMethodOursS & \BEST{0.258}       & \SECONDBEST{0.148}      & \BEST{0.274}       & \SECONDBEST{0.135}      & \BEST{0.162}       & \SECONDBEST{0.13} \\

\bottomrule
\end{tabular}
}
                \caption{ 1-shot}
                \label{tab:hubness1shot}
            \end{subtable}
            \begin{subtable}{\columnwidth}
                \centering
                {\tableFontSize \begin{tabular}{rrcccccc}
\toprule
 &  & \multicolumn{2}{c}{mini} & \multicolumn{2}{c}{tiered} & \multicolumn{2}{c}{CUB} \\
                                                  & & Sk \LOWERBETTER  & HO \LOWERBETTER\ & Sk \LOWERBETTER  & HO \LOWERBETTER\ & Sk \LOWERBETTER  & HO \LOWERBETTER\ \\ \cmidrule(lr){3-4} \cmidrule(lr){5-6} \cmidrule(lr){7-8}
\multirow[c]{9}{*}{\rotatebox{90}{ResNet18}}
                                                 & \TableMethodN & 1.436              & 0.422                   & 1.339              & 0.432                   & 0.987              & 0.364 \\
                                                 & \TableMethodL & 1.04               & 0.318                   & 0.914              & 0.287                   & 0.812              & 0.263 \\
                                                 & \TableMethodC & 0.786              & 0.264                   & 0.821              & 0.28                    & 0.698              & 0.236 \\
                                                 & \TableMethodZ & 0.806              & 0.264                   & 0.839              & 0.296                   & 0.716              & 0.25 \\
                                                 & \TableMethodR & 1.631              & 0.863                   & 1.721              & 0.872                   & 1.432              & 0.869 \\
                                                 & \TableMethodE & 0.624              & 0.186                   & 0.598              & 0.183                   & 0.607              & 0.186 \\
                                                 & \TableMethodT & 0.78               & 0.259                   & 0.796              & 0.283                   & 0.687              & 0.235 \\
                                                 & \TableMethodOurs & \SECONDBEST{0.286} & \SECONDBEST{0.096}      & \SECONDBEST{0.289} & \SECONDBEST{0.104}      & \BEST{0.329}       & \SECONDBEST{0.12} \\
                                                 & \TableMethodOursS & \BEST{0.25}        & \BEST{0.074}            & \BEST{0.213}       & \BEST{0.078}            & \SECONDBEST{0.433} & \BEST{0.097} \\
\midrule
\multirow[c]{9}{*}{\rotatebox{90}{WideRes28-10}}
                                                 & \TableMethodN  & 1.709              & 0.473                   & 1.937              & 0.51                    & 1.16               & 0.395 \\
                                                 & \TableMethodL  & 0.887              & 0.322                   & 0.86               & 0.305                   & 0.632              & 0.266 \\
                                                 & \TableMethodC  & 1.12               & 0.318                   & 0.956              & 0.337                   & 0.701              & 0.31 \\
                                                 & \TableMethodZ  & 0.858              & 0.32                    & 0.912              & 0.335                   & 0.699              & 0.305 \\
                                                 & \TableMethodR  & 1.597              & 0.819                   & 1.617              & 0.846                   & 1.299              & 0.549 \\
                                                 & \TableMethodE  & 0.579              & 0.199                   & 0.585              & 0.193                   & 0.572              & 0.241 \\
                                                 & \TableMethodT  & 0.717              & 0.27                    & 0.815              & 0.294                   & 0.634              & 0.264 \\
                                                 & \TableMethodOurs  & \BEST{0.294}       & \SECONDBEST{0.115}      & \BEST{0.298}       & \BEST{0.115}            & \BEST{0.195}       & \BEST{0.1} \\
                                                 & \TableMethodOursS  & \SECONDBEST{0.494} & \BEST{0.103}            & \SECONDBEST{0.407} & \SECONDBEST{0.12}       & \SECONDBEST{0.421} & \SECONDBEST{0.127} \\

\bottomrule
\end{tabular}
}
                \caption{5-shot}
                \label{tab:hubness5shot}
            \end{subtable}
            \caption{Aggregated hubness metrics for all embedding approaches on the Mini-ImageNet, Tiered-ImageNet and CUB-200 dataset. Results are averaged over FSL classifiers. Best and second best performance are denoted in  {\textbf{bold}} and \underline{underlined}, respectively.}
            \label{tab:hubnessMetrics}
            \vspace{-0.3cm}
        \end{table}

    \customparagraph{Visualization of similarity matrices}
        As discussed in Sec.~\ref{sec:method}, completely eliminating hubness by distributing points uniformly on the hypersphere is not sufficient to obtain good FSL performance.
        Instead, representations need to also capture the inherent class structure of the data.
        To further evaluate the embedding approaches, we therefore compute the pairwise inner products for the embeddings of a random 5-shot episode on tiered-ImageNet with ResNet-18 features in Figure~\ref{fig:innerProductMatrices}.
        It can be observed that the block structure is considerably more distinct for \method and \methodS, with \methodS slightly improving upon \method.
        These results indicate that (i) samples are more uniform, indicating the reduced hubness;
        and (ii) classes are better separated, due to the local similarity preservation.

        \begin{figure*}
            \centering
            \includegraphics[width=\textwidth]{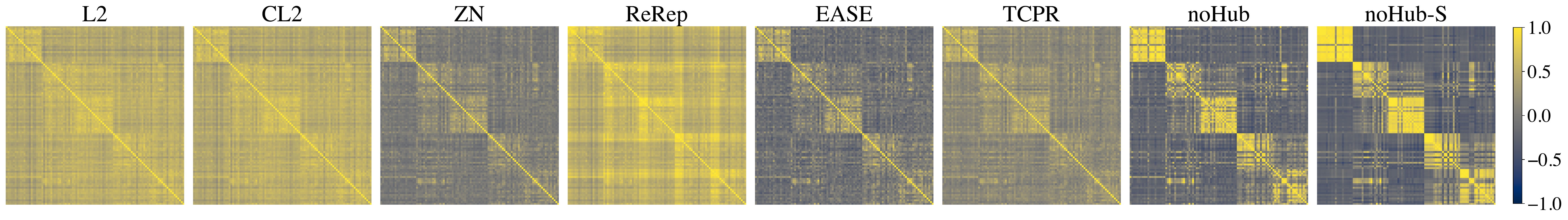}
            \caption{Inner product matrices between features for a random episode for all embedding approaches.}
            \label{fig:innerProductMatrices}
        \end{figure*}

    \customparagraph{Tradeoff between uniformity and similarity preservation}
        We analyze the effect of \(\alpha \) on the tradeoff between LSP and Uniformity in the loss function in Eq.~\eqref{eq:Lfinal}, on tiered-ImageNet with ResNet-18 features in the 5-shot setting and with the SIAMESE~\cite{zhuEASEUnsupervisedDiscriminant2022} classifier.
        The results are visualized in Figure~\ref{fig:alphas}.
        We notice a sharp increase in performance when we have a high emphasis on uniformity.
        This demonstrates the impact of hubness on accuracy in FSL performance.
        As we keep increasing the emphasis on LSP, however, after a certain point we notice a sharp drop off in performance.
        This is due to the fact that the classifier does not take into account the uniformity constraint on the features, resulting in a large number of misclassifications.
        In general, we observe that \methodS is slightly more robust compared to \method.
    
        \begin{figure}
            \centering
            \includegraphics[width=0.7\columnwidth]{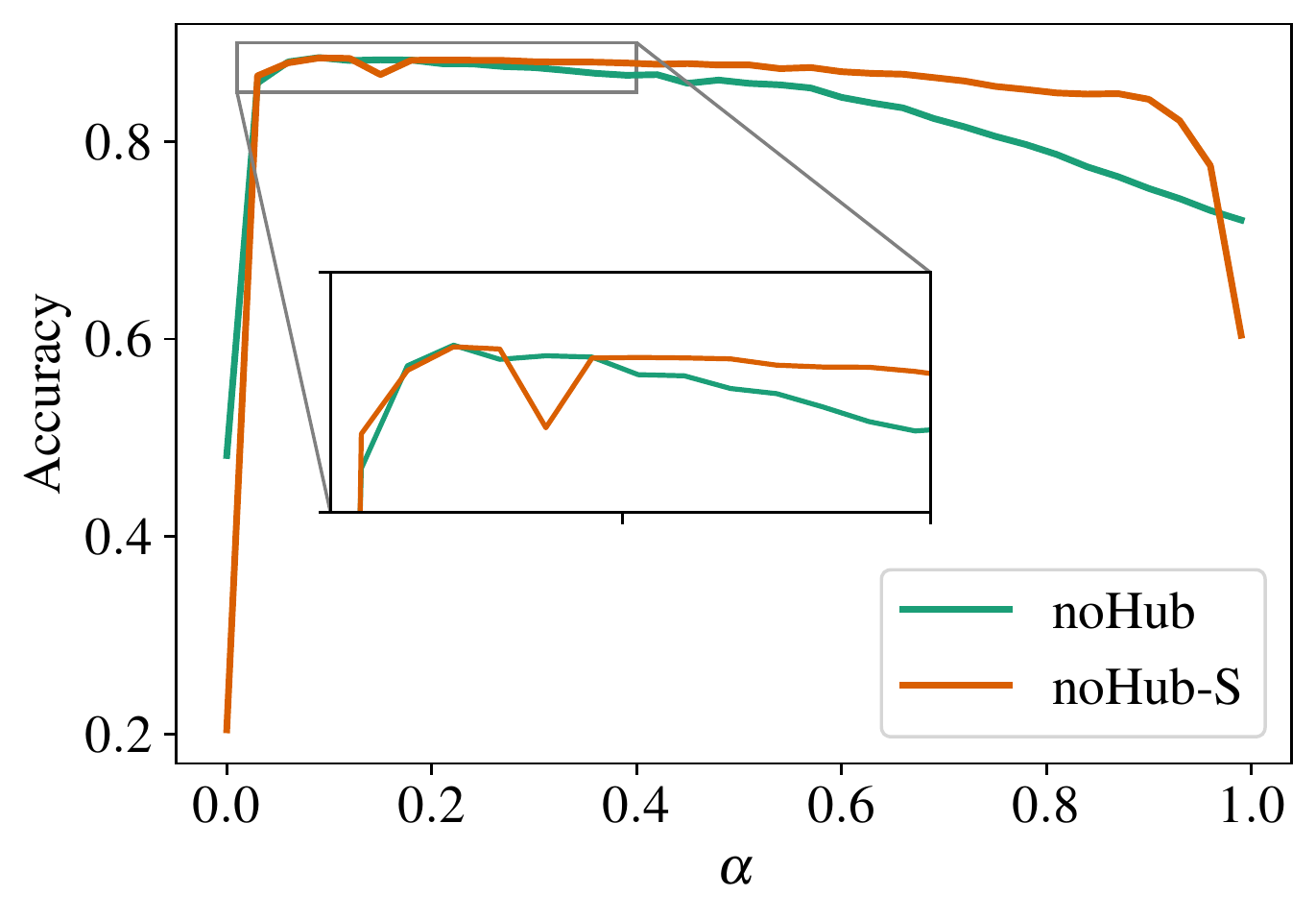}
            \vspace{-.3cm}
            \caption{Accuracies for different values of the weighting parameter, \( \alpha \), which quantifies the tradeoff between $\Lalign$ and $\Lunif$.}
            \label{fig:alphas}
        \end{figure}

        \begin{figure}
            \centering
            \includegraphics[width=0.7\columnwidth]{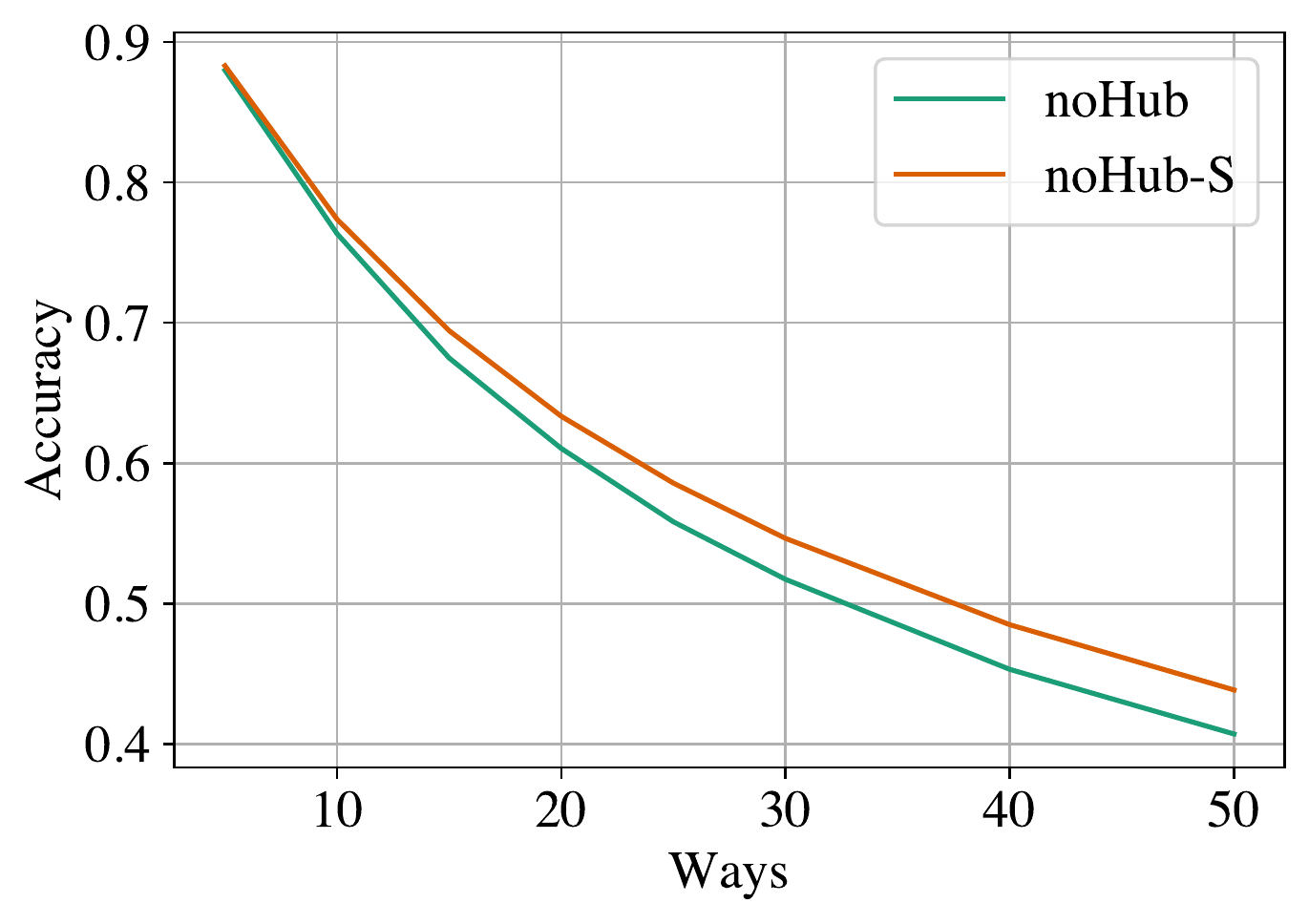}
            \vspace{-.3cm}
            \caption{Accuracies for an increasing number of classes (ways) for \method and \methodS.}
            \label{fig:increasingWays}
        \end{figure}
    
    \customparagraph{Increasing number of classes}
    We analyze the behavior of \method and \methodS for an increasing number of classes (ways) on the tiered-ImageNet dataset with SIAMESE~\cite{zhuEASEUnsupervisedDiscriminant2022} as classifier.
        While classification accuracy generally decreases with an increasing number of classes, which is expected, we observe from Figure~\ref{fig:increasingWays} that \methodS has a slower decay and is able to leverage the label guidance to obtain better performance for a larger number of classes.
    
    \customparagraph{Effect of label information in $\Lalign$ and $\Lunif$}
        To validate the effectiveness of using label guidance in \methodS, we study the result of including label information in \(\Lalign\) and \(\Lunif\) (Eqs.~\eqref{eq:modpij1}--\eqref{eq:moduni}).
        We note that the default setting of \method is that none of the two losses include label information.
        Ablation experiments are performed on tiered-ImageNet with the ResNet-18 feature extractor and the SimpleShot and SIAMESE classifier~\cite{zhuEASEUnsupervisedDiscriminant2022}.
         In Table~\ref{tab:ablation}, we generally see improvements of \methodS when \textit{both} the loss terms are label-informed, indicating the usefulness of label guidance.

         We further observe that incorporating label information in \( \Lunif \) tends to have a larger contribution than doing the same for \( \Lalign \).
        This aligns with our observations in Figure~\ref{fig:alphas}, where a small \( \alpha \) yielded the best performance.

    \begin{table}
        \flushleft
        {\setlength{\tabcolsep}{0.5mm} \tableFontSize 
\begin{tabular}{rcccccc} \toprule
    & \multicolumn{2}{c}{Label-informed} & \multicolumn{2}{c}{SimpleShot~\cite{wangSimpleShotRevisitingNearestNeighbor2019}} & \multicolumn{2}{c}{SIAMESE~\cite{zhuEASEUnsupervisedDiscriminant2022}} \\
    &  \( \Lalign \) & \( \Lunif \) & 1-shot \HIGHERBETTER & 5-shot \HIGHERBETTER & 1-shot \HIGHERBETTER & 5-shot \HIGHERBETTER \\ \cmidrule(lr){2-3} \cmidrule(lr){4-5} \cmidrule(lr){6-7}
    \method  &  \FALSE      &    \FALSE &    \MTCWITHCONF{76.72}{0.23} &  \BEST{\MTCWITHCONF{86.31}{0.16}} & \MTCWITHCONF{82.94}{0.27}       & \MTCWITHCONF{87.87}{0.17} \\
    \methodS &     \TRUE  &      \FALSE &    \MTCWITHCONF{78.25}{0.24} & \MTCWITHCONF{85.46}{0.16} & \MTCWITHCONF{82.56}{0.28}      &  \MTCWITHCONF{88.07}{0.17} \\
    \methodS &    \FALSE  &       \TRUE &    \MTCWITHCONF{78.33}{0.23} &  \MTCWITHCONF{86.15}{0.15} & \MTCWITHCONF{82.81}{0.27}      &  \BEST{\MTCWITHCONF{88.43}{0.16}} \\
    \methodS &      \TRUE  &        \TRUE &  \BEST{\MTCWITHCONF{78.35}{0.23}} &  \MTCWITHCONF{86.22}{0.15} & \BEST{\MTCWITHCONF{83.09}{0.27}} & \BEST{\MTCWITHCONF{88.43}{0.16}} \\
\bottomrule
\end{tabular}

}
        \caption{Ablation study with the label-informed losses in \methodS. Check marks (\TRUE) indicate that the loss uses information from the support labels.}
        \label{tab:ablation}
    \end{table}

    \section{Conclusion}
        In this paper we have addressed the hubness problem in FSL.
We have shown that hubness is eliminated by embedding representations uniformly on the hypersphere.
The hyperspherical uniform distribution has zero mean and zero density gradient at all points along all directions tangent to the hypersphere -- both of which are identified as causes of hubness in previous work~\cite{radovanovicHubsSpacePopular2010,haraFlatteningDensityGradient2016}.
Based on our theoretical findings about hubness and hyperspheres, we proposed two new methods to embed representations on the hypersphere for FSL.
The proposed \method and \methodS leverage a decomposition of the KL divergence between similarity distributions, and optimize a tradeoff between LSP and uniformity on the hypersphere -- thus reducing hubness while maintaining the class structure in the representation space.
We have provided theoretical analyses and interpretations of the LSP and uniformity losses, proving that they optimize LSP and uniformity, respectively.
We comprehensively evaluate the proposed methods on several datasets, features extractors, and classifiers, and compare to a number of recent state-of-the-art baselines. 
Our results illustrate the effectiveness of our proposed methods and show that we achieve state-of-the-art performance in transductive FSL.

    \section*{Acknowledgements}
        This work was financially supported by the Research Council of Norway (RCN), through its Centre for Research-based Innovation funding scheme (Visual Intelligence, grant no.\ 309439), and Consortium Partners.
        It was further funded by RCN FRIPRO grant no.\ 315029, RCN IKTPLUSS grant no.\ 303514, and the UiT Thematic Initiative ``Data-Driven Health Technology''.

    \clearpage
    \appendix
    \noindent
    {\huge\bfseries Supplementary material}

    \section{Introduction}
        Here we provide proofs for our theoretical results on the hyperspherical uniform and hubness;
the decomposition of the KL divergence;
and the minima of our methods' loss functions.
We also give additional details on the implementation and hyperparameters for \method and \methodS -- and include the complete tables of 1-shot and 5-shot results for all classifiers, datasets and feature extractors.
Finally, we briefly reflect on potential negative societal impacts of our work.

    \section{Hyperspherical Uniform Eliminates Hubness}
        
\subsection*{Proof of Proposition~\ref{prop:zeroMean}}

    \def\SPHEREPOS{\SPHERE_d^{i, +}}
    \def\SPHERENEG{\SPHERE_d^{i, -}}
    \def\SPHEREZER{\SPHERE_d^{i, 0}}

    \begin{lemma}[Trisection of hypersphere]
        \label{lemma:sphereSymm}
        The trisection of the hypersphere along coordinate \( i \) is given by the three-tuple of disjoint sets \( (\SPHEREPOS, \SPHERENEG, \SPHEREZER) \) where
        \begin{align}
            & \SPHEREPOS = \{ \vec x = [x^1, \dots, x^d]\T \in \SPHERE_d \mid x^i > 0 \} \\
            & \SPHERENEG = \{ \vec x = [x^1, \dots, x^d]\T \in \SPHERE_d \mid x^i < 0 \} \\
            & \SPHEREZER = \{ \vec x = [x^1, \dots, x^d]\T \in \SPHERE_d \mid x^i = 0 \}
        \end{align}
        and
        \begin{align}
            \SPHEREPOS \cup \SPHERENEG \cup \SPHEREZER = \SPHERE_d
        \end{align}

        Then we have
        \begin{align}
            \SPHEREPOS = - \SPHERENEG = \{-\vec x \mid \vec x \in \SPHERENEG \}
        \end{align}
    \end{lemma}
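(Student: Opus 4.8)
The plan is to verify the two assertions separately, both of which reduce to elementary properties of the antipodal map \( \vec x \mapsto -\vec x \) on \( \SPHERE_d \). For the partition claim, I would observe that every \( \vec x \in \SPHERE_d \) has a well-defined \( i \)-th coordinate \( x^i \in \real \), and by the trichotomy law for the reals exactly one of \( x^i > 0 \), \( x^i < 0 \), \( x^i = 0 \) holds. This immediately makes \( \SPHEREPOS, \SPHERENEG, \SPHEREZER \) pairwise disjoint and gives \( \SPHEREPOS \cup \SPHERENEG \cup \SPHEREZER = \SPHERE_d \), since the only constraint beyond the sign of \( x^i \) is membership in \( \SPHERE_d \) itself.

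For the reflection identity \( \SPHEREPOS = -\SPHERENEG \), the key observation is that negation preserves the sphere, \( \|-\vec x\| = \|\vec x\| = 1 \), so \( \vec x \in \SPHERE_d \) if and only if \( -\vec x \in \SPHERE_d \), while it flips the \( i \)-th coordinate, \( (-\vec x)^i = -x^i \). I would then prove the set equality by double inclusion. For \( -\SPHERENEG \subseteq \SPHEREPOS \): if \( \vec x \in \SPHERENEG \) then \( x^i < 0 \) and \( -\vec x \in \SPHERE_d \) with \( (-\vec x)^i = -x^i > 0 \), so \( -\vec x \in \SPHEREPOS \). For the reverse inclusion, given \( \vec y \in \SPHEREPOS \), set \( \vec x = -\vec y \); then \( \vec x \in \SPHERE_d \) and \( x^i = -y^i < 0 \), so \( \vec x \in \SPHERENEG \) and \( \vec y = -\vec x \in -\SPHERENEG \). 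Combining the two inclusions yields the claimed equality.

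I do not expect any genuine obstacle here, as the statement is a structural preliminary whose entire content is that the antipodal involution exchanges the two open half-spheres while fixing the equatorial slice \( \SPHEREZER \) setwise. The only points requiring minimal care are (i) confirming that negation is norm-preserving so that it actually maps \( \SPHERE_d \) into itself, and (ii) keeping the strict inequalities strict under the sign flip. This symmetry is exactly what will be exploited downstream: in Proposition~\ref{prop:zeroMean} the contributions of \( \SPHEREPOS \) and \( \SPHERENEG \) to each coordinate of \( \E(\vec X) \) cancel under the measure-preserving map \( \vec x \mapsto -\vec x \), while \( \SPHEREZER \) contributes nothing to the \( i \)-th coordinate, giving \( \E(\vec X) = \vec 0 \).
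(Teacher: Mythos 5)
Your proof is correct and takes essentially the same route as the paper: a double-inclusion argument for \( \SPHEREPOS = -\SPHERENEG \) using the two facts that negation preserves the norm (so it maps \( \SPHERE_d \) to itself) and flips the sign of the \( i \)-th coordinate. The additional trichotomy argument for the partition is a harmless elaboration of what the paper treats as part of the lemma's setup.
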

    \begin{proof}
        Let \( \vec x \in \SPHEREPOS \), then
        \begin{align}
            || (-x) || = || x || = 1,
        \end{align}
        and
        \begin{align}
            -(x^i) < 0.
        \end{align}
        Hence \( \vec x \in -\SPHERENEG \), and \( \SPHEREPOS \subseteq - \SPHERENEG \).

        Similarly, let \( - \vec x \in -\SPHERENEG \), then
        \begin{align}
            ||-(-x)|| = ||x|| = 1,
        \end{align}
        and
        \begin{align}
            -(-x^i) = x^i > 0.
        \end{align}
        Hence \( \vec x \in \SPHEREPOS \), and \( -\SPHERENEG \subseteq \SPHEREPOS \).

        It then follows that \( \SPHEREPOS = - \SPHERENEG \).
    \end{proof}

    \begin{proof}[Proof (Proposition~\ref{prop:zeroMean})]
        The expectation \( \E (\vec X) \) is given by
        \begin{align}
            \E (\vec X) = \int_{\real^d} \vec x \SPHEREUNIFORM(\vec x) \dd \vec x
        \end{align}
        Since \( \SPHEREUNIFORM \) is non-zero only on the hypersphere \( \SPHERE_d \), the integral can be rewritten as a surface integral over \( \SPHERE_d \)
        \begin{align}
            \E(\vec X) = \int_{\SPHERE_d} \vec x A_d^{-1} \dd S.
        \end{align}
        Decomposing the integral over the trisection of \( \SPHERE_d \) along coordinate \( i \) gives
        \begin{align}
            & \int_{\SPHERE_d} \vec x A_d^{-1} \dd S =\\
            & A_d^{-1} \lrp{
                \int_{\SPHEREPOS} \vec x \dd S
                + \int_{\SPHERENEG} \vec x \dd S
                + \int_{\SPHEREZER} \vec x \dd S
            }.
        \end{align}
        By Lemma~\ref{lemma:sphereSymm} we have
        \begin{align}
            \SPHEREPOS = - \SPHERENEG \Rightarrow \int_{\SPHEREPOS} \vec x \dd S = - \int_{\SPHERENEG} \vec x \dd S.
        \end{align}
        Furthermore, since the set \( \SPHEREZER \) has zero width along coordinate \( i \), \( \int_{\SPHEREZER} \vec x \dd S = 0 \).
        Hence
        \begin{align}
            &\E(\vec X) = \\
            & A_d^{-1} \lrp{
                \int_{\SPHEREPOS} \vec x \dd S
                - \int_{\SPHEREPOS} \vec x \dd S
                + \int_{\SPHEREZER} \vec x \dd S
            } = 0
        \end{align}
    \end{proof}

\subsection*{Proof of Proposition~\ref{prop:zeroGradient}}
    \begin{proof}
        \( \SPHEREUNIFORM(\vec x) \) can be written in polar coordinates as
        \begin{align}
            \SPHEREUNIFORM(\vec x(r, \vec \theta)) = \SPHEREUNIFORM^{\text{Polar}}(r, \vec\theta) = A_d^{-1}\delta(r - 1)
        \end{align}
        The gradient of \( \SPHEREUNIFORM^{\text{Polar}}(r, \vec\theta) \) is then
        \begin{align}
            \nabla_{(r, \vec \theta)} \SPHEREUNIFORM^{\text{Polar}}(r, \vec\theta) =
            \begin{bmatrix}
                \frac{\partial}{\partial r} \SPHEREUNIFORM^{\text{Polar}}(r, \vec\theta) \\[0.2cm]
                0 \\
                \vdots \\
                0
            \end{bmatrix}
        \end{align}

    For an arbitrary point \( \vec p \in \SPHERE_d \), an arbitrary unit vector (direction), \( \vec \theta^{*} \), in the tangent plane \( \Pi_{\vec p} \) is given by
    \begin{align}
        \vec \theta^{*} =
        \begin{bmatrix}
            0\\
            \theta^{*}_1 \\
            \vdots \\
            \theta^{*}_{d-1}
        \end{bmatrix}
    \end{align}
    The directional derivative of \( \SPHEREUNIFORM(\vec x) \) along \( \vec \theta^{*} \) is then
    \begin{align}
        \nabla_{ \vec \theta^{*}} \SPHEREUNIFORM(\vec x) =
        \begin{bmatrix}
            \frac{\partial}{\partial r} \SPHEREUNIFORM^{\text{Polar}}(r, \vec\theta) \\[0.2cm]
            0 \\
            \vdots \\
            0
        \end{bmatrix}\T
        \cdot
        \begin{bmatrix}
            0\\
            \theta^{*}_1 \\
            \vdots \\
            \theta^{*}_{d-1}
        \end{bmatrix}
        = 0
    \end{align}
\end{proof}

    \section{Method}
        \customparagraph{Computing \( \kappa_i \)}
    Following~\cite{Maaten}, we compute \( \kappa_i \) using a binary search such that
    \begin{align}
        | \log_2(P) - H(P_i) | \le 0.1 \cdot \log_2(P)
    \end{align}
    where \( P \) is a hyperparameter, and \( H(P_i) \) is the Shannon entropy of similarities for representation \( i \)
    \begin{align}
        H(P_i) = \sums{j=1}{n} p_{i|j} \log_2(p_{i|j}).
    \end{align}

\customparagraph{Decomposition of \( KL(P||Q) \)}
    Recall that
    \begin{align}
        p_{ij} = \frac{p_{i|j} + p_{j | i}}{2}, \quad p_{i|j} = \frac{\exp(\kappa_i \vec x_i\T \vec x_j)}{\sums{l, m}{}\exp(\kappa_i \vec x_l\T \vec x_m)}
    \end{align}
    and
    \begin{align}
        q_{ij} = \frac{\exp(\kappa \vec z_i\T \vec z_j)}{\sums{l, m}{} \exp(\kappa \vec z_l\T \vec z_m)}.
    \end{align}

    Since \( p_{ij} \) is constant \wrt \( q_{ij} \), we have
    \begin{align}
        &\ARGMIN KL(P||Q) = \ARGMIN \sums{i, j}{} p_{ij} \log \frac{p_{ij}}{q_{ij}} \\
        &= \ARGMIN \underbrace{\sums{i, j}{} p_{ij} \log p_{ij}}_{\text{constant}} - \sums{i, j}{} p_{ij} \log q_{ij} \\
        &= \ARGMIN \underbrace{- \sums{i, j}{} p_{ij} \log q_{ij}}_{\eqqcolon \LVMFT}
    \end{align}
    Minimizing \( KL(P||Q) \) over \( \TARGETS \) is therefore equivalent to minimizing \( \LVMFT \).

    Decomposing \( \LVMFT \) gives
    \begin{align}
        \LVMFT &= - \sums{i, j}{} p_{ij} \kappa \vec z_i\T \vec z_j +\\
        & \sums{i, j}{} \lrp{p_{ij} \log \sums{l, m}{} \exp(\kappa \vec z_l\T \vec z_m)} \\
        &= - \sums{i, j}{} p_{ij} \kappa \vec z_i\T \vec z_j \\
        & + \lrp{\log \sums{l, m}{} \exp(\kappa \vec z_l\T \vec z_m)} \cdot \underbrace{\lrp{\sums{i, j}{} p_{ij}}}_{= 1} \\
        &= \underbrace{- \sums{i, j}{} p_{ij} \kappa \vec z_i\T \vec z_j}_{\eqqcolon~\Lalign} + \underbrace{\log \sums{l, m}{} \exp(\kappa \vec z_l\T \vec z_m)}_{\eqqcolon~\Lunif}
    \end{align}
    Thus, we have shown that
    \begin{align}
        \ARGMIN KL(P||Q) = \ARGMIN \Lalign + \Lunif.
    \end{align}

    \section{Theoretical Results}
        \subsection*{Proof of Proposition~\ref{prop:conLE}}
    \begin{proof}
        We have
        \begin{align}
            \Lalign &= -\kappa \sums{i, j}{} p_{ij} \vec z_i\T \vec z_j \\
                    &= - 2 \sums{i, j}{} \frac{1}{2} \kappa p_{ij} \vec z_i\T \vec z_j + \sums{i, j}{} 2 \frac{1}{2} \kappa p_{ij} - \kappa  \\
                    & \nonumber(\sums{i, j}{} p_{ij} = 1) \\
                    &= - 2 \sums{i, j}{} \vec z_i\T \vec z_j W_{ij} + \sums{i, j}{}(\|\vec z_i\| + \|\vec z_j\|) W_{ij} - \kappa \\
                    &\nonumber (\|\vec z_i\| = \|\vec z_j\| = 1) \\
                    &= \sums{i, j}{}( \| \vec z_i \| - 2 \vec z_i\T \vec z_j + \| \vec z_j \|) W_{ij} - \kappa \\
                    &= \sums{i, j}{} {| \vec z_i - \vec z_j\|^2 W_{ij}} - \kappa.
        \end{align}
    \end{proof}

\subsection*{Proof of Proposition~\ref{prop:maxEntropy}}
    \begin{proof}
        Using a Gaussian kernel, the 2-order R\'enyi entropy can be estimated as~\cite[Eq.~(2.13)]{principeInformationTheoreticLearning2010}
        \begin{align}
            H_2(\vec z_1, \dots, \vec z_n) = -\log \left( \frac{1}{n^2} \sums{l, m}{} \exp(-\frac{1}{2}\kappa \|\vec z_l - \vec z_m \|^2) \right)
        \end{align}
        Thus, we have
        \begin{align}
            &\ARGMAX H_2(\vec z_1, \dots, \vec z_n) \\
            &\qquad= \ARGMAX -\log \Biggl( \frac{1}{n^2} \sums{l, m}{} \exp(-\frac{1}{2}\kappa \|\vec z_l - \vec z_m \|^2) \Biggr) \\
            &\qquad= \ARGMIN \log \Biggl( \sums{l, m}{} \exp(-\frac{1}{2}\kappa \|\vec z_l - \vec z_m \|^2) \Biggr) \\
            &\qquad= \ARGMIN \log \Biggl( \sums{l, m}{} \exp(-\frac{1}{2}\kappa (\|\vec z_l\|^2 \\
            &\qquad - 2 \vec z_l\T \vec z_m + \|\vec z_m \|^2) \Biggr) \\
            &\qquad= \ARGMIN \log \Biggl( \sums{l, m}{} \exp(-\kappa (1 - \vec z_l\T \vec z_m)) \Biggr) \\
            &\nonumber\qquad (\|\vec z_l\| = \|\vec z_m\| = 1) \\
            &\qquad= \ARGMIN \log \Biggl( \exp(-\kappa) \sums{l, m}{} \exp(\kappa \vec z_l\T \vec z_m) \Biggr) \\
            &\qquad= \ARGMIN \log \sums{l, m}{} \exp(\kappa \vec z_l\T \vec z_m) \\
            &\qquad= \ARGMIN \Lunif.
        \end{align}
    \end{proof}

\subsection*{Proof of Proposition~\ref{prop:minLunif}}
    \begin{proof}
        We have
        \begin{align}
            &\ARGMIN \Lunif \\
            &\qquad= \ARGMIN \log \sums{l, m}{} \exp(\kappa \vec z_l\T \vec z_m) \\
            &\qquad= \ARGMIN \sums{l, m}{} \exp(\kappa \vec z_l\T \vec z_m) \\
            & \nonumber\qquad\text{(monotonicity of logarithm)} \\
            &\qquad= \ARGMIN \sums{1 \le l < m \le n}{} \exp(\kappa \vec z_l\T \vec z_m) \\
            & \nonumber\qquad\text{(symmetry of inner product)} \\
            &\qquad= \ARGMIN \sums{1 \le l < m \le n}{} \underbrace{\exp(-\kappa ||\vec z_l - \vec z_m||_2^2)}_{\eqqcolon ~ G(\vec z_l, \vec z_m)} \\
            & \nonumber\qquad\text{(multiplication by positive constant)} \\
            &\qquad= \ARGMIN \sums{1 \le l < m \le n}{} G(\vec z_l, \vec z_m)
        \end{align}
        The result then follows directly from~\cite[Proposition 2]{wangUnderstandingContrastiveRepresentation2020}.
    \end{proof}

    \section{Experiments}
        \begin{figure}[t]
    \centering
    \begin{subfigure}{0.99\columnwidth}
        \centering
        \includegraphics[width=0.7\textwidth]{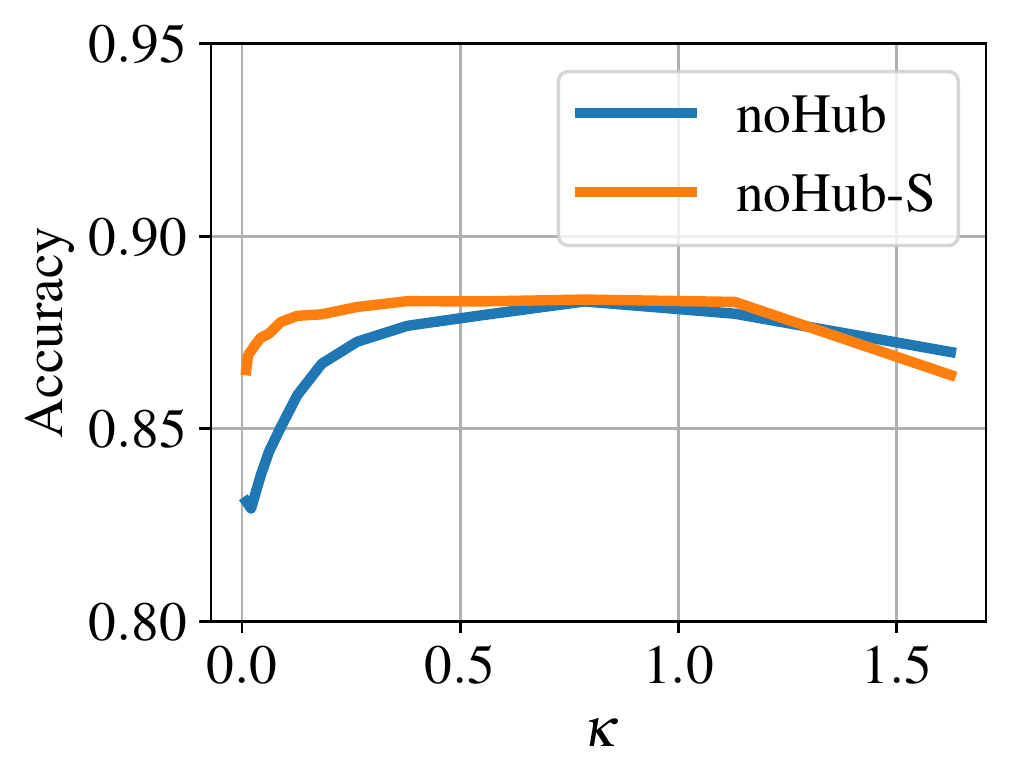}
    \end{subfigure}
    \begin{subfigure}{0.99\columnwidth}
        \centering
        \includegraphics[width=0.7\textwidth]{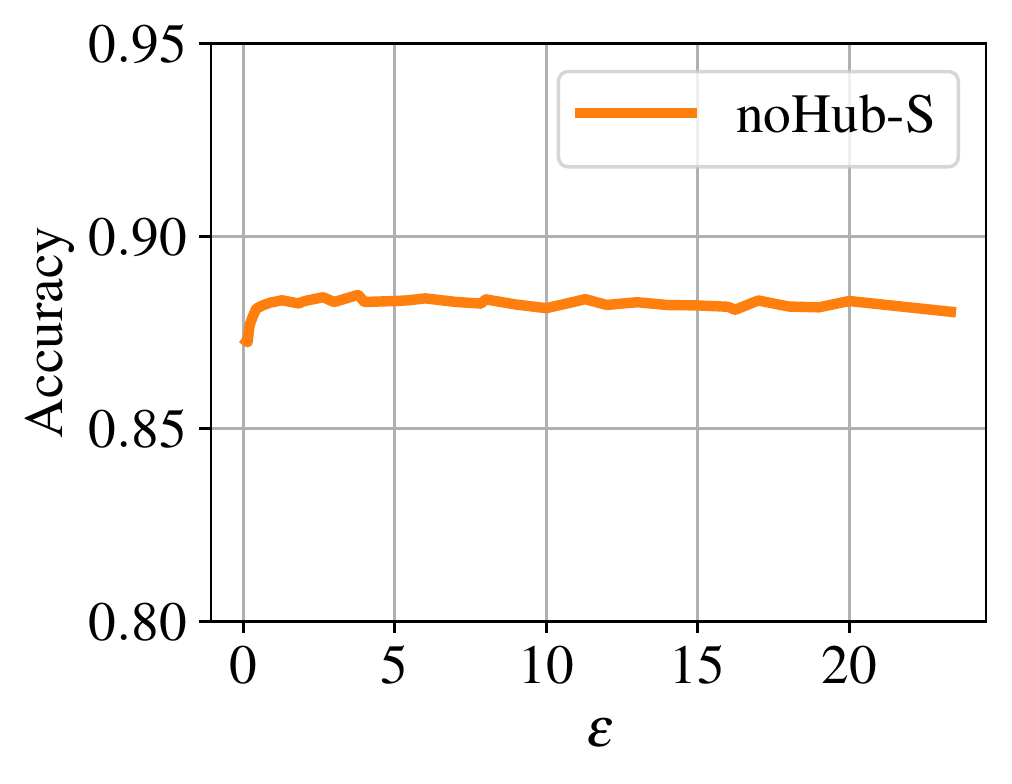}
    \end{subfigure}
    \caption{Accuracy for different values for \( \kappa \) and \( \epsilon \). Neither \method nor \methodS are particularly sensitive the the choice of these parameters.}
    \label{fig:kappaEpsilon}
\end{figure}

\subsection{Implementation details}
    This section covers the additional implementation details not provided in the main paper.
    These include the initialization of the embeddings in Algorithm 1, hyperparameters, additional transformations wherever required, the architectures used, and a note on accessing the code, datasets, and dataset splits.

    \customparagraph{Initialization and normalization}
        Instead of a random initialization of our embeddings $Z_0$, we follow a PCA based initialization, as in~\cite{Maaten}.
        The weights are computed using the cached features from the base classes, the support and query features are then transformed using these weights.
        This procedure is also fast as we do not need to compute the PCA weights on every episode.
        To ensure that the resulting features lie on the hypersphere after each gradient update in \method and \methodS, we re-normalize the embeddings using L2 normalization.
    
    \customparagraph{Hyperparameters}
        \method and \methodS have the following hyperparameters.
        \begin{itemize}
            \item \( P \) -- perplexity for computing the \( \kappa_i \).
            \item \( T \) -- number of iterations.
            \item \( \alpha \) -- tradeoff parameter in the loss (\( \Lfinal = \alpha\Lalign + (1-\alpha) \Lunif \)).
            \item \( \eta \) -- learning rate for the Adam optimizer.
            \item \( \kappa \) -- concentration parameter for the embeddings.
            \item \( \epsilon \) -- exaggeration of similarities between supports from different classes.
            \item \( d \) -- dimensionality of embeddings.
        \end{itemize}
        All hyperparameter values used in in \method and \methodS are given in Table~\ref{tab:hyperparameters}

        \begin{table*}
            \centering
            \small
            \begin{tabular}{ccccccccc}
\toprule
 &  & & \multicolumn{2}{c}{mini} & \multicolumn{2}{c}{tiered} & \multicolumn{2}{c}{CUB} \\
 Arch.\ & Param.\ & Method & 1-shot & 5-shot & 1-shot & 5-shot & 1-shot & 5-shot \\
\cmidrule(lr){1-3}\cmidrule(lr){4-5}\cmidrule(lr){6-7}\cmidrule(lr){8-9}
\multirow[c]{14}{*}{ResNet18} & \multirow[c]{2}{*}{\( P \) } & \method & 45 & 45 & 45 & 45 & 45 & 45 \\
 &  & \methodS & 45 & 45 & 40 & 45 & 45 & 45 \\ \cmidrule(lr){2-9}
 & \multirow[c]{2}{*}{\( T \)} & \method & 50 & 50 & 50 & 50 & 50 & 50 \\
 &  & \methodS & 150 & 150 & 150 & 150 & 150 & 150 \\ \cmidrule(lr){2-9}
 & \multirow[c]{2}{*}{\( \alpha \)} & \method & 0.2 & 0.2 & 0.2 & 0.2 & 0.2 & 0.2 \\
 &  & \methodS & 0.3 & 0.2 & 0.2 & 0.2 & 0.3 & 0.2 \\ \cmidrule(lr){2-9}
 & \multirow[c]{2}{*}{\( \eta \)} & \method & 0.1 & 0.1 & 0.1 & 0.1 & 0.1 & 0.1 \\
 &  & \methodS & 0.1 & 0.1 & 0.1 & 0.1 & 0.1 & 0.1 \\ \cmidrule(lr){2-9}
 & \multirow[c]{2}{*}{\( \kappa \)} & \method & 0.5 & 0.5 & 0.5 & 0.5 & 0.5 & 0.5 \\
 &  & \methodS & 0.5 & 0.5 & 0.5 & 0.5 & 0.5 & 0.5 \\ \cmidrule(lr){2-9}
 & \multirow[c]{2}{*}{\( \varepsilon \)} & \method & -- & -- & -- & -- & -- & -- \\
 &  & \methodS & 8 & 8 & 5 & 8 & 8 & 8 \\ \cmidrule(lr){2-9}
 & \multirow[c]{2}{*}{\( d \)} & \method & 400 & 400 & 400 & 400 & 400 & 400 \\
 &  & \methodS & 400 & 400 & 400 & 400 & 400 & 400 \\ \cmidrule(lr){1-9}
\multirow[c]{14}{*}{WideRes28-10} & \multirow[c]{2}{*}{\( P \) } & \method & 45 & 45 & 45 & 45 & 45 & 45 \\
 &  & \methodS & 45 & 45 & 40 & 35 & 45 & 30 \\ \cmidrule(lr){2-9}
 & \multirow[c]{2}{*}{\( T \)} & \method & 50 & 50 & 50 & 50 & 50 & 50 \\
 &  & \methodS & 150 & 150 & 150 & 150 & 150 & 150 \\ \cmidrule(lr){2-9}
 & \multirow[c]{2}{*}{\( \alpha \)} & \method & 0.2 & 0.2 & 0.2 & 0.2 & 0.2 & 0.2 \\
 &  & \methodS & 0.3 & 0.2 & 0.2 & 0.1 & 0.3 & 0.1 \\ \cmidrule(lr){2-9}
 & \multirow[c]{2}{*}{\( \eta \)} & \method & 0.1 & 0.1 & 0.1 & 0.1 & 0.1 & 0.1 \\
 &  & \methodS & 0.1 & 0.1 & 0.1 & 0.1 & 0.1 & 0.1 \\ \cmidrule(lr){2-9}
 & \multirow[c]{2}{*}{\( \kappa \)} & \method & 0.5 & 0.5 & 0.5 & 0.5 & 0.5 & 0.5 \\
 &  & \methodS & 0.5 & 0.5 & 0.5 & 0.2 & 0.5 & 0.2 \\ \cmidrule(lr){2-9}
 & \multirow[c]{2}{*}{\( \varepsilon \)} & \method & -- & -- & -- & -- & -- & -- \\
 &  & \methodS & 8 & 8 & 5 & 12 & 8 & 8 \\ \cmidrule(lr){2-9}
 & \multirow[c]{2}{*}{\( d \)} & \method & 400 & 400 & 400 & 400 & 400 & 400 \\
 &  & \methodS & 400 & 400 & 400 & 400 & 400 & 400 \\
\bottomrule
\end{tabular}
            \caption{Hyperparameter values used in our experiments.}
            \label{tab:hyperparameters}
        \end{table*}

    \customparagraph{Code}
        The code for our experiments is available at: \githubLink

    \customparagraph{Data splits}
        Details to access the datasets used with the requisite splits (both are consistent with~\cite{veilleuxRealisticEvaluationTransductive2021}) are available in the code repository.
    
    \customparagraph{Base feature extractors}
        \begin{itemize}
            \item \textbf{Resnet-18}: As in~\cite{boudiafTransductiveInformationMaximization2020,veilleuxRealisticEvaluationTransductive2021}, we use the  weights from~\cite{veilleuxRealisticEvaluationTransductive2021}.
                The model is trained using a cross-entropy loss on the base classes.
            \item \textbf{WideRes28-10}: Following~\cite{manglaChartingRightManifold2020,zhuEASEUnsupervisedDiscriminant2022}, we use the weights from~\cite{manglaChartingRightManifold2020}.
                The model is pre-trained using a combination of cross-entropy and rotation prediction~\cite{gidarisUnsupervisedRepresentationLearning2018}, and then fine-tuned with Manifold Mixup~\cite{vermaManifoldMixupBetter2019}.
        \end{itemize}

\subsection{Results}
    \customparagraph{FSL performance}
        The complete lists of accuracies and hubness metrics for all embeddings, classifiers, and feature extractors, are given in Tables~\ref{tab:main-tim-resnet18-1},~\ref{tab:main-s2m2-wrn-s2m2-1},~\ref{tab:main-tim-resnet18-5}, and~\ref{tab:main-s2m2-wrn-s2m2-5}.
        The exhaustive results in these tables form the basis of Table 1, Table 2 and Table 3 in the main text.
        The two proposed approaches consistently outperform prior embeddings across several classifiers, feature extractors and datasets.

    \customparagraph{Effect of the \( \kappa \) and \( \epsilon \) hyperparameters}
        The plots in Figure~\ref{fig:kappaEpsilon} show accuracy on \textit{tiered} \(5\)-shot with SIAMESE for increasing \( \kappa \) and \( \varepsilon \).
        Neither method is particularly sensitive to the choice of \( \kappa \) and \( \varepsilon \), and \methodS is less sensitive to variations in \( \kappa \), than \method.
        Choosing \( \kappa \in [0.5, 1] \) and \(\epsilon \in [3, 20] \) will result in high classification accuracy

    \begin{table*}
        {\scriptsize\centering\begin{tabular}{llllllllllll}
\toprule
 &  &  & \multicolumn{3}{c}{mini} & \multicolumn{3}{c}{tiered} & \multicolumn{3}{c}{CUB} \\
 &  &  & Acc & Skew & Hub.~Occ. & Acc & Skew & Hub.~Occ. & Acc & Skew & Hub.~Occ. \\
Arch. & Clf. & Emb. &  &  &  &  &  &  &  &  &  \\
\midrule
\multirow[c]{54}{*}{\rotatebox{90}{ResNet18}} & \multirow[c]{9}{*}{\rotatebox{90}{ILPC}} & None & \MTCWITHCONF{64.07}{0.28} & \MTCWITHCONF{1.411}{0.01} & \MTCWITHCONF{0.408}{0.001} & \MTCWITHCONF{75.5}{0.28} & \MTCWITHCONF{1.213}{0.009} & \MTCWITHCONF{0.41}{0.001} & \MTCWITHCONF{76.06}{0.27} & \MTCWITHCONF{0.886}{0.006} & \MTCWITHCONF{0.34}{0.001} \\
 &  & L2 & \MTCWITHCONF{69.28}{0.27} & \MTCWITHCONF{0.966}{0.007} & \MTCWITHCONF{0.298}{0.001} & \MTCWITHCONF{77.84}{0.28} & \MTCWITHCONF{0.811}{0.007} & \MTCWITHCONF{0.267}{0.001} & \MTCWITHCONF{79.91}{0.26} & \MTCWITHCONF{0.688}{0.006} & \MTCWITHCONF{0.236}{0.001} \\
 &  & CL2 & \MTCWITHCONF{71.48}{0.27} & \MTCWITHCONF{0.661}{0.005} & \MTCWITHCONF{0.229}{0.001} & \MTCWITHCONF{79.8}{0.27} & \MTCWITHCONF{0.679}{0.006} & \MTCWITHCONF{0.249}{0.001} & \MTCWITHCONF{80.97}{0.26} & \MTCWITHCONF{0.553}{0.005} & \MTCWITHCONF{0.203}{0.001} \\
 &  & ZN & \MTCWITHCONF{71.48}{0.27} & \MTCWITHCONF{0.677}{0.006} & \MTCWITHCONF{0.227}{0.001} & \MTCWITHCONF{79.95}{0.27} & \MTCWITHCONF{0.694}{0.006} & \MTCWITHCONF{0.263}{0.001} & \MTCWITHCONF{81.49}{0.25} & \MTCWITHCONF{0.57}{0.005} & \MTCWITHCONF{0.217}{0.001} \\
 &  & ReRep & \MTCWITHCONF{65.49}{0.28} & \MTCWITHCONF{3.688}{0.007} & \MTCWITHCONF{0.559}{0.001} & \MTCWITHCONF{76.75}{0.28} & \MTCWITHCONF{3.61}{0.01} & \MTCWITHCONF{0.55}{0.001} & \MTCWITHCONF{77.73}{0.26} & \MTCWITHCONF{3.563}{0.007} & \MTCWITHCONF{0.512}{0.001} \\
 &  & EASE & \MTCWITHCONF{71.79}{0.28} & \MTCWITHCONF{0.515}{0.005} & \MTCWITHCONF{0.157}{0.001} & \MTCWITHCONF{80.2}{0.27} & \MTCWITHCONF{0.48}{0.005} & \MTCWITHCONF{0.158}{0.001} & \MTCWITHCONF{81.88}{0.25} & \MTCWITHCONF{0.463}{0.004} & \SECONDBEST{\MTCWITHCONF{0.153}{0.001}} \\
 &  & TCPR & \MTCWITHCONF{71.77}{0.28} & \MTCWITHCONF{0.647}{0.005} & \MTCWITHCONF{0.223}{0.001} & \MTCWITHCONF{80.01}{0.28} & \MTCWITHCONF{0.652}{0.006} & \MTCWITHCONF{0.249}{0.001} & \MTCWITHCONF{81.75}{0.25} & \MTCWITHCONF{0.534}{0.004} & \MTCWITHCONF{0.203}{0.001} \\
 &  & noHub & \SECONDBEST{\MTCWITHCONF{73.18}{0.28}} & \SECONDBEST{\MTCWITHCONF{0.308}{0.005}} & \BEST{\MTCWITHCONF{0.094}{0.001}} & \SECONDBEST{\MTCWITHCONF{80.76}{0.28}} & \SECONDBEST{\MTCWITHCONF{0.296}{0.004}} & \BEST{\MTCWITHCONF{0.101}{0.001}} & \SECONDBEST{\MTCWITHCONF{82.74}{0.26}} & \SECONDBEST{\MTCWITHCONF{0.32}{0.004}} & \BEST{\MTCWITHCONF{0.112}{0.001}} \\
 &  & noHub-S & \BEST{\MTCWITHCONF{74.02}{0.28}} & \BEST{\MTCWITHCONF{0.276}{0.004}} & \SECONDBEST{\MTCWITHCONF{0.13}{0.001}} & \BEST{\MTCWITHCONF{81.34}{0.27}} & \BEST{\MTCWITHCONF{0.281}{0.004}} & \SECONDBEST{\MTCWITHCONF{0.127}{0.001}} & \BEST{\MTCWITHCONF{83.92}{0.25}} & \BEST{\MTCWITHCONF{0.296}{0.003}} & \MTCWITHCONF{0.163}{0.001} \\
\cline{2-12}
 & \multirow[c]{9}{*}{\rotatebox{90}{LaplacianShot}} & None & \MTCWITHCONF{68.92}{0.23} & \MTCWITHCONF{1.341}{0.009} & \MTCWITHCONF{0.408}{0.001} & \MTCWITHCONF{76.43}{0.25} & \MTCWITHCONF{1.214}{0.009} & \MTCWITHCONF{0.41}{0.001} & \MTCWITHCONF{79.17}{0.23} & \MTCWITHCONF{0.887}{0.006} & \MTCWITHCONF{0.34}{0.001} \\
 &  & L2 & \MTCWITHCONF{69.3}{0.23} & \MTCWITHCONF{0.945}{0.007} & \MTCWITHCONF{0.302}{0.001} & \MTCWITHCONF{77.2}{0.25} & \MTCWITHCONF{0.808}{0.007} & \MTCWITHCONF{0.265}{0.001} & \MTCWITHCONF{79.65}{0.23} & \MTCWITHCONF{0.682}{0.006} & \MTCWITHCONF{0.236}{0.001} \\
 &  & CL2 & \MTCWITHCONF{70.68}{0.23} & \MTCWITHCONF{0.661}{0.005} & \MTCWITHCONF{0.231}{0.001} & \MTCWITHCONF{77.98}{0.24} & \MTCWITHCONF{0.689}{0.006} & \MTCWITHCONF{0.248}{0.001} & \MTCWITHCONF{79.99}{0.22} & \MTCWITHCONF{0.547}{0.005} & \MTCWITHCONF{0.201}{0.001} \\
 &  & ZN & \MTCWITHCONF{70.51}{0.23} & \MTCWITHCONF{0.688}{0.006} & \MTCWITHCONF{0.233}{0.001} & \MTCWITHCONF{77.51}{0.24} & \MTCWITHCONF{0.697}{0.006} & \MTCWITHCONF{0.264}{0.001} & \MTCWITHCONF{79.86}{0.22} & \MTCWITHCONF{0.564}{0.005} & \MTCWITHCONF{0.217}{0.001} \\
 &  & ReRep & \MTCWITHCONF{72.75}{0.24} & \MTCWITHCONF{3.653}{0.007} & \MTCWITHCONF{0.548}{0.001} & \MTCWITHCONF{78.95}{0.25} & \MTCWITHCONF{3.605}{0.011} & \MTCWITHCONF{0.549}{0.001} & \MTCWITHCONF{82.38}{0.22} & \MTCWITHCONF{3.565}{0.007} & \MTCWITHCONF{0.512}{0.001} \\
 &  & EASE & \MTCWITHCONF{72.19}{0.23} & \MTCWITHCONF{0.526}{0.005} & \MTCWITHCONF{0.161}{0.001} & \MTCWITHCONF{79.34}{0.24} & \MTCWITHCONF{0.481}{0.005} & \MTCWITHCONF{0.158}{0.001} & \MTCWITHCONF{81.5}{0.22} & \MTCWITHCONF{0.459}{0.004} & \SECONDBEST{\MTCWITHCONF{0.152}{0.001}} \\
 &  & TCPR & \MTCWITHCONF{71.79}{0.23} & \MTCWITHCONF{0.654}{0.005} & \MTCWITHCONF{0.228}{0.001} & \MTCWITHCONF{78.41}{0.24} & \MTCWITHCONF{0.651}{0.005} & \MTCWITHCONF{0.249}{0.001} & \MTCWITHCONF{80.86}{0.22} & \MTCWITHCONF{0.537}{0.004} & \MTCWITHCONF{0.203}{0.001} \\
 &  & noHub & \SECONDBEST{\MTCWITHCONF{73.63}{0.25}} & \SECONDBEST{\MTCWITHCONF{0.305}{0.005}} & \BEST{\MTCWITHCONF{0.094}{0.001}} & \BEST{\MTCWITHCONF{80.84}{0.25}} & \SECONDBEST{\MTCWITHCONF{0.3}{0.005}} & \BEST{\MTCWITHCONF{0.101}{0.001}} & \SECONDBEST{\MTCWITHCONF{83.23}{0.22}} & \SECONDBEST{\MTCWITHCONF{0.318}{0.004}} & \BEST{\MTCWITHCONF{0.112}{0.001}} \\
 &  & noHub-S & \BEST{\MTCWITHCONF{73.79}{0.25}} & \BEST{\MTCWITHCONF{0.276}{0.004}} & \SECONDBEST{\MTCWITHCONF{0.13}{0.001}} & \SECONDBEST{\MTCWITHCONF{80.83}{0.25}} & \BEST{\MTCWITHCONF{0.275}{0.004}} & \SECONDBEST{\MTCWITHCONF{0.125}{0.001}} & \BEST{\MTCWITHCONF{83.47}{0.22}} & \BEST{\MTCWITHCONF{0.299}{0.003}} & \MTCWITHCONF{0.164}{0.001} \\
\cline{2-12}
 & \multirow[c]{9}{*}{\rotatebox{90}{ObliqueManifold}} & None & \MTCWITHCONF{68.89}{0.23} & \MTCWITHCONF{1.412}{0.01} & \MTCWITHCONF{0.407}{0.001} & \MTCWITHCONF{77.07}{0.25} & \MTCWITHCONF{1.21}{0.009} & \MTCWITHCONF{0.409}{0.001} & \MTCWITHCONF{79.4}{0.22} & \MTCWITHCONF{0.887}{0.006} & \MTCWITHCONF{0.341}{0.001} \\
 &  & L2 & \MTCWITHCONF{68.92}{0.23} & \MTCWITHCONF{0.964}{0.007} & \MTCWITHCONF{0.299}{0.001} & \MTCWITHCONF{77.17}{0.25} & \MTCWITHCONF{0.806}{0.007} & \MTCWITHCONF{0.266}{0.001} & \MTCWITHCONF{79.32}{0.22} & \MTCWITHCONF{0.691}{0.006} & \MTCWITHCONF{0.237}{0.001} \\
 &  & CL2 & \MTCWITHCONF{70.86}{0.24} & \MTCWITHCONF{0.66}{0.005} & \MTCWITHCONF{0.228}{0.001} & \MTCWITHCONF{78.92}{0.25} & \MTCWITHCONF{0.68}{0.006} & \MTCWITHCONF{0.249}{0.001} & \MTCWITHCONF{80.29}{0.23} & \MTCWITHCONF{0.547}{0.005} & \MTCWITHCONF{0.202}{0.001} \\
 &  & ZN & \MTCWITHCONF{71.25}{0.24} & \MTCWITHCONF{0.679}{0.006} & \MTCWITHCONF{0.227}{0.001} & \MTCWITHCONF{79.54}{0.25} & \MTCWITHCONF{0.697}{0.006} & \MTCWITHCONF{0.263}{0.001} & \MTCWITHCONF{81.38}{0.23} & \MTCWITHCONF{0.562}{0.005} & \MTCWITHCONF{0.216}{0.001} \\
 &  & ReRep & \SECONDBEST{\MTCWITHCONF{73.3}{0.25}} & \MTCWITHCONF{3.682}{0.007} & \MTCWITHCONF{0.559}{0.001} & \SECONDBEST{\MTCWITHCONF{80.26}{0.26}} & \MTCWITHCONF{3.608}{0.01} & \MTCWITHCONF{0.551}{0.001} & \BEST{\MTCWITHCONF{83.84}{0.23}} & \MTCWITHCONF{3.559}{0.008} & \MTCWITHCONF{0.513}{0.001} \\
 &  & EASE & \MTCWITHCONF{68.4}{0.24} & \MTCWITHCONF{0.516}{0.005} & \MTCWITHCONF{0.156}{0.001} & \MTCWITHCONF{77.33}{0.25} & \MTCWITHCONF{0.477}{0.004} & \MTCWITHCONF{0.158}{0.001} & \MTCWITHCONF{79.03}{0.24} & \MTCWITHCONF{0.461}{0.004} & \SECONDBEST{\MTCWITHCONF{0.152}{0.001}} \\
 &  & TCPR & \MTCWITHCONF{70.74}{0.24} & \MTCWITHCONF{0.646}{0.005} & \MTCWITHCONF{0.223}{0.001} & \MTCWITHCONF{78.92}{0.25} & \MTCWITHCONF{0.649}{0.005} & \MTCWITHCONF{0.249}{0.001} & \MTCWITHCONF{80.18}{0.23} & \MTCWITHCONF{0.537}{0.004} & \MTCWITHCONF{0.204}{0.001} \\
 &  & noHub & \MTCWITHCONF{72.55}{0.26} & \SECONDBEST{\MTCWITHCONF{0.309}{0.005}} & \BEST{\MTCWITHCONF{0.095}{0.001}} & \MTCWITHCONF{79.97}{0.26} & \SECONDBEST{\MTCWITHCONF{0.302}{0.005}} & \BEST{\MTCWITHCONF{0.102}{0.001}} & \MTCWITHCONF{82.21}{0.24} & \SECONDBEST{\MTCWITHCONF{0.319}{0.004}} & \BEST{\MTCWITHCONF{0.112}{0.001}} \\
 &  & noHub-S & \BEST{\MTCWITHCONF{74.24}{0.26}} & \BEST{\MTCWITHCONF{0.274}{0.004}} & \SECONDBEST{\MTCWITHCONF{0.13}{0.001}} & \BEST{\MTCWITHCONF{80.84}{0.26}} & \BEST{\MTCWITHCONF{0.282}{0.004}} & \SECONDBEST{\MTCWITHCONF{0.127}{0.001}} & \SECONDBEST{\MTCWITHCONF{83.67}{0.23}} & \BEST{\MTCWITHCONF{0.294}{0.003}} & \MTCWITHCONF{0.162}{0.001} \\
\cline{2-12}
 & \multirow[c]{9}{*}{\rotatebox{90}{SIAMESE}} & None & \MTCWITHCONF{20.0}{0.0} & \MTCWITHCONF{1.345}{0.009} & \MTCWITHCONF{0.407}{0.001} & \MTCWITHCONF{20.0}{0.0} & \MTCWITHCONF{1.222}{0.009} & \MTCWITHCONF{0.41}{0.001} & \MTCWITHCONF{20.0}{0.0} & \MTCWITHCONF{0.885}{0.006} & \MTCWITHCONF{0.339}{0.001} \\
 &  & L2 & \MTCWITHCONF{73.77}{0.24} & \MTCWITHCONF{0.949}{0.007} & \MTCWITHCONF{0.301}{0.001} & \MTCWITHCONF{80.46}{0.26} & \MTCWITHCONF{0.811}{0.007} & \MTCWITHCONF{0.265}{0.001} & \MTCWITHCONF{83.1}{0.23} & \MTCWITHCONF{0.691}{0.006} & \MTCWITHCONF{0.237}{0.001} \\
 &  & CL2 & \MTCWITHCONF{75.56}{0.26} & \MTCWITHCONF{0.666}{0.005} & \MTCWITHCONF{0.232}{0.001} & \MTCWITHCONF{82.1}{0.26} & \MTCWITHCONF{0.68}{0.006} & \MTCWITHCONF{0.248}{0.001} & \MTCWITHCONF{84.35}{0.24} & \MTCWITHCONF{0.549}{0.005} & \MTCWITHCONF{0.201}{0.001} \\
 &  & ZN & \MTCWITHCONF{20.0}{0.0} & \MTCWITHCONF{0.686}{0.006} & \MTCWITHCONF{0.232}{0.001} & \MTCWITHCONF{20.0}{0.0} & \MTCWITHCONF{0.69}{0.006} & \MTCWITHCONF{0.262}{0.001} & \MTCWITHCONF{20.0}{0.0} & \MTCWITHCONF{0.565}{0.005} & \MTCWITHCONF{0.217}{0.001} \\
 &  & ReRep & \MTCWITHCONF{20.0}{0.0} & \MTCWITHCONF{3.653}{0.007} & \MTCWITHCONF{0.549}{0.001} & \MTCWITHCONF{20.0}{0.0} & \MTCWITHCONF{3.616}{0.01} & \MTCWITHCONF{0.549}{0.001} & \MTCWITHCONF{20.0}{0.0} & \MTCWITHCONF{3.559}{0.007} & \MTCWITHCONF{0.512}{0.001} \\
 &  & EASE & \MTCWITHCONF{76.05}{0.27} & \MTCWITHCONF{0.529}{0.005} & \MTCWITHCONF{0.162}{0.001} & \MTCWITHCONF{82.57}{0.27} & \MTCWITHCONF{0.485}{0.005} & \MTCWITHCONF{0.159}{0.001} & \MTCWITHCONF{85.24}{0.24} & \MTCWITHCONF{0.464}{0.004} & \SECONDBEST{\MTCWITHCONF{0.153}{0.001}} \\
 &  & TCPR & \MTCWITHCONF{75.99}{0.26} & \MTCWITHCONF{0.655}{0.005} & \MTCWITHCONF{0.227}{0.001} & \MTCWITHCONF{82.65}{0.26} & \MTCWITHCONF{0.651}{0.005} & \MTCWITHCONF{0.249}{0.001} & \MTCWITHCONF{85.34}{0.23} & \MTCWITHCONF{0.535}{0.004} & \MTCWITHCONF{0.203}{0.001} \\
 &  & noHub & \SECONDBEST{\MTCWITHCONF{76.65}{0.28}} & \SECONDBEST{\MTCWITHCONF{0.308}{0.005}} & \BEST{\MTCWITHCONF{0.095}{0.001}} & \SECONDBEST{\MTCWITHCONF{82.94}{0.27}} & \SECONDBEST{\MTCWITHCONF{0.303}{0.004}} & \BEST{\MTCWITHCONF{0.101}{0.001}} & \BEST{\MTCWITHCONF{85.88}{0.24}} & \SECONDBEST{\MTCWITHCONF{0.322}{0.004}} & \BEST{\MTCWITHCONF{0.112}{0.001}} \\
 &  & noHub-S & \BEST{\MTCWITHCONF{76.68}{0.28}} & \BEST{\MTCWITHCONF{0.275}{0.004}} & \SECONDBEST{\MTCWITHCONF{0.13}{0.001}} & \BEST{\MTCWITHCONF{83.09}{0.27}} & \BEST{\MTCWITHCONF{0.281}{0.004}} & \SECONDBEST{\MTCWITHCONF{0.128}{0.001}} & \SECONDBEST{\MTCWITHCONF{85.81}{0.24}} & \BEST{\MTCWITHCONF{0.295}{0.003}} & \MTCWITHCONF{0.161}{0.001} \\
\cline{2-12}
 & \multirow[c]{9}{*}{\rotatebox{90}{SimpleShot}} & None & \MTCWITHCONF{56.14}{0.2} & \MTCWITHCONF{1.349}{0.009} & \MTCWITHCONF{0.407}{0.001} & \MTCWITHCONF{63.34}{0.23} & \MTCWITHCONF{1.211}{0.009} & \MTCWITHCONF{0.408}{0.001} & \MTCWITHCONF{64.02}{0.21} & \MTCWITHCONF{0.887}{0.006} & \MTCWITHCONF{0.341}{0.001} \\
 &  & L2 & \MTCWITHCONF{60.15}{0.2} & \MTCWITHCONF{0.937}{0.007} & \MTCWITHCONF{0.301}{0.001} & \MTCWITHCONF{68.02}{0.23} & \MTCWITHCONF{0.812}{0.007} & \MTCWITHCONF{0.265}{0.001} & \MTCWITHCONF{69.05}{0.21} & \MTCWITHCONF{0.691}{0.006} & \MTCWITHCONF{0.236}{0.001} \\
 &  & CL2 & \MTCWITHCONF{63.1}{0.2} & \MTCWITHCONF{0.667}{0.005} & \MTCWITHCONF{0.233}{0.001} & \MTCWITHCONF{69.76}{0.22} & \MTCWITHCONF{0.679}{0.006} & \MTCWITHCONF{0.249}{0.001} & \MTCWITHCONF{70.16}{0.2} & \MTCWITHCONF{0.549}{0.005} & \MTCWITHCONF{0.201}{0.001} \\
 &  & ZN & \MTCWITHCONF{63.39}{0.2} & \MTCWITHCONF{0.68}{0.005} & \MTCWITHCONF{0.231}{0.001} & \MTCWITHCONF{70.04}{0.22} & \MTCWITHCONF{0.698}{0.006} & \MTCWITHCONF{0.264}{0.001} & \MTCWITHCONF{71.03}{0.2} & \MTCWITHCONF{0.564}{0.005} & \MTCWITHCONF{0.216}{0.001} \\
 &  & ReRep & \MTCWITHCONF{66.66}{0.22} & \MTCWITHCONF{3.655}{0.007} & \MTCWITHCONF{0.548}{0.001} & \MTCWITHCONF{73.23}{0.23} & \MTCWITHCONF{3.604}{0.01} & \MTCWITHCONF{0.549}{0.001} & \MTCWITHCONF{76.8}{0.21} & \MTCWITHCONF{3.565}{0.007} & \MTCWITHCONF{0.513}{0.001} \\
 &  & EASE & \MTCWITHCONF{64.0}{0.2} & \MTCWITHCONF{0.521}{0.005} & \MTCWITHCONF{0.16}{0.001} & \MTCWITHCONF{71.0}{0.21} & \MTCWITHCONF{0.479}{0.005} & \MTCWITHCONF{0.158}{0.001} & \MTCWITHCONF{72.38}{0.2} & \MTCWITHCONF{0.466}{0.004} & \SECONDBEST{\MTCWITHCONF{0.153}{0.001}} \\
 &  & TCPR & \MTCWITHCONF{63.33}{0.2} & \MTCWITHCONF{0.651}{0.005} & \MTCWITHCONF{0.228}{0.001} & \MTCWITHCONF{69.82}{0.22} & \MTCWITHCONF{0.65}{0.005} & \MTCWITHCONF{0.25}{0.001} & \MTCWITHCONF{70.75}{0.2} & \MTCWITHCONF{0.532}{0.004} & \MTCWITHCONF{0.204}{0.001} \\
 &  & noHub & \SECONDBEST{\MTCWITHCONF{69.38}{0.22}} & \SECONDBEST{\MTCWITHCONF{0.315}{0.005}} & \BEST{\MTCWITHCONF{0.095}{0.001}} & \SECONDBEST{\MTCWITHCONF{76.72}{0.23}} & \SECONDBEST{\MTCWITHCONF{0.303}{0.004}} & \BEST{\MTCWITHCONF{0.102}{0.001}} & \SECONDBEST{\MTCWITHCONF{78.21}{0.21}} & \SECONDBEST{\MTCWITHCONF{0.32}{0.004}} & \BEST{\MTCWITHCONF{0.112}{0.001}} \\
 &  & noHub-S & \BEST{\MTCWITHCONF{71.1}{0.22}} & \BEST{\MTCWITHCONF{0.276}{0.004}} & \SECONDBEST{\MTCWITHCONF{0.13}{0.001}} & \BEST{\MTCWITHCONF{78.35}{0.23}} & \BEST{\MTCWITHCONF{0.283}{0.004}} & \SECONDBEST{\MTCWITHCONF{0.127}{0.001}} & \BEST{\MTCWITHCONF{80.31}{0.21}} & \BEST{\MTCWITHCONF{0.296}{0.003}} & \MTCWITHCONF{0.162}{0.001} \\
\cline{2-12}
 & \multirow[c]{9}{*}{\rotatebox{90}{\( \alpha \)-TIM}} & None & \MTCWITHCONF{56.39}{0.2} & \MTCWITHCONF{1.342}{0.009} & \MTCWITHCONF{0.406}{0.001} & \MTCWITHCONF{63.32}{0.23} & \MTCWITHCONF{1.216}{0.009} & \MTCWITHCONF{0.411}{0.001} & \MTCWITHCONF{64.02}{0.22} & \MTCWITHCONF{0.886}{0.006} & \MTCWITHCONF{0.341}{0.001} \\
 &  & L2 & \MTCWITHCONF{67.91}{0.23} & \MTCWITHCONF{0.942}{0.007} & \MTCWITHCONF{0.301}{0.001} & \MTCWITHCONF{74.94}{0.24} & \MTCWITHCONF{0.814}{0.007} & \MTCWITHCONF{0.266}{0.001} & \MTCWITHCONF{77.49}{0.23} & \MTCWITHCONF{0.694}{0.006} & \MTCWITHCONF{0.236}{0.001} \\
 &  & CL2 & \MTCWITHCONF{65.68}{0.21} & \MTCWITHCONF{0.665}{0.005} & \MTCWITHCONF{0.232}{0.001} & \MTCWITHCONF{73.23}{0.23} & \MTCWITHCONF{0.681}{0.006} & \MTCWITHCONF{0.248}{0.001} & \MTCWITHCONF{73.79}{0.21} & \MTCWITHCONF{0.552}{0.005} & \MTCWITHCONF{0.202}{0.001} \\
 &  & ZN & \MTCWITHCONF{63.36}{0.2} & \MTCWITHCONF{0.682}{0.005} & \MTCWITHCONF{0.232}{0.001} & \MTCWITHCONF{70.19}{0.22} & \MTCWITHCONF{0.693}{0.006} & \MTCWITHCONF{0.263}{0.001} & \MTCWITHCONF{70.85}{0.21} & \MTCWITHCONF{0.566}{0.005} & \MTCWITHCONF{0.215}{0.001} \\
 &  & ReRep & \MTCWITHCONF{66.37}{0.22} & \MTCWITHCONF{3.656}{0.007} & \MTCWITHCONF{0.55}{0.001} & \MTCWITHCONF{73.24}{0.24} & \MTCWITHCONF{3.605}{0.011} & \MTCWITHCONF{0.55}{0.001} & \MTCWITHCONF{76.86}{0.21} & \MTCWITHCONF{3.555}{0.007} & \MTCWITHCONF{0.514}{0.001} \\
 &  & EASE & \MTCWITHCONF{65.32}{0.2} & \MTCWITHCONF{0.526}{0.005} & \MTCWITHCONF{0.163}{0.001} & \MTCWITHCONF{71.88}{0.22} & \MTCWITHCONF{0.477}{0.005} & \MTCWITHCONF{0.158}{0.001} & \MTCWITHCONF{73.03}{0.21} & \MTCWITHCONF{0.459}{0.004} & \SECONDBEST{\MTCWITHCONF{0.151}{0.001}} \\
 &  & TCPR & \MTCWITHCONF{66.19}{0.21} & \MTCWITHCONF{0.65}{0.005} & \MTCWITHCONF{0.227}{0.001} & \MTCWITHCONF{73.24}{0.23} & \MTCWITHCONF{0.649}{0.005} & \MTCWITHCONF{0.25}{0.001} & \MTCWITHCONF{74.07}{0.21} & \MTCWITHCONF{0.532}{0.004} & \MTCWITHCONF{0.203}{0.001} \\
 &  & noHub & \SECONDBEST{\MTCWITHCONF{70.08}{0.23}} & \SECONDBEST{\MTCWITHCONF{0.312}{0.005}} & \BEST{\MTCWITHCONF{0.094}{0.001}} & \SECONDBEST{\MTCWITHCONF{77.39}{0.24}} & \SECONDBEST{\MTCWITHCONF{0.304}{0.004}} & \BEST{\MTCWITHCONF{0.101}{0.001}} & \SECONDBEST{\MTCWITHCONF{79.19}{0.22}} & \SECONDBEST{\MTCWITHCONF{0.319}{0.004}} & \BEST{\MTCWITHCONF{0.112}{0.001}} \\
 &  & noHub-S & \BEST{\MTCWITHCONF{72.04}{0.23}} & \BEST{\MTCWITHCONF{0.273}{0.004}} & \SECONDBEST{\MTCWITHCONF{0.13}{0.001}} & \BEST{\MTCWITHCONF{79.13}{0.24}} & \BEST{\MTCWITHCONF{0.282}{0.004}} & \SECONDBEST{\MTCWITHCONF{0.126}{0.001}} & \BEST{\MTCWITHCONF{81.42}{0.22}} & \BEST{\MTCWITHCONF{0.296}{0.003}} & \MTCWITHCONF{0.161}{0.001} \\
\cline{1-12} \cline{2-12}
\bottomrule
\end{tabular}
}
        \caption{Resnet-18: 1-shot}
        \label{tab:main-tim-resnet18-1}
    \end{table*}
    \begin{table*}
        {\scriptsize\centering\begin{tabular}{llllllllllll}
\toprule
 &  &  & \multicolumn{3}{c}{mini} & \multicolumn{3}{c}{tiered} & \multicolumn{3}{c}{CUB} \\
 &  &  & Acc & Skew & Hub.~Occ. & Acc & Skew & Hub.~Occ. & Acc & Skew & Hub.~Occ. \\
Arch. & Clf. & Emb. &  &  &  &  &  &  &  &  &  \\
\midrule
\multirow[c]{54}{*}{\rotatebox{90}{WideRes28-10}} & \multirow[c]{9}{*}{\rotatebox{90}{ILPC}} & None & \MTCWITHCONF{71.27}{0.28} & \MTCWITHCONF{1.595}{0.01} & \MTCWITHCONF{0.46}{0.001} & \MTCWITHCONF{75.01}{0.28} & \MTCWITHCONF{1.807}{0.01} & \MTCWITHCONF{0.494}{0.001} & \MTCWITHCONF{89.75}{0.19} & \MTCWITHCONF{1.072}{0.009} & \MTCWITHCONF{0.367}{0.001} \\
 &  & L2 & \MTCWITHCONF{76.41}{0.26} & \MTCWITHCONF{0.773}{0.006} & \MTCWITHCONF{0.295}{0.001} & \MTCWITHCONF{78.25}{0.27} & \MTCWITHCONF{0.731}{0.006} & \MTCWITHCONF{0.274}{0.001} & \MTCWITHCONF{90.27}{0.2} & \MTCWITHCONF{0.473}{0.004} & \MTCWITHCONF{0.228}{0.001} \\
 &  & CL2 & \MTCWITHCONF{74.13}{0.27} & \MTCWITHCONF{0.993}{0.009} & \MTCWITHCONF{0.29}{0.001} & \MTCWITHCONF{78.2}{0.27} & \MTCWITHCONF{0.815}{0.006} & \MTCWITHCONF{0.306}{0.001} & \MTCWITHCONF{90.34}{0.2} & \MTCWITHCONF{0.524}{0.004} & \MTCWITHCONF{0.267}{0.001} \\
 &  & ZN & \MTCWITHCONF{77.76}{0.26} & \MTCWITHCONF{0.728}{0.005} & \MTCWITHCONF{0.287}{0.001} & \MTCWITHCONF{79.42}{0.27} & \MTCWITHCONF{0.776}{0.006} & \MTCWITHCONF{0.302}{0.001} & \MTCWITHCONF{90.21}{0.2} & \MTCWITHCONF{0.516}{0.004} & \MTCWITHCONF{0.263}{0.001} \\
 &  & ReRep & \MTCWITHCONF{62.51}{0.34} & \MTCWITHCONF{3.56}{0.002} & \MTCWITHCONF{0.704}{0.001} & \MTCWITHCONF{60.66}{0.37} & \MTCWITHCONF{3.55}{0.002} & \MTCWITHCONF{0.776}{0.001} & \MTCWITHCONF{87.44}{0.25} & \MTCWITHCONF{3.033}{0.008} & \MTCWITHCONF{0.472}{0.001} \\
 &  & EASE & \MTCWITHCONF{78.01}{0.26} & \MTCWITHCONF{0.47}{0.004} & \MTCWITHCONF{0.176}{0.001} & \MTCWITHCONF{79.64}{0.27} & \MTCWITHCONF{0.479}{0.004} & \MTCWITHCONF{0.175}{0.001} & \MTCWITHCONF{90.76}{0.19} & \MTCWITHCONF{0.437}{0.003} & \MTCWITHCONF{0.212}{0.001} \\
 &  & TCPR & \MTCWITHCONF{78.37}{0.26} & \MTCWITHCONF{0.584}{0.005} & \MTCWITHCONF{0.237}{0.001} & \MTCWITHCONF{79.55}{0.28} & \MTCWITHCONF{0.683}{0.006} & \MTCWITHCONF{0.265}{0.001} & \MTCWITHCONF{90.77}{0.19} & \MTCWITHCONF{0.476}{0.004} & \MTCWITHCONF{0.23}{0.001} \\
 &  & noHub & \SECONDBEST{\MTCWITHCONF{78.84}{0.27}} & \SECONDBEST{\MTCWITHCONF{0.293}{0.004}} & \BEST{\MTCWITHCONF{0.112}{0.001}} & \SECONDBEST{\MTCWITHCONF{80.75}{0.28}} & \SECONDBEST{\MTCWITHCONF{0.3}{0.004}} & \BEST{\MTCWITHCONF{0.112}{0.001}} & \SECONDBEST{\MTCWITHCONF{90.91}{0.2}} & \SECONDBEST{\MTCWITHCONF{0.189}{0.004}} & \BEST{\MTCWITHCONF{0.109}{0.001}} \\
 &  & noHub-S & \BEST{\MTCWITHCONF{79.77}{0.26}} & \BEST{\MTCWITHCONF{0.262}{0.004}} & \SECONDBEST{\MTCWITHCONF{0.148}{0.001}} & \BEST{\MTCWITHCONF{81.24}{0.27}} & \BEST{\MTCWITHCONF{0.278}{0.004}} & \SECONDBEST{\MTCWITHCONF{0.135}{0.001}} & \BEST{\MTCWITHCONF{91.28}{0.19}} & \BEST{\MTCWITHCONF{0.16}{0.004}} & \SECONDBEST{\MTCWITHCONF{0.13}{0.001}} \\
\cline{2-12}
 & \multirow[c]{9}{*}{\rotatebox{90}{LaplacianShot}} & None & \MTCWITHCONF{72.56}{0.23} & \MTCWITHCONF{1.599}{0.01} & \MTCWITHCONF{0.459}{0.001} & \MTCWITHCONF{75.58}{0.25} & \MTCWITHCONF{1.795}{0.01} & \MTCWITHCONF{0.495}{0.001} & \MTCWITHCONF{88.71}{0.19} & \MTCWITHCONF{1.071}{0.009} & \MTCWITHCONF{0.369}{0.001} \\
 &  & L2 & \MTCWITHCONF{75.18}{0.23} & \MTCWITHCONF{0.777}{0.006} & \MTCWITHCONF{0.296}{0.001} & \MTCWITHCONF{77.03}{0.24} & \MTCWITHCONF{0.732}{0.006} & \MTCWITHCONF{0.274}{0.001} & \MTCWITHCONF{89.73}{0.17} & \MTCWITHCONF{0.474}{0.004} & \MTCWITHCONF{0.229}{0.001} \\
 &  & CL2 & \MTCWITHCONF{71.29}{0.24} & \MTCWITHCONF{0.987}{0.009} & \MTCWITHCONF{0.29}{0.001} & \MTCWITHCONF{75.42}{0.25} & \MTCWITHCONF{0.819}{0.006} & \MTCWITHCONF{0.309}{0.001} & \MTCWITHCONF{89.61}{0.18} & \MTCWITHCONF{0.52}{0.004} & \MTCWITHCONF{0.268}{0.001} \\
 &  & ZN & \MTCWITHCONF{75.18}{0.22} & \MTCWITHCONF{0.724}{0.005} & \MTCWITHCONF{0.286}{0.001} & \MTCWITHCONF{77.0}{0.24} & \MTCWITHCONF{0.768}{0.006} & \MTCWITHCONF{0.301}{0.001} & \MTCWITHCONF{89.22}{0.18} & \MTCWITHCONF{0.517}{0.004} & \MTCWITHCONF{0.263}{0.001} \\
 &  & ReRep & \MTCWITHCONF{75.25}{0.22} & \MTCWITHCONF{3.562}{0.002} & \MTCWITHCONF{0.704}{0.001} & \MTCWITHCONF{77.12}{0.24} & \MTCWITHCONF{3.548}{0.002} & \MTCWITHCONF{0.776}{0.001} & \MTCWITHCONF{88.98}{0.18} & \MTCWITHCONF{3.024}{0.008} & \MTCWITHCONF{0.47}{0.001} \\
 &  & EASE & \MTCWITHCONF{77.29}{0.22} & \MTCWITHCONF{0.473}{0.004} & \MTCWITHCONF{0.177}{0.001} & \MTCWITHCONF{78.97}{0.24} & \MTCWITHCONF{0.475}{0.004} & \MTCWITHCONF{0.175}{0.001} & \MTCWITHCONF{90.06}{0.17} & \MTCWITHCONF{0.435}{0.003} & \MTCWITHCONF{0.213}{0.001} \\
 &  & TCPR & \MTCWITHCONF{76.77}{0.22} & \MTCWITHCONF{0.593}{0.005} & \MTCWITHCONF{0.236}{0.001} & \MTCWITHCONF{77.49}{0.24} & \MTCWITHCONF{0.686}{0.006} & \MTCWITHCONF{0.264}{0.001} & \MTCWITHCONF{89.42}{0.17} & \MTCWITHCONF{0.475}{0.004} & \MTCWITHCONF{0.231}{0.001} \\
 &  & noHub & \BEST{\MTCWITHCONF{79.13}{0.23}} & \SECONDBEST{\MTCWITHCONF{0.29}{0.004}} & \BEST{\MTCWITHCONF{0.111}{0.001}} & \SECONDBEST{\MTCWITHCONF{80.5}{0.25}} & \SECONDBEST{\MTCWITHCONF{0.302}{0.004}} & \BEST{\MTCWITHCONF{0.112}{0.001}} & \BEST{\MTCWITHCONF{90.73}{0.18}} & \SECONDBEST{\MTCWITHCONF{0.19}{0.004}} & \BEST{\MTCWITHCONF{0.109}{0.001}} \\
 &  & noHub-S & \SECONDBEST{\MTCWITHCONF{79.13}{0.23}} & \BEST{\MTCWITHCONF{0.259}{0.004}} & \SECONDBEST{\MTCWITHCONF{0.147}{0.001}} & \BEST{\MTCWITHCONF{80.59}{0.24}} & \BEST{\MTCWITHCONF{0.277}{0.004}} & \SECONDBEST{\MTCWITHCONF{0.135}{0.001}} & \SECONDBEST{\MTCWITHCONF{90.61}{0.17}} & \BEST{\MTCWITHCONF{0.164}{0.004}} & \SECONDBEST{\MTCWITHCONF{0.13}{0.001}} \\
\cline{2-12}
 & \multirow[c]{9}{*}{\rotatebox{90}{ObliqueManifold}} & None & \MTCWITHCONF{76.02}{0.22} & \MTCWITHCONF{1.599}{0.01} & \MTCWITHCONF{0.46}{0.001} & \MTCWITHCONF{77.75}{0.25} & \MTCWITHCONF{1.801}{0.01} & \MTCWITHCONF{0.494}{0.001} & \MTCWITHCONF{90.82}{0.18} & \MTCWITHCONF{1.07}{0.009} & \MTCWITHCONF{0.368}{0.001} \\
 &  & L2 & \MTCWITHCONF{76.11}{0.22} & \MTCWITHCONF{0.779}{0.006} & \MTCWITHCONF{0.295}{0.001} & \MTCWITHCONF{77.74}{0.25} & \MTCWITHCONF{0.731}{0.006} & \MTCWITHCONF{0.274}{0.001} & \MTCWITHCONF{90.89}{0.18} & \MTCWITHCONF{0.475}{0.004} & \MTCWITHCONF{0.228}{0.001} \\
 &  & CL2 & \MTCWITHCONF{74.43}{0.24} & \MTCWITHCONF{0.985}{0.009} & \MTCWITHCONF{0.289}{0.001} & \MTCWITHCONF{77.98}{0.25} & \MTCWITHCONF{0.816}{0.007} & \MTCWITHCONF{0.307}{0.001} & \MTCWITHCONF{90.6}{0.18} & \MTCWITHCONF{0.523}{0.004} & \MTCWITHCONF{0.267}{0.001} \\
 &  & ZN & \MTCWITHCONF{77.69}{0.23} & \MTCWITHCONF{0.724}{0.005} & \MTCWITHCONF{0.286}{0.001} & \MTCWITHCONF{79.32}{0.24} & \MTCWITHCONF{0.767}{0.006} & \MTCWITHCONF{0.301}{0.001} & \MTCWITHCONF{90.73}{0.18} & \MTCWITHCONF{0.519}{0.004} & \MTCWITHCONF{0.263}{0.001} \\
 &  & ReRep & \MTCWITHCONF{78.08}{0.23} & \MTCWITHCONF{3.56}{0.002} & \MTCWITHCONF{0.703}{0.001} & \MTCWITHCONF{79.46}{0.25} & \MTCWITHCONF{3.549}{0.002} & \MTCWITHCONF{0.777}{0.001} & \SECONDBEST{\MTCWITHCONF{91.16}{0.18}} & \MTCWITHCONF{3.032}{0.008} & \MTCWITHCONF{0.471}{0.001} \\
 &  & EASE & \MTCWITHCONF{74.77}{0.23} & \MTCWITHCONF{0.472}{0.004} & \MTCWITHCONF{0.178}{0.001} & \MTCWITHCONF{77.07}{0.25} & \MTCWITHCONF{0.473}{0.004} & \MTCWITHCONF{0.174}{0.001} & \MTCWITHCONF{89.2}{0.18} & \MTCWITHCONF{0.439}{0.003} & \MTCWITHCONF{0.212}{0.001} \\
 &  & TCPR & \MTCWITHCONF{77.39}{0.23} & \MTCWITHCONF{0.587}{0.005} & \MTCWITHCONF{0.236}{0.001} & \MTCWITHCONF{78.75}{0.24} & \MTCWITHCONF{0.687}{0.006} & \MTCWITHCONF{0.265}{0.001} & \MTCWITHCONF{89.93}{0.19} & \MTCWITHCONF{0.474}{0.004} & \MTCWITHCONF{0.23}{0.001} \\
 &  & noHub & \SECONDBEST{\MTCWITHCONF{78.44}{0.24}} & \SECONDBEST{\MTCWITHCONF{0.292}{0.004}} & \BEST{\MTCWITHCONF{0.112}{0.001}} & \SECONDBEST{\MTCWITHCONF{79.99}{0.26}} & \SECONDBEST{\MTCWITHCONF{0.302}{0.004}} & \BEST{\MTCWITHCONF{0.113}{0.001}} & \MTCWITHCONF{90.59}{0.19} & \SECONDBEST{\MTCWITHCONF{0.185}{0.004}} & \BEST{\MTCWITHCONF{0.108}{0.001}} \\
 &  & noHub-S & \BEST{\MTCWITHCONF{79.89}{0.24}} & \BEST{\MTCWITHCONF{0.259}{0.004}} & \SECONDBEST{\MTCWITHCONF{0.148}{0.001}} & \BEST{\MTCWITHCONF{80.67}{0.26}} & \BEST{\MTCWITHCONF{0.279}{0.004}} & \SECONDBEST{\MTCWITHCONF{0.137}{0.001}} & \BEST{\MTCWITHCONF{91.37}{0.18}} & \BEST{\MTCWITHCONF{0.162}{0.004}} & \SECONDBEST{\MTCWITHCONF{0.13}{0.001}} \\
\cline{2-12}
 & \multirow[c]{9}{*}{\rotatebox{90}{SIAMESE}} & None & \MTCWITHCONF{45.69}{0.31} & \MTCWITHCONF{1.594}{0.009} & \MTCWITHCONF{0.459}{0.001} & \MTCWITHCONF{75.29}{0.28} & \MTCWITHCONF{1.801}{0.01} & \MTCWITHCONF{0.495}{0.001} & \MTCWITHCONF{61.36}{0.55} & \MTCWITHCONF{1.074}{0.009} & \MTCWITHCONF{0.37}{0.001} \\
 &  & L2 & \MTCWITHCONF{80.2}{0.23} & \MTCWITHCONF{0.776}{0.006} & \MTCWITHCONF{0.296}{0.001} & \MTCWITHCONF{80.89}{0.26} & \MTCWITHCONF{0.735}{0.006} & \MTCWITHCONF{0.275}{0.001} & \MTCWITHCONF{91.98}{0.18} & \MTCWITHCONF{0.476}{0.004} & \MTCWITHCONF{0.23}{0.001} \\
 &  & CL2 & \MTCWITHCONF{75.23}{0.27} & \MTCWITHCONF{0.988}{0.009} & \MTCWITHCONF{0.289}{0.001} & \MTCWITHCONF{79.59}{0.27} & \MTCWITHCONF{0.82}{0.006} & \MTCWITHCONF{0.307}{0.001} & \MTCWITHCONF{92.17}{0.18} & \MTCWITHCONF{0.518}{0.004} & \MTCWITHCONF{0.266}{0.001} \\
 &  & ZN & \MTCWITHCONF{20.0}{0.0} & \MTCWITHCONF{0.726}{0.005} & \MTCWITHCONF{0.286}{0.001} & \MTCWITHCONF{20.0}{0.0} & \MTCWITHCONF{0.775}{0.006} & \MTCWITHCONF{0.302}{0.001} & \MTCWITHCONF{20.0}{0.0} & \MTCWITHCONF{0.517}{0.004} & \MTCWITHCONF{0.264}{0.001} \\
 &  & ReRep & \MTCWITHCONF{36.69}{0.28} & \MTCWITHCONF{3.561}{0.002} & \MTCWITHCONF{0.705}{0.001} & \MTCWITHCONF{67.41}{0.29} & \MTCWITHCONF{3.55}{0.002} & \MTCWITHCONF{0.776}{0.001} & \MTCWITHCONF{57.62}{0.56} & \MTCWITHCONF{3.027}{0.008} & \MTCWITHCONF{0.472}{0.001} \\
 &  & EASE & \MTCWITHCONF{81.19}{0.25} & \MTCWITHCONF{0.474}{0.004} & \MTCWITHCONF{0.178}{0.001} & \MTCWITHCONF{82.04}{0.26} & \MTCWITHCONF{0.476}{0.004} & \MTCWITHCONF{0.176}{0.001} & \MTCWITHCONF{91.99}{0.19} & \MTCWITHCONF{0.436}{0.003} & \MTCWITHCONF{0.213}{0.001} \\
 &  & TCPR & \MTCWITHCONF{81.27}{0.24} & \MTCWITHCONF{0.582}{0.005} & \MTCWITHCONF{0.236}{0.001} & \MTCWITHCONF{81.89}{0.26} & \MTCWITHCONF{0.681}{0.006} & \MTCWITHCONF{0.264}{0.001} & \MTCWITHCONF{91.91}{0.18} & \MTCWITHCONF{0.477}{0.004} & \MTCWITHCONF{0.232}{0.001} \\
 &  & noHub & \SECONDBEST{\MTCWITHCONF{81.97}{0.25}} & \SECONDBEST{\MTCWITHCONF{0.291}{0.004}} & \BEST{\MTCWITHCONF{0.111}{0.001}} & \SECONDBEST{\MTCWITHCONF{82.8}{0.27}} & \SECONDBEST{\MTCWITHCONF{0.298}{0.004}} & \BEST{\MTCWITHCONF{0.112}{0.001}} & \SECONDBEST{\MTCWITHCONF{92.53}{0.18}} & \SECONDBEST{\MTCWITHCONF{0.189}{0.004}} & \BEST{\MTCWITHCONF{0.109}{0.001}} \\
 &  & noHub-S & \BEST{\MTCWITHCONF{82.0}{0.26}} & \BEST{\MTCWITHCONF{0.258}{0.004}} & \SECONDBEST{\MTCWITHCONF{0.148}{0.001}} & \BEST{\MTCWITHCONF{82.85}{0.27}} & \BEST{\MTCWITHCONF{0.278}{0.004}} & \SECONDBEST{\MTCWITHCONF{0.137}{0.001}} & \BEST{\MTCWITHCONF{92.63}{0.18}} & \BEST{\MTCWITHCONF{0.159}{0.004}} & \SECONDBEST{\MTCWITHCONF{0.13}{0.001}} \\
\cline{2-12}
 & \multirow[c]{9}{*}{\rotatebox{90}{SimpleShot}} & None & \MTCWITHCONF{55.66}{0.21} & \MTCWITHCONF{1.6}{0.01} & \MTCWITHCONF{0.459}{0.001} & \MTCWITHCONF{54.71}{0.22} & \MTCWITHCONF{1.81}{0.01} & \MTCWITHCONF{0.494}{0.001} & \MTCWITHCONF{70.92}{0.23} & \MTCWITHCONF{1.073}{0.009} & \MTCWITHCONF{0.369}{0.001} \\
 &  & L2 & \MTCWITHCONF{65.78}{0.2} & \MTCWITHCONF{0.781}{0.006} & \MTCWITHCONF{0.296}{0.001} & \MTCWITHCONF{68.75}{0.22} & \MTCWITHCONF{0.737}{0.006} & \MTCWITHCONF{0.275}{0.001} & \MTCWITHCONF{82.85}{0.19} & \MTCWITHCONF{0.475}{0.004} & \MTCWITHCONF{0.228}{0.001} \\
 &  & CL2 & \MTCWITHCONF{64.33}{0.2} & \MTCWITHCONF{0.981}{0.009} & \MTCWITHCONF{0.288}{0.001} & \MTCWITHCONF{67.66}{0.22} & \MTCWITHCONF{0.817}{0.006} & \MTCWITHCONF{0.307}{0.001} & \MTCWITHCONF{82.8}{0.19} & \MTCWITHCONF{0.52}{0.004} & \MTCWITHCONF{0.267}{0.001} \\
 &  & ZN & \MTCWITHCONF{67.31}{0.2} & \MTCWITHCONF{0.73}{0.005} & \MTCWITHCONF{0.287}{0.001} & \MTCWITHCONF{69.14}{0.22} & \MTCWITHCONF{0.769}{0.006} & \MTCWITHCONF{0.302}{0.001} & \MTCWITHCONF{82.79}{0.19} & \MTCWITHCONF{0.517}{0.004} & \MTCWITHCONF{0.263}{0.001} \\
 &  & ReRep & \MTCWITHCONF{67.38}{0.2} & \MTCWITHCONF{3.56}{0.002} & \MTCWITHCONF{0.704}{0.001} & \MTCWITHCONF{70.17}{0.22} & \MTCWITHCONF{3.55}{0.002} & \MTCWITHCONF{0.777}{0.001} & \MTCWITHCONF{84.86}{0.19} & \MTCWITHCONF{3.026}{0.008} & \MTCWITHCONF{0.47}{0.001} \\
 &  & EASE & \MTCWITHCONF{68.62}{0.2} & \MTCWITHCONF{0.47}{0.004} & \MTCWITHCONF{0.177}{0.001} & \MTCWITHCONF{70.26}{0.21} & \MTCWITHCONF{0.477}{0.004} & \MTCWITHCONF{0.175}{0.001} & \MTCWITHCONF{84.14}{0.18} & \MTCWITHCONF{0.437}{0.003} & \MTCWITHCONF{0.213}{0.001} \\
 &  & TCPR & \MTCWITHCONF{68.45}{0.2} & \MTCWITHCONF{0.589}{0.005} & \MTCWITHCONF{0.236}{0.001} & \MTCWITHCONF{68.68}{0.22} & \MTCWITHCONF{0.685}{0.006} & \MTCWITHCONF{0.264}{0.001} & \MTCWITHCONF{82.28}{0.19} & \MTCWITHCONF{0.477}{0.004} & \MTCWITHCONF{0.231}{0.001} \\
 &  & noHub & \SECONDBEST{\MTCWITHCONF{75.06}{0.21}} & \SECONDBEST{\MTCWITHCONF{0.29}{0.004}} & \BEST{\MTCWITHCONF{0.111}{0.001}} & \SECONDBEST{\MTCWITHCONF{76.7}{0.23}} & \SECONDBEST{\MTCWITHCONF{0.301}{0.004}} & \BEST{\MTCWITHCONF{0.111}{0.001}} & \SECONDBEST{\MTCWITHCONF{88.06}{0.18}} & \SECONDBEST{\MTCWITHCONF{0.188}{0.004}} & \BEST{\MTCWITHCONF{0.108}{0.001}} \\
 &  & noHub-S & \BEST{\MTCWITHCONF{76.86}{0.21}} & \BEST{\MTCWITHCONF{0.258}{0.004}} & \SECONDBEST{\MTCWITHCONF{0.148}{0.001}} & \BEST{\MTCWITHCONF{78.4}{0.23}} & \BEST{\MTCWITHCONF{0.274}{0.004}} & \SECONDBEST{\MTCWITHCONF{0.135}{0.001}} & \BEST{\MTCWITHCONF{89.25}{0.18}} & \BEST{\MTCWITHCONF{0.162}{0.004}} & \SECONDBEST{\MTCWITHCONF{0.13}{0.001}} \\
\cline{2-12}
 & \multirow[c]{9}{*}{\rotatebox{90}{\( \alpha \)-TIM}} & None & \MTCWITHCONF{60.31}{0.2} & \MTCWITHCONF{1.603}{0.01} & \MTCWITHCONF{0.458}{0.001} & \MTCWITHCONF{69.42}{0.25} & \MTCWITHCONF{1.811}{0.01} & \MTCWITHCONF{0.494}{0.001} & \MTCWITHCONF{73.83}{0.21} & \MTCWITHCONF{1.072}{0.009} & \MTCWITHCONF{0.369}{0.001} \\
 &  & L2 & \MTCWITHCONF{72.11}{0.22} & \MTCWITHCONF{0.778}{0.006} & \MTCWITHCONF{0.295}{0.001} & \MTCWITHCONF{74.45}{0.23} & \MTCWITHCONF{0.73}{0.006} & \MTCWITHCONF{0.275}{0.001} & \MTCWITHCONF{85.96}{0.19} & \MTCWITHCONF{0.476}{0.004} & \MTCWITHCONF{0.229}{0.001} \\
 &  & CL2 & \MTCWITHCONF{68.5}{0.21} & \MTCWITHCONF{0.988}{0.009} & \MTCWITHCONF{0.29}{0.001} & \MTCWITHCONF{72.17}{0.23} & \MTCWITHCONF{0.811}{0.006} & \MTCWITHCONF{0.306}{0.001} & \MTCWITHCONF{85.6}{0.18} & \MTCWITHCONF{0.522}{0.004} & \MTCWITHCONF{0.267}{0.001} \\
 &  & ZN & \MTCWITHCONF{67.69}{0.2} & \MTCWITHCONF{0.73}{0.005} & \MTCWITHCONF{0.287}{0.001} & \MTCWITHCONF{68.94}{0.22} & \MTCWITHCONF{0.769}{0.006} & \MTCWITHCONF{0.302}{0.001} & \MTCWITHCONF{83.03}{0.19} & \MTCWITHCONF{0.518}{0.004} & \MTCWITHCONF{0.263}{0.001} \\
 &  & ReRep & \MTCWITHCONF{73.15}{0.23} & \MTCWITHCONF{3.56}{0.002} & \MTCWITHCONF{0.704}{0.001} & \MTCWITHCONF{76.19}{0.25} & \MTCWITHCONF{3.551}{0.002} & \MTCWITHCONF{0.778}{0.001} & \MTCWITHCONF{88.55}{0.18} & \MTCWITHCONF{3.027}{0.008} & \MTCWITHCONF{0.472}{0.001} \\
 &  & EASE & \MTCWITHCONF{69.83}{0.2} & \MTCWITHCONF{0.468}{0.004} & \MTCWITHCONF{0.176}{0.001} & \MTCWITHCONF{71.54}{0.22} & \MTCWITHCONF{0.481}{0.004} & \MTCWITHCONF{0.175}{0.001} & \MTCWITHCONF{84.9}{0.19} & \MTCWITHCONF{0.436}{0.003} & \MTCWITHCONF{0.213}{0.001} \\
 &  & TCPR & \MTCWITHCONF{71.6}{0.21} & \MTCWITHCONF{0.586}{0.005} & \MTCWITHCONF{0.237}{0.001} & \MTCWITHCONF{72.71}{0.22} & \MTCWITHCONF{0.689}{0.006} & \MTCWITHCONF{0.264}{0.001} & \MTCWITHCONF{84.99}{0.19} & \MTCWITHCONF{0.479}{0.004} & \MTCWITHCONF{0.231}{0.001} \\
 &  & noHub & \SECONDBEST{\MTCWITHCONF{75.87}{0.22}} & \SECONDBEST{\MTCWITHCONF{0.29}{0.004}} & \BEST{\MTCWITHCONF{0.111}{0.001}} & \SECONDBEST{\MTCWITHCONF{77.83}{0.23}} & \SECONDBEST{\MTCWITHCONF{0.302}{0.004}} & \BEST{\MTCWITHCONF{0.112}{0.001}} & \SECONDBEST{\MTCWITHCONF{88.7}{0.17}} & \SECONDBEST{\MTCWITHCONF{0.189}{0.004}} & \BEST{\MTCWITHCONF{0.108}{0.001}} \\
 &  & noHub-S & \BEST{\MTCWITHCONF{77.76}{0.22}} & \BEST{\MTCWITHCONF{0.259}{0.004}} & \SECONDBEST{\MTCWITHCONF{0.147}{0.001}} & \BEST{\MTCWITHCONF{79.04}{0.24}} & \BEST{\MTCWITHCONF{0.276}{0.004}} & \SECONDBEST{\MTCWITHCONF{0.136}{0.001}} & \BEST{\MTCWITHCONF{89.77}{0.17}} & \BEST{\MTCWITHCONF{0.163}{0.003}} & \SECONDBEST{\MTCWITHCONF{0.13}{0.001}} \\
\cline{1-12} \cline{2-12}
\bottomrule
\end{tabular}
}
        \caption{WideRes28-10: 1-shot}
        \label{tab:main-s2m2-wrn-s2m2-1}
    \end{table*}
    \begin{table*}
            {\scriptsize\centering\begin{tabular}{llllllllllll}
\toprule
 &  &  & \multicolumn{3}{c}{mini} & \multicolumn{3}{c}{tiered} & \multicolumn{3}{c}{CUB} \\
 &  &  & Acc & Skew & Hub.~Occ. & Acc & Skew & Hub.~Occ. & Acc & Skew & Hub.~Occ. \\
Arch. & Clf. & Emb. &  &  &  &  &  &  &  &  &  \\
\midrule
\multirow[c]{54}{*}{\rotatebox{90}{ResNet18}} & \multirow[c]{9}{*}{\rotatebox{90}{ILPC}} & None & \MTCWITHCONF{76.46}{0.18} & \MTCWITHCONF{1.503}{0.01} & \MTCWITHCONF{0.421}{0.001} & \MTCWITHCONF{84.46}{0.18} & \MTCWITHCONF{1.334}{0.008} & \MTCWITHCONF{0.433}{0.001} & \MTCWITHCONF{85.86}{0.14} & \MTCWITHCONF{0.981}{0.005} & \MTCWITHCONF{0.364}{0.001} \\
 &  & L2 & \MTCWITHCONF{80.9}{0.16} & \MTCWITHCONF{1.051}{0.007} & \MTCWITHCONF{0.314}{0.001} & \MTCWITHCONF{86.23}{0.17} & \MTCWITHCONF{0.912}{0.006} & \MTCWITHCONF{0.289}{0.001} & \MTCWITHCONF{88.03}{0.13} & \MTCWITHCONF{0.808}{0.005} & \MTCWITHCONF{0.264}{0.001} \\
 &  & CL2 & \MTCWITHCONF{81.64}{0.16} & \MTCWITHCONF{0.778}{0.005} & \MTCWITHCONF{0.262}{0.001} & \MTCWITHCONF{86.88}{0.17} & \MTCWITHCONF{0.823}{0.006} & \MTCWITHCONF{0.281}{0.001} & \MTCWITHCONF{88.44}{0.13} & \MTCWITHCONF{0.695}{0.005} & \MTCWITHCONF{0.235}{0.001} \\
 &  & ZN & \MTCWITHCONF{81.61}{0.16} & \MTCWITHCONF{0.793}{0.005} & \MTCWITHCONF{0.258}{0.001} & \SECONDBEST{\MTCWITHCONF{86.9}{0.17}} & \MTCWITHCONF{0.841}{0.006} & \MTCWITHCONF{0.297}{0.001} & \MTCWITHCONF{88.44}{0.12} & \MTCWITHCONF{0.717}{0.004} & \MTCWITHCONF{0.25}{0.001} \\
 &  & ReRep & \MTCWITHCONF{74.83}{0.19} & \MTCWITHCONF{1.623}{0.003} & \MTCWITHCONF{0.871}{0.001} & \MTCWITHCONF{83.96}{0.19} & \MTCWITHCONF{1.722}{0.004} & \MTCWITHCONF{0.873}{0.001} & \MTCWITHCONF{84.54}{0.15} & \MTCWITHCONF{1.432}{0.003} & \MTCWITHCONF{0.869}{0.001} \\
 &  & EASE & \MTCWITHCONF{81.75}{0.16} & \MTCWITHCONF{0.618}{0.005} & \MTCWITHCONF{0.182}{0.001} & \MTCWITHCONF{86.84}{0.17} & \MTCWITHCONF{0.593}{0.004} & \MTCWITHCONF{0.181}{0.001} & \SECONDBEST{\MTCWITHCONF{88.85}{0.12}} & \MTCWITHCONF{0.606}{0.004} & \MTCWITHCONF{0.186}{0.001} \\
 &  & TCPR & \MTCWITHCONF{81.76}{0.16} & \MTCWITHCONF{0.766}{0.005} & \MTCWITHCONF{0.254}{0.001} & \MTCWITHCONF{86.78}{0.17} & \MTCWITHCONF{0.801}{0.005} & \MTCWITHCONF{0.284}{0.001} & \MTCWITHCONF{88.69}{0.13} & \MTCWITHCONF{0.683}{0.004} & \MTCWITHCONF{0.237}{0.001} \\
 &  & noHub & \SECONDBEST{\MTCWITHCONF{82.09}{0.16}} & \BEST{\MTCWITHCONF{0.295}{0.004}} & \SECONDBEST{\MTCWITHCONF{0.097}{0.001}} & \MTCWITHCONF{86.81}{0.17} & \BEST{\MTCWITHCONF{0.289}{0.004}} & \SECONDBEST{\MTCWITHCONF{0.102}{0.001}} & \MTCWITHCONF{88.85}{0.13} & \BEST{\MTCWITHCONF{0.333}{0.004}} & \SECONDBEST{\MTCWITHCONF{0.12}{0.001}} \\
 &  & noHub-S & \BEST{\MTCWITHCONF{82.33}{0.16}} & \SECONDBEST{\MTCWITHCONF{0.488}{0.006}} & \BEST{\MTCWITHCONF{0.086}{0.001}} & \BEST{\MTCWITHCONF{87.05}{0.17}} & \SECONDBEST{\MTCWITHCONF{0.475}{0.006}} & \BEST{\MTCWITHCONF{0.091}{0.001}} & \BEST{\MTCWITHCONF{89.12}{0.13}} & \SECONDBEST{\MTCWITHCONF{0.438}{0.006}} & \BEST{\MTCWITHCONF{0.097}{0.001}} \\
\cline{2-12}
 & \multirow[c]{9}{*}{\rotatebox{90}{LaplacianShot}} & None & \MTCWITHCONF{81.97}{0.15} & \MTCWITHCONF{1.442}{0.009} & \MTCWITHCONF{0.422}{0.001} & \MTCWITHCONF{86.17}{0.16} & \MTCWITHCONF{1.336}{0.008} & \MTCWITHCONF{0.432}{0.001} & \MTCWITHCONF{88.58}{0.12} & \MTCWITHCONF{0.985}{0.005} & \MTCWITHCONF{0.365}{0.001} \\
 &  & L2 & \MTCWITHCONF{81.89}{0.14} & \MTCWITHCONF{1.035}{0.007} & \MTCWITHCONF{0.319}{0.001} & \MTCWITHCONF{86.19}{0.16} & \MTCWITHCONF{0.913}{0.006} & \MTCWITHCONF{0.289}{0.001} & \MTCWITHCONF{88.52}{0.11} & \MTCWITHCONF{0.811}{0.005} & \MTCWITHCONF{0.264}{0.001} \\
 &  & CL2 & \MTCWITHCONF{81.93}{0.14} & \MTCWITHCONF{0.786}{0.005} & \MTCWITHCONF{0.265}{0.001} & \MTCWITHCONF{86.16}{0.16} & \MTCWITHCONF{0.82}{0.006} & \MTCWITHCONF{0.282}{0.001} & \MTCWITHCONF{88.46}{0.12} & \MTCWITHCONF{0.7}{0.005} & \MTCWITHCONF{0.235}{0.001} \\
 &  & ZN & \MTCWITHCONF{82.57}{0.14} & \MTCWITHCONF{0.803}{0.005} & \MTCWITHCONF{0.263}{0.001} & \MTCWITHCONF{86.67}{0.16} & \MTCWITHCONF{0.838}{0.006} & \MTCWITHCONF{0.296}{0.001} & \MTCWITHCONF{88.88}{0.11} & \MTCWITHCONF{0.714}{0.004} & \MTCWITHCONF{0.25}{0.001} \\
 &  & ReRep & \MTCWITHCONF{82.32}{0.14} & \MTCWITHCONF{1.633}{0.003} & \MTCWITHCONF{0.863}{0.001} & \MTCWITHCONF{86.09}{0.16} & \MTCWITHCONF{1.721}{0.004} & \MTCWITHCONF{0.873}{0.001} & \MTCWITHCONF{88.74}{0.12} & \MTCWITHCONF{1.431}{0.002} & \MTCWITHCONF{0.869}{0.001} \\
 &  & EASE & \SECONDBEST{\MTCWITHCONF{82.57}{0.14}} & \MTCWITHCONF{0.627}{0.005} & \MTCWITHCONF{0.186}{0.001} & \SECONDBEST{\MTCWITHCONF{86.82}{0.15}} & \MTCWITHCONF{0.596}{0.004} & \MTCWITHCONF{0.182}{0.001} & \MTCWITHCONF{88.94}{0.11} & \MTCWITHCONF{0.608}{0.004} & \MTCWITHCONF{0.185}{0.001} \\
 &  & TCPR & \MTCWITHCONF{82.24}{0.14} & \MTCWITHCONF{0.781}{0.005} & \MTCWITHCONF{0.259}{0.001} & \MTCWITHCONF{86.27}{0.16} & \MTCWITHCONF{0.797}{0.005} & \MTCWITHCONF{0.284}{0.001} & \MTCWITHCONF{88.63}{0.11} & \MTCWITHCONF{0.687}{0.004} & \MTCWITHCONF{0.236}{0.001} \\
 &  & noHub & \MTCWITHCONF{82.55}{0.15} & \SECONDBEST{\MTCWITHCONF{0.285}{0.004}} & \SECONDBEST{\MTCWITHCONF{0.096}{0.001}} & \MTCWITHCONF{86.75}{0.16} & \SECONDBEST{\MTCWITHCONF{0.29}{0.004}} & \SECONDBEST{\MTCWITHCONF{0.103}{0.001}} & \BEST{\MTCWITHCONF{89.08}{0.11}} & \BEST{\MTCWITHCONF{0.329}{0.004}} & \SECONDBEST{\MTCWITHCONF{0.12}{0.001}} \\
 &  & noHub-S & \BEST{\MTCWITHCONF{82.81}{0.14}} & \BEST{\MTCWITHCONF{0.25}{0.005}} & \BEST{\MTCWITHCONF{0.073}{0.001}} & \BEST{\MTCWITHCONF{87.12}{0.16}} & \BEST{\MTCWITHCONF{0.214}{0.005}} & \BEST{\MTCWITHCONF{0.077}{0.001}} & \SECONDBEST{\MTCWITHCONF{88.99}{0.11}} & \SECONDBEST{\MTCWITHCONF{0.438}{0.006}} & \BEST{\MTCWITHCONF{0.096}{0.001}} \\
\cline{2-12}
 & \multirow[c]{9}{*}{\rotatebox{90}{ObliqueManifold}} & None & \MTCWITHCONF{83.53}{0.15} & \MTCWITHCONF{1.497}{0.01} & \MTCWITHCONF{0.421}{0.001} & \MTCWITHCONF{87.85}{0.15} & \MTCWITHCONF{1.334}{0.009} & \MTCWITHCONF{0.433}{0.001} & \SECONDBEST{\MTCWITHCONF{90.28}{0.11}} & \MTCWITHCONF{0.987}{0.005} & \MTCWITHCONF{0.364}{0.001} \\
 &  & L2 & \SECONDBEST{\MTCWITHCONF{83.66}{0.15}} & \MTCWITHCONF{1.051}{0.007} & \MTCWITHCONF{0.314}{0.001} & \MTCWITHCONF{87.83}{0.15} & \MTCWITHCONF{0.922}{0.006} & \MTCWITHCONF{0.289}{0.001} & \MTCWITHCONF{90.21}{0.11} & \MTCWITHCONF{0.81}{0.005} & \MTCWITHCONF{0.263}{0.001} \\
 &  & CL2 & \MTCWITHCONF{83.62}{0.15} & \MTCWITHCONF{0.775}{0.005} & \MTCWITHCONF{0.261}{0.001} & \SECONDBEST{\MTCWITHCONF{88.1}{0.15}} & \MTCWITHCONF{0.823}{0.006} & \MTCWITHCONF{0.281}{0.001} & \MTCWITHCONF{90.09}{0.11} & \MTCWITHCONF{0.701}{0.005} & \MTCWITHCONF{0.236}{0.001} \\
 &  & ZN & \BEST{\MTCWITHCONF{83.86}{0.15}} & \MTCWITHCONF{0.795}{0.005} & \MTCWITHCONF{0.258}{0.001} & \BEST{\MTCWITHCONF{88.47}{0.15}} & \MTCWITHCONF{0.835}{0.006} & \MTCWITHCONF{0.296}{0.001} & \BEST{\MTCWITHCONF{90.47}{0.11}} & \MTCWITHCONF{0.716}{0.004} & \MTCWITHCONF{0.251}{0.001} \\
 &  & ReRep & \MTCWITHCONF{82.44}{0.15} & \MTCWITHCONF{1.62}{0.003} & \MTCWITHCONF{0.871}{0.001} & \MTCWITHCONF{86.85}{0.16} & \MTCWITHCONF{1.725}{0.004} & \MTCWITHCONF{0.872}{0.001} & \MTCWITHCONF{89.83}{0.11} & \MTCWITHCONF{1.431}{0.003} & \MTCWITHCONF{0.869}{0.001} \\
 &  & EASE & \MTCWITHCONF{82.83}{0.15} & \MTCWITHCONF{0.628}{0.005} & \MTCWITHCONF{0.185}{0.001} & \MTCWITHCONF{87.63}{0.16} & \MTCWITHCONF{0.597}{0.005} & \MTCWITHCONF{0.182}{0.001} & \MTCWITHCONF{89.74}{0.12} & \SECONDBEST{\MTCWITHCONF{0.609}{0.004}} & \MTCWITHCONF{0.186}{0.001} \\
 &  & TCPR & \MTCWITHCONF{83.51}{0.15} & \MTCWITHCONF{0.766}{0.005} & \MTCWITHCONF{0.255}{0.001} & \MTCWITHCONF{88.09}{0.15} & \MTCWITHCONF{0.795}{0.005} & \MTCWITHCONF{0.283}{0.001} & \MTCWITHCONF{90.28}{0.11} & \MTCWITHCONF{0.687}{0.004} & \MTCWITHCONF{0.236}{0.001} \\
 &  & noHub & \MTCWITHCONF{83.28}{0.15} & \BEST{\MTCWITHCONF{0.287}{0.004}} & \SECONDBEST{\MTCWITHCONF{0.096}{0.001}} & \MTCWITHCONF{87.58}{0.16} & \BEST{\MTCWITHCONF{0.288}{0.004}} & \SECONDBEST{\MTCWITHCONF{0.102}{0.001}} & \MTCWITHCONF{89.89}{0.12} & \BEST{\MTCWITHCONF{0.334}{0.004}} & \SECONDBEST{\MTCWITHCONF{0.121}{0.001}} \\
 &  & noHub-S & \MTCWITHCONF{83.25}{0.16} & \SECONDBEST{\MTCWITHCONF{0.487}{0.006}} & \BEST{\MTCWITHCONF{0.086}{0.001}} & \MTCWITHCONF{87.82}{0.16} & \SECONDBEST{\MTCWITHCONF{0.469}{0.006}} & \BEST{\MTCWITHCONF{0.091}{0.001}} & \MTCWITHCONF{89.38}{0.17} & \MTCWITHCONF{nan}{nan} & \BEST{\MTCWITHCONF{0.097}{0.001}} \\
\cline{2-12}
 & \multirow[c]{9}{*}{\rotatebox{90}{SIAMESE}} & None & \MTCWITHCONF{20.0}{0.0} & \MTCWITHCONF{1.441}{0.009} & \MTCWITHCONF{0.421}{0.001} & \MTCWITHCONF{20.0}{0.0} & \MTCWITHCONF{1.339}{0.009} & \MTCWITHCONF{0.433}{0.001} & \MTCWITHCONF{20.0}{0.0} & \MTCWITHCONF{0.984}{0.005} & \MTCWITHCONF{0.364}{0.001} \\
 &  & L2 & \MTCWITHCONF{83.14}{0.14} & \MTCWITHCONF{1.035}{0.007} & \MTCWITHCONF{0.319}{0.001} & \MTCWITHCONF{87.04}{0.16} & \MTCWITHCONF{0.912}{0.006} & \MTCWITHCONF{0.288}{0.001} & \MTCWITHCONF{89.48}{0.12} & \MTCWITHCONF{0.808}{0.005} & \MTCWITHCONF{0.264}{0.001} \\
 &  & CL2 & \MTCWITHCONF{84.04}{0.15} & \MTCWITHCONF{0.788}{0.005} & \MTCWITHCONF{0.264}{0.001} & \MTCWITHCONF{87.9}{0.16} & \MTCWITHCONF{0.816}{0.006} & \MTCWITHCONF{0.28}{0.001} & \MTCWITHCONF{90.14}{0.12} & \MTCWITHCONF{0.698}{0.005} & \MTCWITHCONF{0.235}{0.001} \\
 &  & ZN & \MTCWITHCONF{20.0}{0.0} & \MTCWITHCONF{0.8}{0.005} & \MTCWITHCONF{0.263}{0.001} & \MTCWITHCONF{20.0}{0.0} & \MTCWITHCONF{0.84}{0.006} & \MTCWITHCONF{0.296}{0.001} & \MTCWITHCONF{20.0}{0.0} & \MTCWITHCONF{0.713}{0.004} & \MTCWITHCONF{0.251}{0.001} \\
 &  & ReRep & \MTCWITHCONF{20.0}{0.0} & \MTCWITHCONF{1.633}{0.003} & \MTCWITHCONF{0.863}{0.001} & \MTCWITHCONF{20.0}{0.0} & \MTCWITHCONF{1.724}{0.004} & \MTCWITHCONF{0.872}{0.001} & \MTCWITHCONF{20.0}{0.0} & \MTCWITHCONF{1.428}{0.002} & \MTCWITHCONF{0.869}{0.001} \\
 &  & EASE & \SECONDBEST{\MTCWITHCONF{84.61}{0.15}} & \MTCWITHCONF{0.63}{0.005} & \MTCWITHCONF{0.187}{0.001} & \SECONDBEST{\MTCWITHCONF{88.33}{0.16}} & \MTCWITHCONF{0.594}{0.004} & \MTCWITHCONF{0.182}{0.001} & \MTCWITHCONF{90.42}{0.12} & \MTCWITHCONF{0.607}{0.004} & \MTCWITHCONF{0.185}{0.001} \\
 &  & TCPR & \MTCWITHCONF{84.39}{0.15} & \MTCWITHCONF{0.772}{0.005} & \MTCWITHCONF{0.259}{0.001} & \MTCWITHCONF{88.26}{0.16} & \MTCWITHCONF{0.791}{0.005} & \MTCWITHCONF{0.283}{0.001} & \SECONDBEST{\MTCWITHCONF{90.5}{0.11}} & \MTCWITHCONF{0.686}{0.004} & \MTCWITHCONF{0.235}{0.001} \\
 &  & noHub & \MTCWITHCONF{84.05}{0.16} & \SECONDBEST{\MTCWITHCONF{0.292}{0.004}} & \SECONDBEST{\MTCWITHCONF{0.096}{0.001}} & \MTCWITHCONF{87.87}{0.17} & \BEST{\MTCWITHCONF{0.291}{0.004}} & \SECONDBEST{\MTCWITHCONF{0.103}{0.001}} & \MTCWITHCONF{90.34}{0.12} & \BEST{\MTCWITHCONF{0.334}{0.004}} & \SECONDBEST{\MTCWITHCONF{0.12}{0.001}} \\
 &  & noHub-S & \BEST{\MTCWITHCONF{84.67}{0.15}} & \BEST{\MTCWITHCONF{0.247}{0.005}} & \BEST{\MTCWITHCONF{0.074}{0.001}} & \BEST{\MTCWITHCONF{88.43}{0.16}} & \SECONDBEST{\MTCWITHCONF{0.473}{0.006}} & \BEST{\MTCWITHCONF{0.092}{0.001}} & \BEST{\MTCWITHCONF{90.52}{0.12}} & \SECONDBEST{\MTCWITHCONF{0.443}{0.006}} & \BEST{\MTCWITHCONF{0.097}{0.001}} \\
\cline{2-12}
 & \multirow[c]{9}{*}{\rotatebox{90}{SimpleShot}} & None & \MTCWITHCONF{78.5}{0.14} & \MTCWITHCONF{1.436}{0.009} & \MTCWITHCONF{0.422}{0.001} & \MTCWITHCONF{83.95}{0.16} & \MTCWITHCONF{1.339}{0.008} & \MTCWITHCONF{0.432}{0.001} & \MTCWITHCONF{85.65}{0.12} & \MTCWITHCONF{0.987}{0.005} & \MTCWITHCONF{0.364}{0.001} \\
 &  & L2 & \MTCWITHCONF{79.89}{0.14} & \MTCWITHCONF{1.04}{0.007} & \MTCWITHCONF{0.318}{0.001} & \MTCWITHCONF{84.5}{0.16} & \MTCWITHCONF{0.914}{0.006} & \MTCWITHCONF{0.287}{0.001} & \MTCWITHCONF{86.46}{0.12} & \MTCWITHCONF{0.812}{0.005} & \MTCWITHCONF{0.263}{0.001} \\
 &  & CL2 & \MTCWITHCONF{80.0}{0.14} & \MTCWITHCONF{0.786}{0.005} & \MTCWITHCONF{0.264}{0.001} & \MTCWITHCONF{84.66}{0.16} & \MTCWITHCONF{0.821}{0.006} & \MTCWITHCONF{0.28}{0.001} & \MTCWITHCONF{86.3}{0.12} & \MTCWITHCONF{0.698}{0.005} & \MTCWITHCONF{0.236}{0.001} \\
 &  & ZN & \MTCWITHCONF{80.57}{0.14} & \MTCWITHCONF{0.806}{0.005} & \MTCWITHCONF{0.264}{0.001} & \MTCWITHCONF{84.97}{0.16} & \MTCWITHCONF{0.839}{0.006} & \MTCWITHCONF{0.296}{0.001} & \MTCWITHCONF{86.76}{0.12} & \MTCWITHCONF{0.716}{0.005} & \MTCWITHCONF{0.25}{0.001} \\
 &  & ReRep & \MTCWITHCONF{80.86}{0.14} & \MTCWITHCONF{1.631}{0.003} & \MTCWITHCONF{0.863}{0.001} & \MTCWITHCONF{85.05}{0.16} & \MTCWITHCONF{1.721}{0.004} & \MTCWITHCONF{0.872}{0.001} & \SECONDBEST{\MTCWITHCONF{87.83}{0.12}} & \MTCWITHCONF{1.432}{0.002} & \MTCWITHCONF{0.869}{0.001} \\
 &  & EASE & \MTCWITHCONF{80.13}{0.14} & \MTCWITHCONF{0.624}{0.005} & \MTCWITHCONF{0.186}{0.001} & \MTCWITHCONF{84.74}{0.16} & \MTCWITHCONF{0.598}{0.004} & \MTCWITHCONF{0.183}{0.001} & \MTCWITHCONF{86.76}{0.12} & \MTCWITHCONF{0.607}{0.004} & \MTCWITHCONF{0.186}{0.001} \\
 &  & TCPR & \MTCWITHCONF{80.15}{0.14} & \MTCWITHCONF{0.78}{0.005} & \MTCWITHCONF{0.259}{0.001} & \MTCWITHCONF{84.86}{0.15} & \MTCWITHCONF{0.796}{0.005} & \MTCWITHCONF{0.283}{0.001} & \MTCWITHCONF{86.8}{0.12} & \MTCWITHCONF{0.687}{0.004} & \MTCWITHCONF{0.235}{0.001} \\
 &  & noHub & \BEST{\MTCWITHCONF{82.13}{0.14}} & \SECONDBEST{\MTCWITHCONF{0.286}{0.004}} & \SECONDBEST{\MTCWITHCONF{0.096}{0.001}} & \BEST{\MTCWITHCONF{86.31}{0.16}} & \SECONDBEST{\MTCWITHCONF{0.289}{0.004}} & \SECONDBEST{\MTCWITHCONF{0.104}{0.001}} & \BEST{\MTCWITHCONF{88.46}{0.11}} & \BEST{\MTCWITHCONF{0.329}{0.004}} & \SECONDBEST{\MTCWITHCONF{0.12}{0.001}} \\
 &  & noHub-S & \SECONDBEST{\MTCWITHCONF{81.22}{0.14}} & \BEST{\MTCWITHCONF{0.25}{0.005}} & \BEST{\MTCWITHCONF{0.074}{0.001}} & \SECONDBEST{\MTCWITHCONF{86.22}{0.15}} & \BEST{\MTCWITHCONF{0.213}{0.005}} & \BEST{\MTCWITHCONF{0.078}{0.001}} & \MTCWITHCONF{87.6}{0.12} & \SECONDBEST{\MTCWITHCONF{0.433}{0.006}} & \BEST{\MTCWITHCONF{0.097}{0.001}} \\
\cline{2-12}
 & \multirow[c]{9}{*}{\rotatebox{90}{\( \alpha \)-TIM}} & None & \MTCWITHCONF{78.51}{0.15} & \MTCWITHCONF{1.45}{0.009} & \MTCWITHCONF{0.42}{0.001} & \MTCWITHCONF{83.86}{0.16} & \MTCWITHCONF{1.341}{0.009} & \MTCWITHCONF{0.433}{0.001} & \MTCWITHCONF{85.7}{0.12} & \MTCWITHCONF{0.981}{0.005} & \MTCWITHCONF{0.363}{0.001} \\
 &  & L2 & \MTCWITHCONF{80.02}{0.16} & \MTCWITHCONF{1.036}{0.007} & \MTCWITHCONF{0.318}{0.001} & \MTCWITHCONF{84.49}{0.18} & \MTCWITHCONF{0.92}{0.006} & \MTCWITHCONF{0.288}{0.001} & \MTCWITHCONF{87.88}{0.13} & \MTCWITHCONF{0.812}{0.005} & \MTCWITHCONF{0.264}{0.001} \\
 &  & CL2 & \MTCWITHCONF{80.46}{0.16} & \MTCWITHCONF{0.784}{0.005} & \MTCWITHCONF{0.264}{0.001} & \MTCWITHCONF{84.86}{0.17} & \MTCWITHCONF{0.82}{0.006} & \MTCWITHCONF{0.281}{0.001} & \MTCWITHCONF{87.53}{0.13} & \MTCWITHCONF{0.701}{0.005} & \MTCWITHCONF{0.235}{0.001} \\
 &  & ZN & \MTCWITHCONF{80.32}{0.14} & \MTCWITHCONF{0.802}{0.005} & \MTCWITHCONF{0.263}{0.001} & \MTCWITHCONF{84.93}{0.16} & \MTCWITHCONF{0.834}{0.006} & \MTCWITHCONF{0.295}{0.001} & \MTCWITHCONF{86.95}{0.12} & \MTCWITHCONF{0.715}{0.004} & \MTCWITHCONF{0.25}{0.001} \\
 &  & ReRep & \MTCWITHCONF{81.05}{0.14} & \MTCWITHCONF{1.63}{0.003} & \MTCWITHCONF{0.863}{0.001} & \MTCWITHCONF{85.18}{0.16} & \MTCWITHCONF{1.718}{0.004} & \MTCWITHCONF{0.872}{0.001} & \MTCWITHCONF{87.63}{0.12} & \MTCWITHCONF{1.43}{0.002} & \MTCWITHCONF{0.87}{0.001} \\
 &  & EASE & \MTCWITHCONF{79.13}{0.15} & \MTCWITHCONF{0.632}{0.005} & \MTCWITHCONF{0.188}{0.001} & \MTCWITHCONF{84.04}{0.17} & \MTCWITHCONF{0.596}{0.004} & \MTCWITHCONF{0.181}{0.001} & \MTCWITHCONF{86.7}{0.13} & \MTCWITHCONF{0.607}{0.004} & \MTCWITHCONF{0.186}{0.001} \\
 &  & TCPR & \MTCWITHCONF{80.52}{0.16} & \MTCWITHCONF{0.776}{0.005} & \MTCWITHCONF{0.259}{0.001} & \MTCWITHCONF{85.01}{0.17} & \MTCWITHCONF{0.796}{0.005} & \MTCWITHCONF{0.283}{0.001} & \MTCWITHCONF{87.81}{0.13} & \MTCWITHCONF{0.681}{0.004} & \MTCWITHCONF{0.234}{0.001} \\
 &  & noHub & \BEST{\MTCWITHCONF{81.39}{0.15}} & \SECONDBEST{\MTCWITHCONF{0.29}{0.004}} & \SECONDBEST{\MTCWITHCONF{0.096}{0.001}} & \SECONDBEST{\MTCWITHCONF{86.09}{0.16}} & \SECONDBEST{\MTCWITHCONF{0.292}{0.004}} & \SECONDBEST{\MTCWITHCONF{0.103}{0.001}} & \BEST{\MTCWITHCONF{88.16}{0.12}} & \BEST{\MTCWITHCONF{0.336}{0.004}} & \SECONDBEST{\MTCWITHCONF{0.121}{0.001}} \\
 &  & noHub-S & \SECONDBEST{\MTCWITHCONF{81.37}{0.15}} & \BEST{\MTCWITHCONF{0.253}{0.005}} & \BEST{\MTCWITHCONF{0.074}{0.001}} & \BEST{\MTCWITHCONF{86.14}{0.16}} & \BEST{\MTCWITHCONF{0.219}{0.005}} & \BEST{\MTCWITHCONF{0.078}{0.001}} & \SECONDBEST{\MTCWITHCONF{87.97}{0.12}} & \SECONDBEST{\MTCWITHCONF{0.437}{0.006}} & \BEST{\MTCWITHCONF{0.096}{0.001}} \\
\cline{1-12} \cline{2-12}
\bottomrule
\end{tabular}
}
        \caption{Resnet-18: 5-shot}
        \label{tab:main-tim-resnet18-5}
        \end{table*}
    \begin{table*}
        {\scriptsize\centering\begin{tabular}{llllllllllll}
\toprule
 &  &  & \multicolumn{3}{c}{mini} & \multicolumn{3}{c}{tiered} & \multicolumn{3}{c}{CUB} \\
 &  &  & Acc & Skew & Hub.~Occ. & Acc & Skew & Hub.~Occ. & Acc & Skew & Hub.~Occ. \\
Arch. & Clf. & Emb. &  &  &  &  &  &  &  &  &  \\
\midrule
\multirow[c]{54}{*}{\rotatebox{90}{WideRes28-10}} & \multirow[c]{9}{*}{\rotatebox{90}{ILPC}} & None & \MTCWITHCONF{81.93}{0.16} & \MTCWITHCONF{1.717}{0.01} & \MTCWITHCONF{0.473}{0.001} & \MTCWITHCONF{84.34}{0.17} & \MTCWITHCONF{1.927}{0.011} & \MTCWITHCONF{0.509}{0.001} & \MTCWITHCONF{93.18}{0.11} & \MTCWITHCONF{1.164}{0.008} & \MTCWITHCONF{0.396}{0.001} \\
 &  & L2 & \MTCWITHCONF{85.74}{0.14} & \MTCWITHCONF{0.888}{0.005} & \MTCWITHCONF{0.322}{0.001} & \MTCWITHCONF{86.26}{0.17} & \MTCWITHCONF{0.859}{0.005} & \MTCWITHCONF{0.306}{0.001} & \MTCWITHCONF{93.77}{0.1} & \MTCWITHCONF{0.636}{0.004} & \MTCWITHCONF{0.266}{0.001} \\
 &  & CL2 & \MTCWITHCONF{83.33}{0.16} & \MTCWITHCONF{1.12}{0.009} & \MTCWITHCONF{0.318}{0.001} & \MTCWITHCONF{85.99}{0.17} & \MTCWITHCONF{0.957}{0.006} & \MTCWITHCONF{0.338}{0.001} & \MTCWITHCONF{93.79}{0.1} & \MTCWITHCONF{0.703}{0.004} & \MTCWITHCONF{0.309}{0.001} \\
 &  & ZN & \MTCWITHCONF{85.96}{0.14} & \MTCWITHCONF{0.858}{0.005} & \MTCWITHCONF{0.32}{0.001} & \MTCWITHCONF{86.77}{0.16} & \MTCWITHCONF{0.909}{0.006} & \MTCWITHCONF{0.335}{0.001} & \MTCWITHCONF{93.73}{0.1} & \MTCWITHCONF{0.696}{0.004} & \MTCWITHCONF{0.305}{0.001} \\
 &  & ReRep & \MTCWITHCONF{72.11}{0.27} & \MTCWITHCONF{1.601}{0.003} & \MTCWITHCONF{0.819}{0.001} & \MTCWITHCONF{71.68}{0.3} & \MTCWITHCONF{1.616}{0.004} & \MTCWITHCONF{0.845}{0.001} & \MTCWITHCONF{91.52}{0.13} & \MTCWITHCONF{1.301}{0.005} & \MTCWITHCONF{0.548}{0.002} \\
 &  & EASE & \MTCWITHCONF{85.89}{0.14} & \MTCWITHCONF{0.577}{0.004} & \MTCWITHCONF{0.198}{0.001} & \MTCWITHCONF{86.83}{0.17} & \MTCWITHCONF{0.583}{0.004} & \MTCWITHCONF{0.193}{0.001} & \BEST{\MTCWITHCONF{93.87}{0.1}} & \MTCWITHCONF{0.576}{0.004} & \MTCWITHCONF{0.242}{0.001} \\
 &  & TCPR & \SECONDBEST{\MTCWITHCONF{86.29}{0.14}} & \MTCWITHCONF{0.715}{0.004} & \MTCWITHCONF{0.27}{0.001} & \SECONDBEST{\MTCWITHCONF{86.96}{0.17}} & \MTCWITHCONF{0.819}{0.005} & \MTCWITHCONF{0.295}{0.001} & \SECONDBEST{\MTCWITHCONF{93.82}{0.1}} & \MTCWITHCONF{0.634}{0.004} & \MTCWITHCONF{0.265}{0.001} \\
 &  & noHub & \MTCWITHCONF{86.07}{0.15} & \BEST{\MTCWITHCONF{0.295}{0.004}} & \SECONDBEST{\MTCWITHCONF{0.115}{0.001}} & \MTCWITHCONF{86.75}{0.17} & \BEST{\MTCWITHCONF{0.299}{0.004}} & \BEST{\MTCWITHCONF{0.115}{0.001}} & \MTCWITHCONF{93.72}{0.1} & \BEST{\MTCWITHCONF{0.2}{0.004}} & \BEST{\MTCWITHCONF{0.101}{0.001}} \\
 &  & noHub-S & \BEST{\MTCWITHCONF{86.41}{0.14}} & \SECONDBEST{\MTCWITHCONF{0.499}{0.006}} & \BEST{\MTCWITHCONF{0.104}{0.001}} & \BEST{\MTCWITHCONF{87.31}{0.17}} & \SECONDBEST{\MTCWITHCONF{0.406}{0.005}} & \SECONDBEST{\MTCWITHCONF{0.121}{0.001}} & \MTCWITHCONF{93.79}{0.1} & \SECONDBEST{\MTCWITHCONF{0.416}{0.005}} & \SECONDBEST{\MTCWITHCONF{0.126}{0.001}} \\
\cline{2-12}
 & \multirow[c]{9}{*}{\rotatebox{90}{LaplacianShot}} & None & \MTCWITHCONF{85.23}{0.13} & \MTCWITHCONF{1.711}{0.01} & \MTCWITHCONF{0.474}{0.001} & \MTCWITHCONF{86.14}{0.15} & \MTCWITHCONF{1.921}{0.011} & \MTCWITHCONF{0.509}{0.001} & \MTCWITHCONF{92.61}{0.1} & \MTCWITHCONF{1.164}{0.008} & \MTCWITHCONF{0.395}{0.001} \\
 &  & L2 & \MTCWITHCONF{85.9}{0.13} & \MTCWITHCONF{0.892}{0.006} & \MTCWITHCONF{0.321}{0.001} & \MTCWITHCONF{86.47}{0.15} & \MTCWITHCONF{0.867}{0.006} & \MTCWITHCONF{0.304}{0.001} & \MTCWITHCONF{93.17}{0.09} & \MTCWITHCONF{0.635}{0.004} & \MTCWITHCONF{0.267}{0.001} \\
 &  & CL2 & \MTCWITHCONF{82.08}{0.15} & \MTCWITHCONF{1.112}{0.009} & \MTCWITHCONF{0.318}{0.001} & \MTCWITHCONF{84.62}{0.16} & \MTCWITHCONF{0.954}{0.006} & \MTCWITHCONF{0.34}{0.001} & \MTCWITHCONF{93.01}{0.1} & \MTCWITHCONF{0.702}{0.004} & \MTCWITHCONF{0.309}{0.001} \\
 &  & ZN & \MTCWITHCONF{85.97}{0.13} & \MTCWITHCONF{0.86}{0.005} & \MTCWITHCONF{0.319}{0.001} & \MTCWITHCONF{86.67}{0.15} & \MTCWITHCONF{0.912}{0.006} & \MTCWITHCONF{0.335}{0.001} & \MTCWITHCONF{93.3}{0.1} & \MTCWITHCONF{0.698}{0.004} & \MTCWITHCONF{0.305}{0.001} \\
 &  & ReRep & \MTCWITHCONF{84.34}{0.14} & \MTCWITHCONF{1.599}{0.003} & \MTCWITHCONF{0.819}{0.001} & \MTCWITHCONF{85.61}{0.16} & \MTCWITHCONF{1.615}{0.004} & \MTCWITHCONF{0.845}{0.001} & \MTCWITHCONF{92.2}{0.1} & \MTCWITHCONF{1.304}{0.005} & \MTCWITHCONF{0.549}{0.002} \\
 &  & EASE & \SECONDBEST{\MTCWITHCONF{86.24}{0.13}} & \MTCWITHCONF{0.573}{0.004} & \MTCWITHCONF{0.198}{0.001} & \SECONDBEST{\MTCWITHCONF{86.74}{0.15}} & \MTCWITHCONF{0.582}{0.004} & \MTCWITHCONF{0.194}{0.001} & \MTCWITHCONF{93.31}{0.09} & \MTCWITHCONF{0.578}{0.004} & \MTCWITHCONF{0.243}{0.001} \\
 &  & TCPR & \MTCWITHCONF{86.16}{0.13} & \MTCWITHCONF{0.712}{0.004} & \MTCWITHCONF{0.269}{0.001} & \MTCWITHCONF{85.72}{0.16} & \MTCWITHCONF{0.813}{0.005} & \MTCWITHCONF{0.293}{0.001} & \MTCWITHCONF{92.99}{0.1} & \MTCWITHCONF{0.638}{0.004} & \MTCWITHCONF{0.264}{0.001} \\
 &  & noHub & \BEST{\MTCWITHCONF{86.25}{0.13}} & \BEST{\MTCWITHCONF{0.292}{0.004}} & \SECONDBEST{\MTCWITHCONF{0.115}{0.001}} & \BEST{\MTCWITHCONF{86.78}{0.16}} & \BEST{\MTCWITHCONF{0.299}{0.004}} & \BEST{\MTCWITHCONF{0.115}{0.001}} & \BEST{\MTCWITHCONF{93.38}{0.09}} & \BEST{\MTCWITHCONF{0.197}{0.004}} & \BEST{\MTCWITHCONF{0.1}{0.001}} \\
 &  & noHub-S & \MTCWITHCONF{85.79}{0.13} & \SECONDBEST{\MTCWITHCONF{0.494}{0.006}} & \BEST{\MTCWITHCONF{0.103}{0.001}} & \MTCWITHCONF{86.44}{0.16} & \SECONDBEST{\MTCWITHCONF{0.397}{0.005}} & \SECONDBEST{\MTCWITHCONF{0.12}{0.001}} & \SECONDBEST{\MTCWITHCONF{93.36}{0.1}} & \SECONDBEST{\MTCWITHCONF{0.42}{0.005}} & \SECONDBEST{\MTCWITHCONF{0.126}{0.001}} \\
\cline{2-12}
 & \multirow[c]{9}{*}{\rotatebox{90}{ObliqueManifold}} & None & \MTCWITHCONF{87.46}{0.13} & \MTCWITHCONF{1.712}{0.01} & \MTCWITHCONF{0.472}{0.001} & \SECONDBEST{\MTCWITHCONF{88.16}{0.15}} & \MTCWITHCONF{1.913}{0.01} & \MTCWITHCONF{0.509}{0.001} & \MTCWITHCONF{94.75}{0.09} & \MTCWITHCONF{1.161}{0.008} & \MTCWITHCONF{0.395}{0.001} \\
 &  & L2 & \MTCWITHCONF{87.61}{0.13} & \MTCWITHCONF{0.889}{0.005} & \MTCWITHCONF{0.321}{0.001} & \MTCWITHCONF{88.14}{0.15} & \MTCWITHCONF{0.862}{0.006} & \MTCWITHCONF{0.306}{0.001} & \BEST{\MTCWITHCONF{94.8}{0.09}} & \MTCWITHCONF{0.642}{0.004} & \MTCWITHCONF{0.268}{0.001} \\
 &  & CL2 & \MTCWITHCONF{86.03}{0.14} & \MTCWITHCONF{1.112}{0.009} & \MTCWITHCONF{0.317}{0.001} & \MTCWITHCONF{87.64}{0.16} & \MTCWITHCONF{0.949}{0.006} & \MTCWITHCONF{0.338}{0.001} & \MTCWITHCONF{94.67}{0.09} & \MTCWITHCONF{0.703}{0.004} & \MTCWITHCONF{0.31}{0.001} \\
 &  & ZN & \SECONDBEST{\MTCWITHCONF{87.88}{0.13}} & \MTCWITHCONF{0.852}{0.005} & \MTCWITHCONF{0.32}{0.001} & \BEST{\MTCWITHCONF{88.43}{0.15}} & \MTCWITHCONF{0.908}{0.006} & \MTCWITHCONF{0.335}{0.001} & \SECONDBEST{\MTCWITHCONF{94.77}{0.08}} & \MTCWITHCONF{0.697}{0.004} & \MTCWITHCONF{0.306}{0.001} \\
 &  & ReRep & \MTCWITHCONF{87.62}{0.12} & \MTCWITHCONF{1.599}{0.003} & \MTCWITHCONF{0.819}{0.001} & \MTCWITHCONF{88.15}{0.15} & \MTCWITHCONF{1.616}{0.004} & \MTCWITHCONF{0.845}{0.001} & \MTCWITHCONF{94.48}{0.09} & \MTCWITHCONF{1.302}{0.005} & \MTCWITHCONF{0.547}{0.002} \\
 &  & EASE & \MTCWITHCONF{86.75}{0.13} & \MTCWITHCONF{0.573}{0.004} & \MTCWITHCONF{0.198}{0.001} & \MTCWITHCONF{87.78}{0.15} & \MTCWITHCONF{0.583}{0.004} & \MTCWITHCONF{0.193}{0.001} & \MTCWITHCONF{94.16}{0.09} & \MTCWITHCONF{0.57}{0.004} & \MTCWITHCONF{0.24}{0.001} \\
 &  & TCPR & \BEST{\MTCWITHCONF{87.94}{0.12}} & \MTCWITHCONF{0.718}{0.004} & \MTCWITHCONF{0.271}{0.001} & \MTCWITHCONF{88.15}{0.15} & \MTCWITHCONF{0.816}{0.005} & \MTCWITHCONF{0.294}{0.001} & \MTCWITHCONF{94.47}{0.09} & \MTCWITHCONF{0.635}{0.004} & \MTCWITHCONF{0.265}{0.001} \\
 &  & noHub & \MTCWITHCONF{87.23}{0.13} & \BEST{\MTCWITHCONF{0.297}{0.004}} & \SECONDBEST{\MTCWITHCONF{0.115}{0.001}} & \MTCWITHCONF{87.95}{0.16} & \BEST{\MTCWITHCONF{0.296}{0.004}} & \BEST{\MTCWITHCONF{0.114}{0.001}} & \MTCWITHCONF{94.13}{0.09} & \BEST{\MTCWITHCONF{0.197}{0.004}} & \BEST{\MTCWITHCONF{0.1}{0.001}} \\
 &  & noHub-S & \MTCWITHCONF{87.13}{0.14} & \SECONDBEST{\MTCWITHCONF{0.495}{0.006}} & \BEST{\MTCWITHCONF{0.103}{0.001}} & \MTCWITHCONF{87.84}{0.16} & \SECONDBEST{\MTCWITHCONF{0.399}{0.005}} & \SECONDBEST{\MTCWITHCONF{0.12}{0.001}} & \MTCWITHCONF{94.06}{0.09} & \SECONDBEST{\MTCWITHCONF{0.421}{0.005}} & \SECONDBEST{\MTCWITHCONF{0.126}{0.001}} \\
\cline{2-12}
 & \multirow[c]{9}{*}{\rotatebox{90}{SIAMESE}} & None & \MTCWITHCONF{58.82}{0.31} & \MTCWITHCONF{1.722}{0.01} & \MTCWITHCONF{0.473}{0.001} & \MTCWITHCONF{82.56}{0.22} & \MTCWITHCONF{1.93}{0.01} & \MTCWITHCONF{0.511}{0.001} & \MTCWITHCONF{82.22}{0.37} & \MTCWITHCONF{1.154}{0.008} & \MTCWITHCONF{0.396}{0.001} \\
 &  & L2 & \MTCWITHCONF{87.11}{0.13} & \MTCWITHCONF{0.894}{0.006} & \MTCWITHCONF{0.321}{0.001} & \MTCWITHCONF{87.34}{0.15} & \MTCWITHCONF{0.861}{0.005} & \MTCWITHCONF{0.305}{0.001} & \MTCWITHCONF{94.15}{0.1} & \MTCWITHCONF{0.638}{0.004} & \MTCWITHCONF{0.266}{0.001} \\
 &  & CL2 & \MTCWITHCONF{83.99}{0.16} & \MTCWITHCONF{1.107}{0.009} & \MTCWITHCONF{0.318}{0.001} & \MTCWITHCONF{86.71}{0.16} & \MTCWITHCONF{0.953}{0.006} & \MTCWITHCONF{0.339}{0.001} & \MTCWITHCONF{94.48}{0.09} & \MTCWITHCONF{0.704}{0.004} & \MTCWITHCONF{0.31}{0.001} \\
 &  & ZN & \MTCWITHCONF{20.0}{0.0} & \MTCWITHCONF{0.856}{0.005} & \MTCWITHCONF{0.319}{0.001} & \MTCWITHCONF{20.0}{0.0} & \MTCWITHCONF{0.913}{0.006} & \MTCWITHCONF{0.334}{0.001} & \MTCWITHCONF{20.0}{0.0} & \MTCWITHCONF{0.702}{0.004} & \MTCWITHCONF{0.305}{0.001} \\
 &  & ReRep & \MTCWITHCONF{36.41}{0.3} & \MTCWITHCONF{1.597}{0.003} & \MTCWITHCONF{0.818}{0.001} & \MTCWITHCONF{76.49}{0.24} & \MTCWITHCONF{1.613}{0.004} & \MTCWITHCONF{0.846}{0.001} & \MTCWITHCONF{60.36}{0.6} & \MTCWITHCONF{1.299}{0.005} & \MTCWITHCONF{0.547}{0.002} \\
 &  & EASE & \SECONDBEST{\MTCWITHCONF{87.82}{0.13}} & \MTCWITHCONF{0.579}{0.004} & \MTCWITHCONF{0.199}{0.001} & \SECONDBEST{\MTCWITHCONF{88.06}{0.16}} & \MTCWITHCONF{0.586}{0.004} & \MTCWITHCONF{0.192}{0.001} & \MTCWITHCONF{94.36}{0.09} & \MTCWITHCONF{0.571}{0.004} & \MTCWITHCONF{0.241}{0.001} \\
 &  & TCPR & \MTCWITHCONF{87.8}{0.13} & \MTCWITHCONF{0.717}{0.004} & \MTCWITHCONF{0.27}{0.001} & \MTCWITHCONF{87.95}{0.16} & \MTCWITHCONF{0.822}{0.005} & \MTCWITHCONF{0.295}{0.001} & \MTCWITHCONF{94.25}{0.1} & \MTCWITHCONF{0.637}{0.004} & \MTCWITHCONF{0.266}{0.001} \\
 &  & noHub & \MTCWITHCONF{87.78}{0.14} & \BEST{\MTCWITHCONF{0.29}{0.004}} & \SECONDBEST{\MTCWITHCONF{0.114}{0.001}} & \MTCWITHCONF{87.99}{0.17} & \BEST{\MTCWITHCONF{0.297}{0.004}} & \BEST{\MTCWITHCONF{0.115}{0.001}} & \SECONDBEST{\MTCWITHCONF{94.56}{0.09}} & \BEST{\MTCWITHCONF{0.196}{0.004}} & \BEST{\MTCWITHCONF{0.1}{0.001}} \\
 &  & noHub-S & \BEST{\MTCWITHCONF{88.03}{0.13}} & \SECONDBEST{\MTCWITHCONF{0.492}{0.006}} & \BEST{\MTCWITHCONF{0.103}{0.001}} & \BEST{\MTCWITHCONF{88.31}{0.16}} & \SECONDBEST{\MTCWITHCONF{0.398}{0.005}} & \SECONDBEST{\MTCWITHCONF{0.12}{0.001}} & \BEST{\MTCWITHCONF{94.69}{0.09}} & \SECONDBEST{\MTCWITHCONF{0.416}{0.005}} & \SECONDBEST{\MTCWITHCONF{0.127}{0.001}} \\
\cline{2-12}
 & \multirow[c]{9}{*}{\rotatebox{90}{SimpleShot}} & None & \MTCWITHCONF{78.56}{0.14} & \MTCWITHCONF{1.709}{0.01} & \MTCWITHCONF{0.473}{0.001} & \MTCWITHCONF{80.32}{0.16} & \MTCWITHCONF{1.937}{0.01} & \MTCWITHCONF{0.51}{0.001} & \MTCWITHCONF{89.27}{0.11} & \MTCWITHCONF{1.16}{0.008} & \MTCWITHCONF{0.395}{0.001} \\
 &  & L2 & \MTCWITHCONF{83.81}{0.13} & \MTCWITHCONF{0.887}{0.005} & \MTCWITHCONF{0.322}{0.001} & \MTCWITHCONF{84.82}{0.15} & \MTCWITHCONF{0.86}{0.006} & \MTCWITHCONF{0.305}{0.001} & \MTCWITHCONF{92.06}{0.1} & \MTCWITHCONF{0.632}{0.004} & \MTCWITHCONF{0.266}{0.001} \\
 &  & CL2 & \MTCWITHCONF{81.05}{0.14} & \MTCWITHCONF{1.12}{0.009} & \MTCWITHCONF{0.318}{0.001} & \MTCWITHCONF{83.82}{0.16} & \MTCWITHCONF{0.956}{0.006} & \MTCWITHCONF{0.337}{0.001} & \MTCWITHCONF{92.19}{0.1} & \MTCWITHCONF{0.701}{0.004} & \MTCWITHCONF{0.31}{0.001} \\
 &  & ZN & \MTCWITHCONF{83.92}{0.13} & \MTCWITHCONF{0.858}{0.005} & \MTCWITHCONF{0.32}{0.001} & \MTCWITHCONF{85.1}{0.15} & \MTCWITHCONF{0.912}{0.006} & \MTCWITHCONF{0.335}{0.001} & \MTCWITHCONF{92.17}{0.1} & \MTCWITHCONF{0.699}{0.004} & \MTCWITHCONF{0.305}{0.001} \\
 &  & ReRep & \MTCWITHCONF{79.26}{0.16} & \MTCWITHCONF{1.597}{0.003} & \MTCWITHCONF{0.819}{0.001} & \MTCWITHCONF{82.7}{0.16} & \MTCWITHCONF{1.617}{0.004} & \MTCWITHCONF{0.846}{0.001} & \MTCWITHCONF{91.48}{0.11} & \MTCWITHCONF{1.299}{0.005} & \MTCWITHCONF{0.549}{0.002} \\
 &  & EASE & \MTCWITHCONF{83.65}{0.13} & \MTCWITHCONF{0.579}{0.004} & \MTCWITHCONF{0.199}{0.001} & \MTCWITHCONF{84.47}{0.15} & \MTCWITHCONF{0.585}{0.004} & \MTCWITHCONF{0.193}{0.001} & \MTCWITHCONF{92.01}{0.1} & \MTCWITHCONF{0.572}{0.004} & \MTCWITHCONF{0.241}{0.001} \\
 &  & TCPR & \MTCWITHCONF{83.77}{0.13} & \MTCWITHCONF{0.717}{0.004} & \MTCWITHCONF{0.27}{0.001} & \MTCWITHCONF{84.81}{0.15} & \MTCWITHCONF{0.815}{0.005} & \MTCWITHCONF{0.294}{0.001} & \MTCWITHCONF{91.84}{0.1} & \MTCWITHCONF{0.634}{0.004} & \MTCWITHCONF{0.264}{0.001} \\
 &  & noHub & \BEST{\MTCWITHCONF{85.73}{0.13}} & \BEST{\MTCWITHCONF{0.294}{0.004}} & \SECONDBEST{\MTCWITHCONF{0.115}{0.001}} & \BEST{\MTCWITHCONF{86.58}{0.15}} & \BEST{\MTCWITHCONF{0.298}{0.004}} & \BEST{\MTCWITHCONF{0.115}{0.001}} & \SECONDBEST{\MTCWITHCONF{93.21}{0.09}} & \BEST{\MTCWITHCONF{0.195}{0.004}} & \BEST{\MTCWITHCONF{0.1}{0.001}} \\
 &  & noHub-S & \SECONDBEST{\MTCWITHCONF{84.39}{0.13}} & \SECONDBEST{\MTCWITHCONF{0.494}{0.006}} & \BEST{\MTCWITHCONF{0.103}{0.001}} & \SECONDBEST{\MTCWITHCONF{86.38}{0.15}} & \SECONDBEST{\MTCWITHCONF{0.407}{0.005}} & \SECONDBEST{\MTCWITHCONF{0.12}{0.001}} & \BEST{\MTCWITHCONF{93.39}{0.09}} & \SECONDBEST{\MTCWITHCONF{0.421}{0.005}} & \SECONDBEST{\MTCWITHCONF{0.127}{0.001}} \\
\cline{2-12}
 & \multirow[c]{9}{*}{\rotatebox{90}{\( \alpha \)-TIM}} & None & \MTCWITHCONF{80.61}{0.15} & \MTCWITHCONF{1.711}{0.01} & \MTCWITHCONF{0.473}{0.001} & \MTCWITHCONF{83.05}{0.18} & \MTCWITHCONF{1.928}{0.01} & \MTCWITHCONF{0.51}{0.001} & \MTCWITHCONF{84.89}{0.29} & \MTCWITHCONF{1.153}{0.008} & \MTCWITHCONF{0.396}{0.001} \\
 &  & L2 & \MTCWITHCONF{83.71}{0.16} & \MTCWITHCONF{0.892}{0.005} & \MTCWITHCONF{0.323}{0.001} & \MTCWITHCONF{84.69}{0.18} & \MTCWITHCONF{0.863}{0.005} & \MTCWITHCONF{0.304}{0.001} & \MTCWITHCONF{92.88}{0.1} & \MTCWITHCONF{0.633}{0.004} & \MTCWITHCONF{0.266}{0.001} \\
 &  & CL2 & \MTCWITHCONF{82.35}{0.16} & \MTCWITHCONF{1.111}{0.009} & \MTCWITHCONF{0.318}{0.001} & \MTCWITHCONF{84.06}{0.18} & \MTCWITHCONF{0.949}{0.006} & \MTCWITHCONF{0.339}{0.001} & \MTCWITHCONF{92.81}{0.1} & \MTCWITHCONF{0.7}{0.004} & \MTCWITHCONF{0.31}{0.001} \\
 &  & ZN & \MTCWITHCONF{83.93}{0.13} & \MTCWITHCONF{0.857}{0.005} & \MTCWITHCONF{0.321}{0.001} & \MTCWITHCONF{85.07}{0.15} & \MTCWITHCONF{0.912}{0.006} & \MTCWITHCONF{0.336}{0.001} & \MTCWITHCONF{92.15}{0.1} & \MTCWITHCONF{0.698}{0.004} & \MTCWITHCONF{0.306}{0.001} \\
 &  & ReRep & \MTCWITHCONF{83.4}{0.14} & \MTCWITHCONF{1.596}{0.003} & \MTCWITHCONF{0.82}{0.001} & \MTCWITHCONF{84.4}{0.16} & \MTCWITHCONF{1.615}{0.004} & \MTCWITHCONF{0.845}{0.001} & \SECONDBEST{\MTCWITHCONF{93.19}{0.09}} & \MTCWITHCONF{1.302}{0.005} & \MTCWITHCONF{0.547}{0.002} \\
 &  & EASE & \MTCWITHCONF{82.72}{0.14} & \MTCWITHCONF{0.576}{0.004} & \MTCWITHCONF{0.2}{0.001} & \MTCWITHCONF{83.86}{0.16} & \MTCWITHCONF{0.583}{0.004} & \MTCWITHCONF{0.193}{0.001} & \MTCWITHCONF{92.31}{0.1} & \MTCWITHCONF{0.572}{0.004} & \MTCWITHCONF{0.242}{0.001} \\
 &  & TCPR & \SECONDBEST{\MTCWITHCONF{84.21}{0.15}} & \MTCWITHCONF{0.718}{0.004} & \MTCWITHCONF{0.27}{0.001} & \MTCWITHCONF{84.63}{0.18} & \MTCWITHCONF{0.814}{0.005} & \MTCWITHCONF{0.293}{0.001} & \MTCWITHCONF{92.44}{0.1} & \MTCWITHCONF{0.635}{0.004} & \MTCWITHCONF{0.265}{0.001} \\
 &  & noHub & \BEST{\MTCWITHCONF{85.56}{0.13}} & \BEST{\MTCWITHCONF{0.293}{0.004}} & \SECONDBEST{\MTCWITHCONF{0.115}{0.001}} & \BEST{\MTCWITHCONF{86.37}{0.16}} & \BEST{\MTCWITHCONF{0.3}{0.004}} & \BEST{\MTCWITHCONF{0.115}{0.001}} & \MTCWITHCONF{92.89}{0.1} & \BEST{\MTCWITHCONF{0.193}{0.004}} & \BEST{\MTCWITHCONF{0.099}{0.001}} \\
 &  & noHub-S & \MTCWITHCONF{83.96}{0.15} & \SECONDBEST{\MTCWITHCONF{0.496}{0.006}} & \BEST{\MTCWITHCONF{0.102}{0.001}} & \SECONDBEST{\MTCWITHCONF{86.01}{0.16}} & \SECONDBEST{\MTCWITHCONF{0.395}{0.005}} & \SECONDBEST{\MTCWITHCONF{0.12}{0.001}} & \BEST{\MTCWITHCONF{93.24}{0.1}} & \SECONDBEST{\MTCWITHCONF{0.422}{0.005}} & \SECONDBEST{\MTCWITHCONF{0.126}{0.001}} \\
\cline{1-12} \cline{2-12}
\bottomrule
\end{tabular}
}
        \caption{WideRes28-10: 5-shot}
        \label{tab:main-s2m2-wrn-s2m2-5}
    \end{table*}

    \section{Potential Negative Societal Impacts}
        As is the case with most methodological research in machine learning, the methods developed in this work could be used in downstream applications with potential negative societal impacts.
Real world machine learning-based systems that interact with humans, or the environment in general, should therefore be properly tested and equipped with adequate safety measures.

Since our work relies on a large number of labeled examples from the base classes, un-discovered biases from the base dataset could be transferred to the trained models.
Furthermore, the small number of examples in the inference stage could make the query predictions biased towards the included support examples, and not accurately reflect the diversity of the novel classes.

    \bibliographystyle{unsrtnat}
    \bibliography{references}
\end{document}